%% file: main.tex
\documentclass{article}

\PassOptionsToPackage{numbers,compress}{natbib}
\usepackage[preprint]{neurips_2026}

\usepackage[utf8]{inputenc}
\usepackage[T1]{fontenc}
\usepackage{hyperref}
\usepackage{url}
\usepackage{booktabs}
\usepackage{amsfonts}
\usepackage{amsmath,amssymb,amsthm}
\usepackage{nicefrac}
\usepackage{microtype}
\usepackage{xcolor}
\usepackage{graphicx}
\usepackage{tikz}
\usetikzlibrary{arrows.meta,positioning,shapes.geometric}
\usepackage{enumitem}
\usepackage{caption}
\usepackage{makecell}
\usepackage{float}
\usepackage{caption}

\newtheorem{theorem}{Theorem}
\newtheorem{proposition}{Proposition}
\newtheorem{lemma}{Lemma}
\newtheorem{corollary}{Corollary}
\newtheorem{remark}{Remark}

\newcommand{\ms}[2]{$#1_{\pm #2}$}

\title{Ergodic Trajectory Design by Learned \\ Pushforward Maps:\\Provable Coverage via Conditional Flow Matching}

\author{%
  Ehsan Aghazadeh \quad Masoud Malekzadeh \quad Ahmad Ghasemi \quad Hossein Pishro-Nik \\
  University of Massachusetts Amherst \\
  \texttt{\{eaghazadeh,mmalekzadeh,aghasemi,pishro\}@umass.edu}
}

\begin{document}

\maketitle

\begin{abstract}
Designing continuous trajectories whose time-averaged occupancy provably matches a prescribed spatial density (the \emph{ergodic coverage} problem) is central to UAV-assisted data collection and sensing, robotic exploration, and mobile monitoring. For flying agents in particular, this challenge is acute: trajectories must balance coverage fidelity against tight energy budgets, no-fly zones, and acceleration limits. Existing methods either re-optimize each trajectory online (with cost growing in the horizon and re-running for every target, agent, and realization) or rely on bespoke analytical constructions that must be re-derived for each new constraint. We propose a \emph{pushforward} framework that decouples ergodicity from density matching: an analytic latent trajectory provides exact uniform ergodicity on a simple annular domain, and a single map, learned offline by optimal-transport conditional flow matching, transports this latent occupancy onto the prescribed target density. The composed trajectory is then asymptotically ergodic with respect to the learned pushforward distribution, with deviation from the target controlled by the flow-matching training loss. Once trained for a given target density and constraint set, the map serves an unbounded number of trajectories and a multi-agent fleet without per-agent retraining, and many differentiable operational constraints (no-fly zones, acceleration ceilings, or fairness penalties) enter as additive soft penalties in the training loss without re-deriving the design. We prove three results (an acceleration-energy bound, an $O(1/\sqrt{K})$ ergodic convergence rate in the number of trajectory cycles $K$, and an approximation-error bound) that combine into an end-to-end coverage bound estimable from CFM training diagnostics (certified given an architectural Lipschitz bound on $v_\theta$). Experiments on synthetic targets empirically support each theoretical envelope; on a real UAV-coverage dataset, the proposed method gives the strongest observed coverage--energy and coverage--constraint tradeoffs among the evaluated time-warping, optimization, and concurrent flow-matching baselines.
\end{abstract}

\section{Introduction}
\label{sec:intro}

A fundamental problem in spatial coverage is to design a continuous trajectory whose time-averaged occupancy provably matches a prescribed target density. This \emph{ergodic coverage} problem arises whenever a mobile agent (a UAV, a ground robot, or a sensor platform) must distribute its spatial presence according to a non-uniform importance map (see Appendix~\ref{app:applications} for concrete application domains including wireless communications, IoT data harvesting, and search-and-rescue). Formally, the goal is to construct $\mathbf{x}(t) \in \mathcal{A} \subset \mathbb{R}^2$ such that:
\begin{equation}\label{eq:ergodic_def}
    \frac{1}{T}\int_0^T \varphi(\mathbf{x}(t))\,dt \;\xrightarrow{T\to\infty}\; \int_{\mathcal{A}} \varphi(\mathbf{y})\, f_{\mathrm{target}}(\mathbf{y})\,d\mathbf{y}, \quad \forall\; \text{bounded continuous } \varphi.
\end{equation}
Achieving this is nontrivial because the agent must balance \emph{global redistribution} (reaching all regions with the correct frequency) against \emph{local continuity} (maintaining a smooth, physically realizable path), while satisfying operational constraints such as no-fly zones, speed limits, or energy budgets.

Existing methods face fundamental tradeoffs. Spectral Multiscale Coverage \citep{mathew2011metrics} is high quality but $O(K_{\mathrm{F}}^d)$ in the Fourier-basis cutoff $K_{\mathrm{F}}$ and requires per-scenario re-optimization. Grid-based and deep RL methods discretize and scale poorly, and their closed-loop variants typically require real-time environmental feedback impractical in privacy-preserving or communication-limited settings. Analytical time-warping \citep{malekzadeh2025nonuniform} is provably convergent and handles general 2D densities, but is bespoke per target/constraint, with no mechanism for additive penalties. \citet{sun2025flow} apply flow matching as per-trajectory control updates with direct dynamics accommodation, but each scenario requires fresh online optimization (non-amortized) and the proved guarantee is finite-horizon: no long-run convergence is established as $T$ grows. To our knowledge, no existing approach offers the combination of reusable amortized maps, modular constraint penalties, and \emph{asymptotic} ergodicity with an explicit $O(1/\sqrt{K})$ finite-cycle rate (Section~\ref{sec:related}).

\paragraph{Our approach: learn a pushforward map.} Instead of optimizing a trajectory directly, we \textbf{learn a pushforward map} $G_\theta: \mathbb{R}^d \to \mathbb{R}^d$ that transforms a simple, analytically constructed ergodic trajectory into one matching the prescribed target density (Figure~\ref{fig:pipeline}). The key insight is that \emph{ergodicity is preserved under any measurable pushforward}: whenever $G: D_\delta \to \mathbb{R}^2$ is Borel-measurable, the composed trajectory $\mathbf{x}(t)=G(\mathbf{z}(t))$ is exactly ergodic with respect to the pushforward $G_\#\pi_0^\delta$. For the learned map $G_\theta$, the deviation of $G_{\theta\#}\pi_0^\delta$ from $f_{\mathrm{target}}$ is quantified by our approximation bound (Theorem~\ref{thm:approx_bound}).

\paragraph{Contributions.}
\begin{enumerate}[nosep,leftmargin=*]
    \item We formalize a two-stage decomposition (latent ergodic trajectory + learned pushforward map) and prove that \emph{asymptotic} ergodicity transfers under the pushforward (Proposition~\ref{prop:ergodic}), in contrast to the finite-horizon spectral metric of prior methods (Section~\ref{sec:related}).
    \item We prove an \emph{acceleration energy bound} (Theorem~\ref{thm:acc_bound}): the energy cost is controlled by the map's Lipschitz constant $L$ with Hessian contributions vanishing as $O(\delta)$, which explains why OT-coupled training (empirically promoting low-Lipschitz, low-curvature maps) yields lower acceleration energy, with certified control of $L$ requiring spectral-normalized architectures.
    \item We prove an \emph{ergodic convergence rate} of $O(1/\sqrt{K})$ (Theorem~\ref{thm:convergence_rate}) via Cauchy--Schwarz, the Poincar\'{e} inequality on the annular domain~\citep{evans2010pde}, and i.i.d.\ cycle structure.
    \item We prove an \emph{approximation error bound} (Theorem~\ref{thm:approx_bound}, smooth-target regime) connecting the Wasserstein coverage error to the CFM training loss via Gr\"onwall ODE stability. An extension (Proposition~\ref{prop:approx_nonsmooth}) covers non-smooth or topology-mismatched targets at the cost of an additional $O(\eta_{\mathrm{reg}})+O(\delta)$ residual. Combined with the convergence rate, this yields an \emph{end-to-end coverage bound} (Corollary~\ref{cor:end_to_end}) evaluable from the CFM loss and a two-sided bracket on $L_v$.
    \item Three operational properties follow from the decomposition: \emph{modular constraint handling} via additive differentiable penalties in $\mathcal{L}_{\mathrm{total}}$, \emph{multi-agent reuse} of a single trained $G_\theta$, and \emph{open-loop, density-only operation} (no run-time observation of user or target locations), supporting deployment in bandwidth- or privacy-limited settings.
    \item We empirically support each theoretical envelope on three synthetic benchmarks and evaluate a real UAV-coverage dataset, where the method gives the strongest observed coverage--energy and coverage--constraint tradeoffs among the evaluated time-warping, optimization, and concurrent flow-matching baselines.
\end{enumerate}

\paragraph{Notation.} A complete notation guide covering all symbols, norms, and conventions used in the paper is collected in Appendix~\ref{app:notation}.

\section{Problem Setup}
\label{sec:setup}

\subsection{The Latent Ergodic Trajectory}
\label{sec:latent}

Fix $\delta > 0$ and let $D_\delta = \{\mathbf{z} \in \mathbb{R}^2 : \delta \leq \|\mathbf{z}\| \leq 1\}$ be the \emph{annular latent domain}. The excluded inner disc serves as a ``base station'' or an ``airport.'' The latent trajectory $\mathbf{z}(t)$ is a radial back-and-forth process: at the start of each cycle $k$, a fresh angle $\theta_k \sim \mathrm{Uniform}[0, 2\pi)$ is drawn independently. The agent moves outward along the ray at angle $\theta_k$ with radial position
\begin{equation}\label{eq:radial_profile}
    r(s) = \sqrt{\delta^2 + (1-\delta^2)\,s}, \quad s \in [0, 1],
\end{equation}
sweeping from $r(0) = \delta$ to $r(1) = 1$, then returns along the same ray. Writing $\mathbf{e}_\theta = (\cos\theta,\sin\theta)$, the outward leg of cycle $k$ is $\mathbf{z}_k(s) = r(s)\,\mathbf{e}_{\theta_k}$ for $s \in [0,1]$. The radial speed $\dot{r}(s) = (1-\delta^2)/(2r(s))$ is bounded by $(1-\delta^2)/(2\delta)$ everywhere, because the $\delta$-exclusion eliminates the origin singularity that would arise on a full disc. For stationary analysis we adopt the standard convention of an independent uniform phase offset $\Phi_0 \sim \mathrm{Uniform}[0,2)$ at which the observation window begins within the first cycle; this is needed only for pointwise distributional statements, not long-run averages. The inter-cycle reorientation from $\theta_{k-1}$ to $\theta_k$ takes place during a base-station rest inside $\{\|\mathbf z\|\leq\delta\}$ that is excluded from mission time (the rest duration is not part of the time-average; see Appendix~\ref{app:proof_thm1}, ``Airport turnaround'').

\emph{Regularity.} The trajectory is piecewise-$C^2$; Theorems~\ref{thm:acc_bound} and~\ref{thm:convergence_rate} integrate over open smooth interiors, and Appendix~\ref{app:boundary_smooth} gives a globally $C^1$ local-smoothing variant.

\begin{proposition}[Uniform ergodicity on $D_\delta$]\label{prop:uniform}
Under the phase convention above, the position $\mathbf{z}(t)$ is distributed as $\mathrm{Uniform}(D_\delta)$ with density $\pi_0^\delta = 1/(\pi(1-\delta^2))$ for every $t \geq 0$. Moreover, independently of this convention, the trajectory is ergodic with respect to $\pi_0^\delta$: by the i.i.d.\ cycle structure and the strong law of large numbers, $\frac{1}{T}\int_0^T \varphi(\mathbf{z}(t))\,dt \to \int_{D_\delta} \varphi\,d\pi_0^\delta$ a.s.\ for any integrable $\varphi$ (proof in Appendix~\ref{app:proofs_latent}).
\end{proposition}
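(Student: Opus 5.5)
The argument has two parts: a change-of-variables computation on a single cycle, which gives the pointwise uniform law under the random phase, and a renewal/SLLN argument for the time averages.

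\emph{Single-cycle occupancy.} Parametrize one cycle by $s\in[0,2)$: the outward leg $s\in[0,1]$ has radius $r(s)$, and the return leg $s\in[1,2)$ has radius $r(2-s)$. Mission time is proportional to $s$, and the base-station rest is excluded. Fix the angle $\theta$. For $\rho\in[\delta,1]$, solving $r(s)\le\rho$ gives $s\le(\rho^2-\delta^2)/(1-\delta^2)$. Hence, with $S\sim\mathrm{Uniform}[0,1]$,
\[
\Pr\bigl(r(S)\le\rho\bigr)=\frac{\rho^2-\delta^2}{1-\delta^2}.
\]
The return leg is the time reversal of the outward leg, so it has the same radial law. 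The radial density is therefore $2\rho/(1-\delta^2)$ on $[\delta,1]$. This is exactly the radial marginal of $\mathrm{Uniform}(D_\delta)$, whose area element is $\rho\,d\rho\,d\theta$. The profile \eqref{eq:radial_profile} was chosen precisely to produce this.

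\emph{Pointwise law.} Under the phase convention, $\Phi_0\sim\mathrm{Uniform}[0,2)$ is independent of the angles. Because cycles have fixed mission length $2$ and angles are i.i.d., the state at mission time $t$ is the state at phase $(\Phi_0+t)\bmod 2$ within some cycle $k(t)$. The phase is again $\mathrm{Uniform}[0,2)$, since a uniform variable on the circle $\mathbb{R}/2\mathbb{Z}$ is invariant under rotation. The cycle index $k(t)$ is a deterministic function of $\Phi_0$ and $t$, and $\theta_{k(t)}$ is independent of $\Phi_0$. So $\theta_{k(t)}$ is $\mathrm{Uniform}[0,2\pi)$ and independent of the phase, and this independence is the point to state carefully. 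Combining the phase with the radial computation above, the radius has density $2\rho/(1-\delta^2)$ and the angle is independent and uniform. By the polar change of variables, $\mathbf z(t)$ has density $1/(\pi(1-\delta^2))$ on $D_\delta$.

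\emph{Ergodicity.} Drop the phase convention and set
\[
Y_k=\int_{\text{cycle }k}\varphi(\mathbf z(t))\,dt=\int_0^2\varphi\bigl(r(\sigma(s))\mathbf e_{\theta_k}\bigr)\,ds,
\]
where $\sigma$ folds the return leg onto the outward leg. The $Y_k$ are i.i.d. because the $\theta_k$ are. By Fubini and the same change of variables,
\[
\mathbb E|Y_k|=2\int_{D_\delta}|\varphi|\,d\pi_0^\delta<\infty
\quad\text{and}\quad
\mathbb E\,Y_k=2\int_{D_\delta}\varphi\,d\pi_0^\delta.
\]
The SLLN gives $\frac{1}{2K}\sum_{k\le K}Y_k\to\int\varphi\,d\pi_0^\delta$ almost surely.

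The main obstacle is passing from $T=2K$ to general $T$, which requires controlling the partial last cycle. For bounded $\varphi$ this contribution is $O(1/T)$. For merely integrable $\varphi$, I would dominate it by $\frac{1}{T}\int_{\text{cycle }\lceil T/2\rceil}|\varphi|$ and apply the SLLN to the i.i.d.\ variables $Z_k=\int_{\text{cycle }k}|\varphi|$. Since $\frac1K\sum_{k\le K}Z_k$ converges, $Z_K/K\to0$ almost surely, and the boundary term therefore vanishes. This yields the almost-sure limit for every integrable $\varphi$.
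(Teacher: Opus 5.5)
Your proposal is correct and follows essentially the same route as the paper. Both use the CDF computation for $r(U)$ with $U$ uniform, the independence of radius and angle to obtain the polar density $r/(\pi(1-\delta^2))$, and the SLLN on i.i.d.\ per-cycle integrals; your full-cycle $Y_k$ is exactly twice the paper's outward-leg integral, since the return leg retraces the same ray. Your two additions fill in steps the paper leaves implicit: the conditioning argument that $\theta_{k(t)}$ is independent of the phase, and the $Z_K/K\to 0$ control of the partial last cycle for general $T$. The only slip is that $\mathbb{E}|Y_k|\le 2\int_{D_\delta}|\varphi|\,d\pi_0^\delta$ holds as an inequality rather than an equality, which is all you need.
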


The i.i.d.\ cycle structure is a deliberate design choice: it converts the ergodic convergence question into a standard problem about averages of independent random variables, avoiding mixing-time analysis entirely (Section~\ref{sec:convergence}). For typical values $\delta = 0.01$--$0.1$, the excluded area is $<3\%$ of the disc.

We present the 2D case throughout; extension to $\mathbb{R}^d$, with the resulting $\delta$ trade-off against the topology residual, is in Appendix~\ref{app:rd_extension} (Remark~\ref{rem:higher_dim}).

\subsection{Learning the Pushforward Map via Conditional Flow Matching}
\label{sec:flow}

We seek a map $G_\theta: D_\delta \to \mathbb{R}^2$ satisfying the pushforward condition $G_{\theta\#}\pi_0^\delta = f_{\mathrm{target}}$. The following result is the theoretical foundation:

\begin{proposition}[Ergodicity transfer]\label{prop:ergodic}
Let $\mathbf{z}(t)$ be ergodic with respect to $\pi_0^\delta$, and let $G: D_\delta \to \mathbb{R}^2$ be a Borel-measurable map with $G_\#\pi_0^\delta = f_{\mathrm{target}}$. Then $\mathbf{x}(t) = G(\mathbf{z}(t))$ is ergodic with respect to $f_{\mathrm{target}}$: for every $\varphi \in L^1(f_{\mathrm{target}})$, $\tfrac{1}{T}\int_0^T \varphi(\mathbf{x}(t))\,dt \to \int \varphi\,df_{\mathrm{target}}$ a.s.\ as $T\to\infty$.
\end{proposition}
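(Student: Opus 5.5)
The plan is to reduce the statement to Proposition~\ref{prop:uniform} by composition. Given $\varphi \in L^1(f_{\mathrm{target}})$, set $\psi := \varphi \circ G : D_\delta \to \mathbb{R}$.

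\textbf{Step 1: $\psi$ is a legitimate test function.} Since $G$ is Borel-measurable and $\varphi$ is Borel, $\psi$ is Borel. I will use the change-of-variables formula for pushforward measures, $\int_{D_\delta} |\varphi\circ G|\,d\pi_0^\delta = \int |\varphi|\,d(G_\#\pi_0^\delta)$. Because $G_\#\pi_0^\delta = f_{\mathrm{target}}$, the right-hand side equals $\int|\varphi|\,df_{\mathrm{target}} < \infty$. Hence $\psi \in L^1(\pi_0^\delta)$. Dropping the absolute values, the same identity gives $\int \psi\,d\pi_0^\delta = \int \varphi\,df_{\mathrm{target}}$.

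\textbf{Step 2: apply the latent ergodic theorem.} The trajectory $\mathbf{x}(t) = G(\mathbf{z}(t))$ satisfies the pathwise identity $\varphi(\mathbf{x}(t)) = \psi(\mathbf{z}(t))$. Therefore
\[
\tfrac1T\int_0^T \varphi(\mathbf{x}(t))\,dt = \tfrac1T\int_0^T \psi(\mathbf{z}(t))\,dt,
\]
and Proposition~\ref{prop:uniform}, which holds for any integrable test function, gives a.s.\ convergence to $\int\psi\,d\pi_0^\delta = \int\varphi\,df_{\mathrm{target}}$.

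\textbf{Main obstacle: the null set depends on $\varphi$.} Proposition~\ref{prop:uniform} controls a fixed function, with an exceptional set that depends on that function. The statement only asks for each $\varphi$ separately, so this is harmless as stated. If a uniform (weak-convergence) version is wanted, I would take a countable convergence-determining family, e.g.\ bounded Lipschitz functions with rational coefficients, and intersect the corresponding full-measure events.

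\textbf{Secondary check: integrability inside the cycle argument.} Proposition~\ref{prop:uniform} applies the SLLN to cycle integrals $Y_k = \int_{\text{cycle }k}\psi(\mathbf{z}(t))\,dt$. These must be integrable, i.e.\ $\mathbb{E}|Y_k|<\infty$. The cycle structure makes the occupation measure of one cycle, averaged over $\theta_k$, equal to a constant multiple of $\pi_0^\delta$; by Tonelli this gives $\mathbb{E}|Y_k| = 2\int|\psi|\,d\pi_0^\delta<\infty$.

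\textbf{Measurability of the time integral.} The map $t\mapsto\psi(\mathbf{z}(t))$ is measurable because $\psi$ is Borel and $\mathbf{z}$ is piecewise continuous, so $\int_0^T \psi(\mathbf{z}(t))\,dt$ is well defined.

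With these checks, the proof is a direct corollary of Proposition~\ref{prop:uniform} and the pushforward change of variables.
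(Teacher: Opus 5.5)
Your proof is correct and follows the paper's argument: set $\psi=\varphi\circ G$, use the pushforward identity $\int \varphi\circ G\,d\pi_0^\delta=\int\varphi\,df_{\mathrm{target}}$ (with absolute values, for integrability), and apply latent ergodicity to $\psi$. The one difference is that you cite Proposition~\ref{prop:uniform} for the specific radial trajectory. The paper instead applies the ergodicity hypothesis directly, read as a.s.\ convergence for every integrable test function; this reading matters because $\psi$ need not be continuous. That makes the argument cover any such $\mathbf{z}(t)$, and your extra integrability and null-set checks, while correct, really belong to Proposition~\ref{prop:uniform}.
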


The proof (Appendix~\ref{app:proofs_latent}) composes $\varphi \circ G$ with the ergodicity of $\mathbf{z}(t)$ and applies pushforward change-of-variables. Only measurability is required, so a topology mismatch between $D_\delta$ and a simply connected target is admissible at this abstract level; Remark~\ref{rem:invertibility} treats the learned diffeomorphism $G_\theta$ and the resulting $O(\delta)$ residual.

We learn $G_\theta$ using \textbf{Conditional Flow Matching with Optimal Transport coupling} (OT-CFM) \citep{lipman2023flow,tong2024improving,liu2023flow}, instantiating a dynamic-OT formulation with roots in the Benamou--Brenier fluid-mechanics view \citep{brenier1991polar,benamou2000computational,villani2009optimal,peyre2019computational}. A time-dependent velocity field $v_\theta(s, \mathbf{y})$, $s \in [0,1]$, is trained so that the ODE $\dot{\mathbf{y}}_s = v_\theta(s, \mathbf{y}_s)$ with $\mathbf{y}_0 = \mathbf{z}_0 \sim \pi_0^\delta$ produces $\mathbf{y}_1 \sim f_{\mathrm{target}}$; then $G_\theta(\mathbf{z}_0) = \mathbf{y}_1$. Training minimizes the CFM loss:
\begin{equation}\label{eq:cfm_loss}
    \mathcal{L}_{\mathrm{CFM}}(\theta) = \mathbb{E}_{s, (\mathbf{z}_0, \mathbf{x}_1)}\bigl\|v_\theta(s, \mathbf{y}_s) - (\mathbf{x}_1 - \mathbf{z}_0)\bigr\|^2,
\end{equation}
where $\mathbf{y}_s = (1-s)\mathbf{z}_0 + s\mathbf{x}_1$ and $(\mathbf{z}_0, \mathbf{x}_1)$ are OT-coupled pairs obtained via the Sinkhorn algorithm \citep{cuturi2013sinkhorn} with entropic regularization $\varepsilon_{\mathrm{sink}}$ within each mini-batch. The OT coupling produces nearly straight transport paths with low empirical curvature, which (while not a formal guarantee of small Lipschitz constant) is strongly associated with energy-efficient maps in practice (Section~\ref{sec:theory}).

\begin{remark}[Invertibility and the $\delta$ trade-off]\label{rem:invertibility}
Since $v_\theta$ is a smooth-activation MLP, Picard--Lindel\" of makes $G_\theta$ a diffeomorphism of $\mathbb{R}^2$; when the target support is simply connected, the inner boundary $\{\|\mathbf z\|=\delta\}$ cannot be exactly collapsed, contributing the topology residual $\eta_{\mathrm{top}}(\delta)=O(\delta)$ in Corollary~\ref{cor:end_to_end}. Shrinking $\delta$ reduces this residual but inflates latent energy ($E_{\mathrm{acc}}(\mathbf z)=\Theta(K/\delta^4)$ in $d{=}2$ with per-cycle scaling $\Theta(1/\delta^4)$, consistent with Remark~\ref{rem:higher_dim}'s $\Theta(K/\delta^{3d-2})$); $\delta\in[0.01,0.1]$ balances the two. Appendix~\ref{app:invertibility} covers Lipschitz certification (spectral normalization), Gr\"onwall flow bounds, higher-dimensional scaling, and topology-mismatch geometry.
\end{remark}

\begin{figure}[t]
\centering
\begin{tikzpicture}[
    box/.style={draw, rounded corners, minimum height=1cm, minimum width=2.2cm, align=center, fill=blue!8, font=\small},
    arrow/.style={-{Stealth[length=2.5mm]}, thick},
    note/.style={font=\scriptsize\itshape, text width=2.6cm, align=center}
]
\node[box] (latent) {Latent Trajectory\\$\mathbf{z}(t)$ on $D_\delta$};
\node[box, right=1.8cm of latent] (map) {Learned Map\\$G_\theta$ (OT-CFM)};
\node[box, right=2.8cm of map] (target) {Target Trajectory\\$\mathbf{x}(t) = G_\theta(\mathbf{z}(t))$};

\draw[arrow] (latent) -- node[above, font=\scriptsize]{pushforward} (map);
\draw[arrow] (map) -- node[above, font=\scriptsize]{$G_{\theta\#}\pi_0^\delta \approx f_{\mathrm{target}}$} (target);

\node[note, below=0.15cm of latent] {Structurally ergodic\\(Prop.~\ref{prop:uniform})\\Uniform on annulus};
\node[note, below=0.15cm of map] {Trained once\\Shared by all agents\\+ constraint penalties};
\node[note, below=0.15cm of target] {Ergodic w.r.t.\ $G_{\theta\#}\pi_0^\delta$\\gap to $f_{\mathrm{target}}$: Theorem~\ref{thm:approx_bound} $O(1)$ inference};

\end{tikzpicture}
\caption{Two-stage pipeline. The latent trajectory provides structural ergodicity (i.i.d.\ cycles, uniform density); the learned map transforms it to match the prescribed target density. Theoretical guarantees (Theorems~\ref{thm:acc_bound}--\ref{thm:approx_bound}) bound energy, convergence, and approximation error.}
\label{fig:pipeline}
\end{figure}
\subsection{Constraint Flexibility and Multi-Agent Reuse}
\label{sec:constraints}

A practical advantage is that many differentiable operational constraints can be incorporated as soft penalties in the training loss:
\begin{equation}\label{eq:total_loss}
    \mathcal{L}_{\mathrm{total}}(\theta) = \mathcal{L}_{\mathrm{CFM}}(\theta) + \lambda_{\mathrm{nfz}}\,\mathcal{R}_{\mathrm{nfz}}(\theta) + \lambda_{\mathrm{acc}}\,\mathcal{R}_{\mathrm{acc}}(\theta) + \cdots
\end{equation}
For instance, a no-fly zone (NFZ) centered at $\mathbf{c}$ with radius $r_{\mathrm{nfz}}$ is penalized via $\mathcal{R}_{\mathrm{nfz}} = \mathbb{E}_{\mathbf{z}}[\max(0, r_{\mathrm{nfz}} - \|G_\theta(\mathbf{z}) - \mathbf{c}\|)^2]$, and an acceleration penalty via $\mathcal{R}_{\mathrm{acc}} = \mathbb{E}_{\mathbf{z}}[\|H_{G_\theta}(\mathbf{z})\|_F^2]$, where $H_{G_\theta}$ is the Hessian tensor. Prior analytical methods instead require a separate mathematical rederivation for each new constraint. Exclusion constraints such as NFZs could alternatively be encoded by setting $f_{\mathrm{target}} = 0$ in the excluded region, so that an exact pushforward would then never enter it. However, the learned map $G_\theta$ is an approximation (Theorem~\ref{thm:approx_bound}), and the explicit penalty provides direct gradient signal at the constraint boundary to suppress residual violations that the density-based encoding alone cannot prevent.

\paragraph{Multi-agent reuse.}\label{sec:multi}
A single trained $G_\theta$ supports independent multi-agent deployment: $N$ agents running independent latent trajectories and sharing $G_\theta$ each remain individually ergodic with respect to the same achieved pushforward (gap to $f_{\mathrm{target}}$ controlled by Theorem~\ref{thm:approx_bound}), and the collective time-average converges at $O(1/\sqrt{NK})$ with no per-agent optimization. Coordinated extensions via joint or mean-field pushforward maps (with coupling objectives entering as additional penalty terms in~\eqref{eq:total_loss}, analogous to the NFZ and acceleration penalties above) are developed in Appendix~\ref{app:coordinated}.

\section{Theoretical Guarantees}
\label{sec:theory}

\subsection{Acceleration Energy Bound}
\label{sec:acc_bound}

We first quantify the energy cost of applying the learned map. For a $C^2$ map $G:\Omega\to\mathbb{R}^2$ on an open set $\Omega\subset\mathbb{R}^2$ and a compact $\mathcal{K}\subset\Omega$, define $L = \sup_{\mathbf{z}\in \mathcal{K}}\|J_G(\mathbf{z})\|_{\mathrm{op}}$ (Lipschitz constant on $\mathcal{K}$) and $M_H = \sup_{\mathbf{z}\in \mathcal{K}}\sqrt{\|\nabla^2 G_1(\mathbf{z})\|_{\mathrm{op}}^2 + \|\nabla^2 G_2(\mathbf{z})\|_{\mathrm{op}}^2}$ (Hessian tensor norm on $\mathcal{K}$); both are finite by continuity since $G\in C^2(\Omega)$ and $\mathcal{K}$ is compact. For the radial latent construction we take $\mathcal{K}=D_\delta$.

\begin{theorem}[Acceleration Energy Bound]\label{thm:acc_bound}
Let $\Omega \subset \mathbb{R}^2$ be open, let $G: \Omega \to \mathbb{R}^2$ be $C^2$, let $\mathcal{K}\subset\Omega$ be compact, and let $\mathbf{z}(t) \in \mathcal{K}$ be piecewise-$C^2$ on a finite partition $0 = t_0 < t_1 < \cdots < t_N = T$. Define the transformed trajectory $\mathbf{x}(t) = G(\mathbf{z}(t))$, with integrals $\int\|\ddot{\mathbf z}\|^2\,dt$ and $\int\|\dot{\mathbf z}\|^4\,dt$ taken over the union of the open smooth intervals $(t_{i-1},t_i)$. Then
\begin{equation}\label{eq:acc_bound}
    \sqrt{E_{\mathrm{acc}}(\mathbf{x})} \;\leq\; L\,\sqrt{E_{\mathrm{acc}}(\mathbf{z})} \;+\; M_H\,\sqrt{\Phi_4(\mathbf{z})},
\end{equation}
where $E_{\mathrm{acc}}(\cdot) = \int\|\ddot{\cdot}\|^2\,dt$ is the acceleration energy and $\Phi_4(\mathbf{z}) = \int\|\dot{\mathbf{z}}\|^4\,dt$.
\end{theorem}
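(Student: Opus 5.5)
The plan is to differentiate the composition twice on each open smooth interval and bound the two resulting terms separately. Everything reduces to a pointwise inequality followed by Minkowski's inequality in $L^2$.

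\textbf{Chain rule.} Fix an open interval $(t_{i-1},t_i)$ on which $\mathbf z$ is $C^2$. Since $G\in C^2(\Omega)$ and $\mathbf z(t)\in\mathcal K\subset\Omega$, the chain rule gives
\begin{equation*}
\ddot{\mathbf x}(t) = J_G(\mathbf z(t))\,\ddot{\mathbf z}(t) + \mathbf h(t),
\qquad \mathbf h(t) = \bigl(\dot{\mathbf z}^\top \nabla^2 G_1(\mathbf z)\,\dot{\mathbf z},\ \dot{\mathbf z}^\top \nabla^2 G_2(\mathbf z)\,\dot{\mathbf z}\bigr).
\end{equation*}
This uses $\dot{\mathbf x} = J_G(\mathbf z)\dot{\mathbf z}$ and the fact that $\frac{d}{dt}J_G(\mathbf z)\dot{\mathbf z}$ has $j$-th component $\dot{\mathbf z}^\top \nabla^2 G_j\,\dot{\mathbf z}$. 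The identity holds because $G$ has symmetric, continuous Hessians.

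\textbf{Pointwise bounds.} Because $\mathbf z(t)\in\mathcal K$, the first term satisfies $\|J_G\ddot{\mathbf z}\|\le L\|\ddot{\mathbf z}\|$. For the second term, $|\dot{\mathbf z}^\top\nabla^2G_j\dot{\mathbf z}|\le \|\nabla^2 G_j\|_{\mathrm{op}}\|\dot{\mathbf z}\|^2$. Summing the squares over $j=1,2$ gives $\|\mathbf h(t)\|\le M_H\|\dot{\mathbf z}(t)\|^2$. This matches exactly the definition of $M_H$ as the root sum of squared operator norms, which is why that definition was chosen.

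\textbf{Minkowski.} Let $U=\bigcup_i(t_{i-1},t_i)$; it has full measure in $[0,T]$, and $E_{\mathrm{acc}}(\mathbf x)$ is interpreted as an integral over $U$. The triangle inequality in $L^2(U)$ gives
\begin{equation*}
\|\ddot{\mathbf x}\|_{L^2(U)} \le \|J_G(\mathbf z)\ddot{\mathbf z}\|_{L^2(U)} + \|\mathbf h\|_{L^2(U)}.
\end{equation*}
Applying the pointwise bounds, the right-hand side is at most $L\sqrt{E_{\mathrm{acc}}(\mathbf z)} + M_H\bigl(\int_U\|\dot{\mathbf z}\|^4\bigr)^{1/2}$, which is \eqref{eq:acc_bound}.

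\textbf{Main obstacle.} The only delicate point is bookkeeping at the breakpoints $t_i$. There $\ddot{\mathbf z}$, and hence $\ddot{\mathbf x}$, may not exist; at the radial turnarounds $\dot{\mathbf z}$ can even jump. Because of this, the statement defines all integrals over the open pieces, so no distributional (delta) contributions enter. I would say explicitly that $\mathbf x$ is $C^2$ on each open piece, since $G\in C^2$ and $\mathbf z$ is $C^2$ there, and that the finitely many breakpoints have measure zero. The bound is therefore a sum over pieces, and it is finite whenever the right-hand side is. If one wanted breakpoint contributions included, one would use the $C^1$ smoothing variant of Appendix~\ref{app:boundary_smooth}, but that is outside the theorem as stated.
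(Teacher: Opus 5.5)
Your proposal is correct and follows the paper's proof essentially step for step: the chain-rule split $\ddot{\mathbf x}=J_G(\mathbf z)\ddot{\mathbf z}+\mathbf h$, the pointwise bounds $L\|\ddot{\mathbf z}\|$ and $M_H\|\dot{\mathbf z}\|^2$ on $\mathcal K$, then Minkowski's inequality in $L^2$ over the open smooth pieces. One minor point: neither the chain-rule identity nor the bound $|\dot{\mathbf z}^\top A\,\dot{\mathbf z}|\le\|A\|_{\mathrm{op}}\|\dot{\mathbf z}\|^2$ needs the Hessian to be symmetric, so that justification is superfluous but harmless.
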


The proof (Appendix~\ref{app:proof_thm1}) decomposes the acceleration $\ddot{\mathbf{x}} = J_G\ddot{\mathbf{z}} + \dot{\mathbf{z}}^\top H_G\dot{\mathbf{z}}$ and applies the Minkowski inequality in $L^2$.

For the radial latent trajectory on $D_\delta$, explicit computation (Lemma~\ref{lem:moments} in Appendix~\ref{app:proof_thm1}) shows $\Phi_4/E_{\mathrm{acc}} = 2\delta^2/(1+\delta^2) \to 0$ as $\delta \to 0$. This yields:

\begin{corollary}[Acceleration ratio]\label{cor:ratio}
For the radial latent trajectory on $D_\delta$:
\begin{equation}\label{eq:ratio}
    \frac{E_{\mathrm{acc}}(\mathbf{x})}{E_{\mathrm{acc}}(\mathbf{z})} \;\leq\; \bigl(L + M_H\sqrt{2\delta^2/(1+\delta^2)}\,\bigr)^2 \;\approx\; L^2 + 2\sqrt{2}\,L\,M_H\,\delta + O(\delta^2).
\end{equation}
\end{corollary}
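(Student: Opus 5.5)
The corollary follows by dividing the bound of Theorem~\ref{thm:acc_bound} by $\sqrt{E_{\mathrm{acc}}(\mathbf z)}$ and inserting the moment ratio from Lemma~\ref{lem:moments}. We apply Theorem~\ref{thm:acc_bound} with $\mathcal K=D_\delta$ and with $\Omega$ any open neighbourhood of $D_\delta$ on which $G$ is $C^2$. For the learned map $G_\theta$, which by Remark~\ref{rem:invertibility} is a diffeomorphism of $\mathbb R^2$, we may take $\Omega=\mathbb R^2$. The radial latent trajectory is piecewise-$C^2$: each leg is $C^\infty$ on its open interior, and the cycle endpoints form a finite partition over any finite horizon. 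It also stays in $D_\delta$, since the base-station rest is excluded from mission time. The theorem therefore gives
\[
\sqrt{E_{\mathrm{acc}}(\mathbf x)}\le L\sqrt{E_{\mathrm{acc}}(\mathbf z)}+M_H\sqrt{\Phi_4(\mathbf z)}.
\]
Since $E_{\mathrm{acc}}(\mathbf z)>0$ (indeed $\ddot r\neq 0$ on each leg), we divide by $\sqrt{E_{\mathrm{acc}}(\mathbf z)}$ and square both sides; both sides are nonnegative, so squaring preserves the inequality. This yields
\[
\frac{E_{\mathrm{acc}}(\mathbf x)}{E_{\mathrm{acc}}(\mathbf z)}\le\Bigl(L+M_H\sqrt{\Phi_4(\mathbf z)/E_{\mathrm{acc}}(\mathbf z)}\Bigr)^2 .
\]

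The only real input is the ratio $\Phi_4/E_{\mathrm{acc}}=2\delta^2/(1+\delta^2)$, which is Lemma~\ref{lem:moments}. To make the plan self-contained, we would verify it per leg, since every outward and return leg contributes identical integrals and the ratio does not depend on the number of cycles $K$.

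Parametrize a leg by $s\in[0,1]$ with unit-speed time, and write $c=1-\delta^2$. Then $\dot r=c/(2r)$ and $\ddot r=-c^2/(4r^3)$, and there is no angular motion within a leg. Substituting $u=r^2=\delta^2+cs$, so that $ds=du/c$, we get
\[
\int_0^1\ddot r^2\,ds=\frac{c^4}{16}\int_0^1 u^{-3}\,ds=\frac{c^3}{32}\bigl(\delta^{-4}-1\bigr),
\]
\[
\int_0^1\dot r^4\,ds=\frac{c^4}{16}\int_0^1 u^{-2}\,ds=\frac{c^3}{16}\bigl(\delta^{-2}-1\bigr).
\]
Using $\delta^{-4}-1=(1-\delta^4)/\delta^4$ and $\delta^{-2}-1=(1-\delta^2)/\delta^2$, the ratio is
\[
\frac{2\,(1-\delta^2)/\delta^2}{(1-\delta^4)/\delta^4}=\frac{2\delta^2}{1+\delta^2},
\]
which matches the lemma. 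If the legs were traversed on a time scale other than $s$, both integrals would rescale by the same power only if the scaling were consistent. For a leg of duration $\tau$, $\ddot{\mathbf z}$ scales as $\tau^{-2}$ and $\dot{\mathbf z}^4$ scales as $\tau^{-4}$, and $dt$ contributes a factor $\tau$ in both integrals. So the ratio is invariant, and we would note this explicitly.

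Finally, the asymptotic form follows by expanding the square:
\[
\Bigl(L+M_H\sqrt{2\delta^2/(1+\delta^2)}\Bigr)^2=L^2+2\sqrt2\,LM_H\,\delta\,(1+\delta^2)^{-1/2}+\frac{2M_H^2\delta^2}{1+\delta^2}.
\]
Since $(1+\delta^2)^{-1/2}=1+O(\delta^2)$, the middle term equals $2\sqrt2\,LM_H\delta+O(\delta^3)$, and the last term is $O(\delta^2)$, which gives the stated approximation. The constants in the $O(\cdot)$ terms depend on $L$ and $M_H$; the paper's statement treats them as fixed. The main point needing care is not the algebra but the legitimacy of applying Theorem~\ref{thm:acc_bound}. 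Specifically, the integrals must be taken over the open smooth intervals, so that the velocity jumps at turnarounds contribute no delta masses. Also, $L$ and $M_H$ are suprema over $D_\delta$ itself, which is compact, so both are finite.
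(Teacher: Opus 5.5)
Your proposal is correct and takes the paper's route. You apply Theorem~\ref{thm:acc_bound} with $\mathcal K=D_\delta$, divide by $\sqrt{E_{\mathrm{acc}}(\mathbf z)}$, square, and insert the ratio $\Phi_4/E_{\mathrm{acc}}=2\delta^2/(1+\delta^2)$ from Lemma~\ref{lem:moments}, reproducing the lemma's per-leg computation via $u=r^2$ exactly. Your extra remarks are accurate refinements the paper leaves implicit: the ratio is invariant under uniform time rescaling, the middle term of the expansion carries an $O(\delta^3)$ correction, and the $O(\cdot)$ constants depend on $L$ and $M_H$.
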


\textbf{Interpretation.} The ratio is asymptotically dominated by $L^2$: the Hessian contribution vanishes as $O(\delta)$. OT-coupled training promotes (without certifying) small $L$ via near-rectilinear, low-curvature paths; certified control requires spectral-norm-constrained architectures. Throughout the paper $E_{\mathrm{acc}}$ denotes interior smooth-leg energy; turnaround/reorientation impulses at the outer boundary are budgeted separately as discrete maneuver costs (Appendix~\ref{app:boundary_smooth}).

\subsection{Ergodic Convergence Rate}
\label{sec:convergence}

\begin{theorem}[Ergodic Convergence Rate]\label{thm:convergence_rate}
Let $G: D_\delta \to \mathbb{R}^2$ be Lipschitz with constant $L$, and let $\varphi$ be a \emph{bounded Lipschitz} test function with $\|\varphi\|_\infty \leq B_\varphi$ and Lipschitz constant $L_\varphi$. Let $\mathbf{x}_k(s) = G(\mathbf{z}_k(s))$ be the transformed outward leg of cycle $k$, and define the per-cycle integral $X_k = \int_0^1 \varphi(\mathbf{x}_k(s))\,ds$. The $X_k$ are i.i.d.\ with common mean $\mu_G = \mathbb{E}[X_k] = \int \varphi\,d(G_\#\pi_0^\delta)$. Let $S_K = \tfrac{1}{K}\sum_{k=1}^K X_k$ denote the time average over $K$ cycles. Then:
\begin{enumerate}[label=(\alph*),nosep]
    \item $\mathrm{Var}(X_k) \leq C_{D_\delta}\,L^2\,L_\varphi^2$, where $C_{D_\delta}$ is the Neumann Poincar\'{e} constant of $D_\delta$~\citep{evans2010pde}. For small $\delta$, $C_{D_\delta} = C_D + O(\delta^2)$ with $C_D = 1/(j'_{1,1})^2 \approx 0.295$ the Poincar\'{e} constant of the unit disc ($j'_{1,1}$ being the first positive zero of the derivative of the Bessel function $J_1$ of the first kind)~\citep[Theorem~7.1.1]{henrot2006extremum}.
    \item $\Pr(|S_K - \mu_G| > \varepsilon) \leq 2\exp(-K\varepsilon^2/(2B_\varphi^2))$\quad (Hoeffding).
    \item $|S_K - \int\varphi\,f_{\mathrm{target}}| \leq \underbrace{|S_K - \mu_G|}_{\text{statistical } O(1/\sqrt{K})} + \underbrace{L_\varphi \cdot W_1(G_\#\pi_0^\delta, f_{\mathrm{target}})}_{\text{approximation}}$.
\end{enumerate}
The bounded-Lipschitz test-function class is consistent with the bounded-continuous class used in~\eqref{eq:ergodic_def}; the density argument recovering~\eqref{eq:ergodic_def} as $K\to\infty$ is given in Appendix~\ref{app:proof_thm2}.
\end{theorem}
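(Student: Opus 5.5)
The plan is to reduce everything to the single random angle $\theta_k$ that drives each cycle. First I establish the i.i.d.\ structure and identify the mean. Write $\psi=\varphi\circ G$; it is bounded by $B_\varphi$ and Lipschitz with constant $L L_\varphi$. On cycle $k$ we have $X_k=h(\theta_k)$, where
\[
h(\theta)=\int_0^1\psi\bigl(r(s)\mathbf e_\theta\bigr)\,ds .
\]
Since the $\theta_k$ are i.i.d.\ $\mathrm{Uniform}[0,2\pi)$ and $h$ is a fixed measurable function, the $X_k$ are i.i.d. To compute the mean I change variables $u=r(s)$ via~\eqref{eq:radial_profile}. Then $s=(u^2-\delta^2)/(1-\delta^2)$ and $ds=2u\,du/(1-\delta^2)$, so
\[
h(\theta)=\int_\delta^1\psi(u\mathbf e_\theta)\,\frac{2u\,du}{1-\delta^2}.
\]
Averaging over $\theta$ gives $\frac{1}{\pi(1-\delta^2)}\int_0^{2\pi}\!\int_\delta^1\psi(u\mathbf e_\theta)\,u\,du\,d\theta=\int_{D_\delta}\psi\,d\pi_0^\delta$. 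This equals $\int\varphi\,d(G_\#\pi_0^\delta)$ by the change-of-variables formula for pushforwards, as in Proposition~\ref{prop:ergodic}. The same computation shows that if $Z=U\mathbf e_\Theta$ with $\Theta$ uniform and $U$ having density $2u/(1-\delta^2)$ on $[\delta,1]$, then $Z\sim\mathrm{Uniform}(D_\delta)$ and $h(\Theta)=\mathbb E[\psi(Z)\mid\Theta]$.

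For (a), the law of total variance gives
\[
\mathrm{Var}(X_k)=\mathrm{Var}\bigl(\mathbb E[\psi(Z)\mid\Theta]\bigr)\le\mathrm{Var}_{\pi_0^\delta}(\psi).
\]
Next I apply the Neumann Poincar\'e inequality on the Lipschitz (indeed smooth) domain $D_\delta$, normalized to the probability measure: $\mathrm{Var}_{\pi_0^\delta}(\psi)\le C_{D_\delta}\int_{D_\delta}\|\nabla\psi\|^2\,d\pi_0^\delta$, where $C_{D_\delta}=1/\mu_2(D_\delta)$ is the reciprocal of the first nonzero Neumann eigenvalue. By Rademacher's theorem $\psi\in W^{1,\infty}$ with $\|\nabla\psi\|\le LL_\varphi$ a.e., which yields $\mathrm{Var}(X_k)\le C_{D_\delta}L^2L_\varphi^2$.

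The asymptotic $C_{D_\delta}=C_D+O(\delta^2)$ is the step I expect to be the main obstacle; it is the only genuinely spectral input. I would first try separation of variables. On the annulus the first nonzero Neumann eigenfunction is of angular mode $1$, of the form $(J_1(\sqrt\mu r)+cY_1(\sqrt\mu r))\cos\theta$. The inner Neumann condition at $r=\delta$ forces $c=O(\delta^2)$, using the small-argument asymptotics $J_1'\sim\tfrac12$ and $Y_1'\sim 2/(\pi x^2)$. The implicit function theorem applied to the outer condition at $r=1$ then moves the root $\sqrt\mu=j'_{1,1}$ by $O(\delta^2)$. I also need to check that the radial (mode-$0$) branch stays above this eigenvalue. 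Alternatively, I would cite the small-hole perturbation results for Neumann eigenvalues, together with \citet[Theorem~7.1.1]{henrot2006extremum} for the disc value $C_D=1/(j'_{1,1})^2$.

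For (b), $X_k\in[-B_\varphi,B_\varphi]$, a range of width $2B_\varphi$, so Hoeffding's inequality gives $\Pr(|S_K-\mu_G|>\varepsilon)\le2\exp\bigl(-2K\varepsilon^2/(2B_\varphi)^2\bigr)=2\exp\bigl(-K\varepsilon^2/(2B_\varphi^2)\bigr)$. Part (a) additionally gives the Chebyshev-type rate $\mathrm{Var}(S_K)\le C_{D_\delta}L^2L_\varphi^2/K$, i.e.\ $O(1/\sqrt K)$ fluctuations. For (c), I use the triangle inequality through $\mu_G$ and then Kantorovich--Rubinstein duality: $|\mu_G-\int\varphi f_{\mathrm{target}}|\le L_\varphi W_1(G_\#\pi_0^\delta,f_{\mathrm{target}})$. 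For the closing remark, I would pass from bounded Lipschitz to bounded continuous test functions by the standard portmanteau fact that bounded Lipschitz functions determine weak convergence. This is deferred to Appendix~\ref{app:proof_thm2} as stated.
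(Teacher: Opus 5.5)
Your proposal is correct and follows the paper's proof essentially step for step. Your law-of-total-variance bound $\mathrm{Var}(\mathbb{E}[\psi(Z)\mid\Theta])\le\mathrm{Var}_{\pi_0^\delta}(\psi)$ is the same inequality as the paper's pointwise Cauchy--Schwarz $(\int_0^1\psi\,ds)^2\le\int_0^1\psi^2\,ds$ followed by the polar change of variables, and your Poincar\'e/Rademacher gradient bound, Hoeffding step and Kantorovich--Rubinstein step coincide with the paper's. The one addition is your separation-of-variables sketch of $C_{D_\delta}=C_D+O(\delta^2)$, which the paper only asserts from the Bessel cross-product equation; if you write it out, also note that angular modes $m\ge 2$ stay above mode $1$ (the $m^2/r^2$ term is increasing in $m$), not just the radial branch.
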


\subsection{Approximation Error Bound}
\label{sec:approx}

The approximation term $W_1(G_{\theta\#}\pi_0^\delta, f_{\mathrm{target}})$ in Theorem~\ref{thm:convergence_rate}(c) is bounded via the velocity field quality (smooth-target regime; non-smooth and topology-mismatched cases via Proposition~\ref{prop:approx_nonsmooth}):

\begin{theorem}[Approximation Error Bound, smooth-target regime]\label{thm:approx_bound}
Suppose: \emph{(A1)} $\pi_0^\delta$ and $f_{\mathrm{target}}$ have finite second moments, and an OT velocity field $v^*$ with time-1 flow $G^*$ satisfying $G^*_\#\pi_0^\delta = f_{\mathrm{target}}$ exists; \emph{(A2)} $v^*(s,\cdot)$ is globally Lipschitz uniformly in $s\in[0,1]$; \emph{(A3)} $v_\theta(s,\cdot)$ is $L_v$-Lipschitz uniformly in $s$. Sufficient conditions for (A1)--(A3) and the admissibility of $(G_\theta,G^*)$ as a $W_2$-coupling (which follows from (A1)) are collected in Appendix~\ref{app:proof_thm3}. Define $\varepsilon_v^2 = \int_0^1 \mathbb{E}_{\mathbf{y}\sim p_s}\bigl[\|v_\theta(s,\mathbf{y}) - v^*(s,\mathbf{y})\|^2\bigr]\,ds$, where $p_s$ is the law of the OT displacement interpolant from~\eqref{eq:cfm_loss}. Then
\begin{equation}\label{eq:approx_bound}
    W_2(G_{\theta\#}\pi_0^\delta,\, f_{\mathrm{target}}) \;\leq\; e^{L_v}\cdot\varepsilon_v.
\end{equation}
\end{theorem}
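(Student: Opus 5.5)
The plan is the standard Grönwall stability argument for flow ODEs, with the optimal flow used as the coupling. For each $\mathbf z_0\sim\pi_0^\delta$, let $\mathbf y^*_s$ solve $\dot{\mathbf y}^*_s=v^*(s,\mathbf y^*_s)$ and $\mathbf y^\theta_s$ solve $\dot{\mathbf y}^\theta_s=v_\theta(s,\mathbf y^\theta_s)$, both from $\mathbf y_0=\mathbf z_0$. Both flows are well defined by (A2)--(A3) and Picard--Lindel\"of. We then have $G^*(\mathbf z_0)=\mathbf y^*_1$ and $G_\theta(\mathbf z_0)=\mathbf y^\theta_1$. Since $G^*_\#\pi_0^\delta=f_{\mathrm{target}}$ by (A1), the joint law of $(G_\theta(\mathbf z_0),G^*(\mathbf z_0))$ is an admissible coupling of $G_{\theta\#}\pi_0^\delta$ and $f_{\mathrm{target}}$. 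Its finite second moment follows from (A1) and the linear growth implied by Lipschitz fields. Hence
\[
W_2^2\bigl(G_{\theta\#}\pi_0^\delta,f_{\mathrm{target}}\bigr)\le \mathbb E\|\mathbf y^\theta_1-\mathbf y^*_1\|^2 .
\]

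Next set $e_s=\mathbf y^\theta_s-\mathbf y^*_s$, so $e_0=0$. Splitting the velocity difference by inserting $v_\theta(s,\mathbf y^*_s)$ gives
\[
\tfrac{d}{ds}\|e_s\|\le L_v\|e_s\|+\|v_\theta(s,\mathbf y^*_s)-v^*(s,\mathbf y^*_s)\|.
\]
Here the Lipschitz constant of the \emph{learned} field is used, which is why $L_v$ and not $\mathrm{Lip}(v^*)$ appears in the bound. To avoid differentiating at $e_s=0$, I would work with $\sqrt{\|e_s\|^2+\eta}$ and let $\eta\to0$. Grönwall then gives
\[
\|e_1\|\le\int_0^1 e^{L_v(1-s)}\,\|v_\theta-v^*\|(s,\mathbf y^*_s)\,ds\le e^{L_v}\int_0^1\|v_\theta-v^*\|(s,\mathbf y^*_s)\,ds .
\]
Applying Cauchy--Schwarz in $s$ on $[0,1]$, squaring, and taking expectation over $\mathbf z_0$ yields
\[
\mathbb E\|e_1\|^2\le e^{2L_v}\int_0^1\mathbb E\|v_\theta(s,\mathbf y^*_s)-v^*(s,\mathbf y^*_s)\|^2\,ds .
\]
Taking square roots gives \eqref{eq:approx_bound}, provided the law of $\mathbf y^*_s$ is $p_s$.

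The main obstacle is this last identification: the law of $\mathbf y^*_s$ must coincide with $p_s$, the law of the displacement interpolant $(1-s)\mathbf z_0+s\mathbf x_1$ under the OT coupling. I would argue via Benamou--Brenier. Under (A1), $v^*$ is the OT velocity field, whose characteristics are the straight lines $\mathbf z_0\mapsto(1-s)\mathbf z_0+sT(\mathbf z_0)$, with $T$ the Brenier map (existence by absolute continuity of $\pi_0^\delta$). Uniqueness of solutions of the continuity equation with the Lipschitz field $v^*$ from (A2) then forces $(\mathbf y^*_s)_\#\pi_0^\delta=p_s$. I would state this as the precise content of (A1), namely that the flow of $v^*$ realizes the displacement interpolation. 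I would also note that the mini-batch Sinkhorn coupling only approximates this plan, so any discrepancy must be absorbed into the definition of $\varepsilon_v$ rather than into the theorem. If that identification were unavailable, the fallback would be to swap the roles of the two flows and instead require $L_v$-regularity of $v^*$, though that would change the constant.
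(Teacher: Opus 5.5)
Your proposal is correct and follows essentially the same route as the paper: the source-coupled pair $(G_\theta(\mathbf z_0),G^*(\mathbf z_0))$ as the $W_2$ coupling, the error split by inserting $v_\theta(s,\mathbf y^*_s)$ so that only $L_v$ enters, integral Gr\"onwall with $e^{L_v(1-s)}\le e^{L_v}$, Cauchy--Schwarz in $s$, and Fubini plus the identification of the law of $\mathbf y^*_s$ with $p_s$. Your additions fill in points the paper only asserts ``by definition of the OT flow.'' These are the $\sqrt{\|e_s\|^2+\eta}$ regularization, the second-moment check on the coupling, and the justification of the law identification via Benamou--Brenier plus uniqueness for the Lipschitz field $v^*$ from (A2).
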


\textbf{Scope of (A2).} (A2) is the operational smooth-regime hypothesis used by the Gr\"onwall step. Sufficient conditions where (A2) provably holds include Gaussian-to-Gaussian transport, other explicitly solvable transport families, and entropically regularized OT (whose population velocity field is smooth by construction). Gaussian-mixture targets (Exp.~1) are treated experimentally as smooth bounded-density targets in this regime, but do not by themselves guarantee a globally Lipschitz Brenier velocity without additional regularity. Density discontinuities (Exp.~2) and topology mismatch fall outside; in those regimes Proposition~\ref{prop:approx_nonsmooth} gives the weaker bound $W_2 \le e^{L_v}\cdot\varepsilon_v^{\eta,\delta} + O(\eta) + O(\delta)$ via a mollified/restricted surrogate (full statement and proof in Appendix~\ref{app:proof_thm3}).

\textbf{Connection to training loss.} A separate decomposition (Appendix~\ref{app:proof_thm3}, Eq.~\eqref{eq:eps_v_decomp_app}) bridges $\varepsilon_v$ to the empirical CFM loss via $\varepsilon_v \le \sqrt{\mathcal{L}_{\mathrm{CFM}}(\theta;\pi_\varepsilon) - \mathcal{L}_{\mathrm{CFM}}^*(\pi_\varepsilon)} + R_{\mathrm{sink}}(\varepsilon_{\mathrm{sink}})$, where the first term is the trainable (reducible) component and $R_{\mathrm{sink}}$ is an entropic-OT bias residual that vanishes in the exact-Brenier limit. This makes the Theorem~\ref{thm:approx_bound} envelope computable from training quantities up to the explicitly tracked Sinkhorn residual.

\subsection{End-to-End Coverage Bound}

Combining Theorem~\ref{thm:convergence_rate} with Theorem~\ref{thm:approx_bound} in the smooth regime or Proposition~\ref{prop:approx_nonsmooth} in the non-smooth/topology-mismatched regime:

\begin{corollary}[End-to-end bound]\label{cor:end_to_end}
Let $\varphi$ be bounded Lipschitz with $\|\varphi\|_\infty \le B_\varphi$ and Lipschitz constant $L_\varphi$. With probability at least $1-\alpha$, the time-averaged coverage error after $K$ i.i.d.\ cycles satisfies
\begin{equation}\label{eq:end_to_end}
    \left|S_K - \int\varphi\,f_{\mathrm{target}}\right| \;\leq\; \underbrace{B_\varphi\sqrt{\tfrac{2\ln(2/\alpha)}{K}}}_{\text{statistical: } O(K^{-1/2})} \;+\; \underbrace{L_\varphi\,e^{L_v}\cdot\varepsilon_v}_{\text{trainable approximation}} \;+\; \underbrace{L_\varphi\,\eta_{\mathrm{top}}(\delta)}_{\text{topology residual: } O(\delta)},
\end{equation}
where $\eta_{\mathrm{top}}(\delta) = O(\delta)$ is the Wasserstein cost of the annular-to-target rearrangement (Proposition~\ref{prop:approx_nonsmooth}) and vanishes when $f_{\mathrm{target}}$ is supported on a set homeomorphic to $D_\delta$ (so that (A2) of Theorem~\ref{thm:approx_bound} applies). Conditional on the precondition
\begin{equation}\label{eq:end_to_end_precond}
    \varepsilon \;>\; L_\varphi\bigl(e^{L_v}\cdot\varepsilon_v + \eta_{\mathrm{top}}(\delta)\bigr),
\end{equation}
the sample complexity to achieve error $\leq \varepsilon$ is
\begin{equation}\label{eq:end_to_end_samplecomplexity}
K \;\geq\; \frac{2 B_\varphi^2 \ln(2/\alpha)}{\bigl(\varepsilon - L_\varphi e^{L_v}\cdot\varepsilon_v - L_\varphi \eta_{\mathrm{top}}(\delta)\bigr)^2}.
\end{equation}
When~\eqref{eq:end_to_end_precond} fails, the approximation--topology floor dominates regardless of $K$ (monitoring in Appendix~\ref{app:floor}).

\emph{Non-smooth-target case.} When (A2) fails, the surrogate of Proposition~\ref{prop:approx_nonsmooth} replaces $\varepsilon_v$ by $\varepsilon_v^{\eta,\delta}$ and adds a mollification residual $L_\varphi\eta_{\mathrm{reg}}(\eta) = O(\eta)$:
\begin{equation}\label{eq:end_to_end_nonsmooth}
\left|S_K - \int\varphi\,f_{\mathrm{target}}\right| \;\leq\; B_\varphi\sqrt{\tfrac{2\ln(2/\alpha)}{K}} \;+\; L_\varphi e^{L_v}\cdot\varepsilon_v^{\eta,\delta} \;+\; L_\varphi\eta_{\mathrm{reg}}(\eta) \;+\; L_\varphi\eta_{\mathrm{top}}(\delta);
\end{equation}
the precondition and sample complexity follow~\eqref{eq:end_to_end_precond}--\eqref{eq:end_to_end_samplecomplexity} with the same substitution $\varepsilon_v\to\varepsilon_v^{\eta,\delta}$ and the additional floor term $L_\varphi\eta_{\mathrm{reg}}(\eta)$.
\end{corollary}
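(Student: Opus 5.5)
The bound follows by chaining Theorem~\ref{thm:convergence_rate}(b)--(c) with Theorem~\ref{thm:approx_bound}, or with Proposition~\ref{prop:approx_nonsmooth} in the non-smooth case, and inserting the topology residual through the triangle inequality for $W_1$. The first step is the statistical term. Set the Hoeffding tail in Theorem~\ref{thm:convergence_rate}(b) equal to $\alpha$, i.e.\ $2\exp(-K\varepsilon^2/(2B_\varphi^2))=\alpha$. Solving gives $\varepsilon=B_\varphi\sqrt{2\ln(2/\alpha)/K}$. Hence, with probability at least $1-\alpha$, we have $|S_K-\mu_G|\le B_\varphi\sqrt{2\ln(2/\alpha)/K}$ with $G=G_\theta$. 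The $X_k$ are i.i.d.\ and bounded by $B_\varphi$ because each $X_k$ is an average of $\varphi$ over a leg. This is the only probabilistic step, and everything after it is deterministic.

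The second step is the approximation term. Theorem~\ref{thm:convergence_rate}(c) gives
\[
\left|S_K-\int\varphi f_{\mathrm{target}}\right|\le|S_K-\mu_G|+\left|\mu_G-\int\varphi f_{\mathrm{target}}\right|.
\]
By Kantorovich--Rubinstein duality the second term is at most $L_\varphi W_1(G_{\theta\#}\pi_0^\delta,f_{\mathrm{target}})$. Then $W_1\le W_2$ by Jensen on couplings. In the smooth, topology-compatible regime, Theorem~\ref{thm:approx_bound} gives $W_2\le e^{L_v}\varepsilon_v$, and $\eta_{\mathrm{top}}=0$ there. In the topology-mismatched regime, I would insert an intermediate measure $\tilde f$ via the triangle inequality. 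I take $\tilde f$ to be the pushforward of $\pi_0^\delta$ under the optimal flow to the restricted surrogate target of Proposition~\ref{prop:approx_nonsmooth}. This gives
\[
W_1(G_{\theta\#}\pi_0^\delta,f_{\mathrm{target}})\le W_2(G_{\theta\#}\pi_0^\delta,\tilde f)+W_1(\tilde f,f_{\mathrm{target}}).
\]
The first piece is bounded by the Gr\"onwall argument, yielding $e^{L_v}\varepsilon_v$ or $e^{L_v}\varepsilon_v^{\eta,\delta}$. The second piece is $\eta_{\mathrm{top}}(\delta)$, plus $\eta_{\mathrm{reg}}(\eta)$ when mollification is needed. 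Multiplying by $L_\varphi$ and adding the statistical term gives \eqref{eq:end_to_end} and \eqref{eq:end_to_end_nonsmooth}.

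The sample complexity is elementary algebra. Write $F=L_\varphi(e^{L_v}\varepsilon_v+\eta_{\mathrm{top}}(\delta))$ for the deterministic floor. Under \eqref{eq:end_to_end_precond}, the error is at most $\varepsilon$ whenever $B_\varphi\sqrt{2\ln(2/\alpha)/K}\le\varepsilon-F$. Squaring and solving for $K$ gives \eqref{eq:end_to_end_samplecomplexity}. If \eqref{eq:end_to_end_precond} fails, the right-hand side of \eqref{eq:end_to_end} is at least $\varepsilon$ for every $K$, so the bound cannot certify error $\varepsilon$ at any horizon. The non-smooth case repeats this with $\varepsilon_v\to\varepsilon_v^{\eta,\delta}$ and $L_\varphi\eta_{\mathrm{reg}}(\eta)$ added to $F$.

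I expect the main obstacle to be the topology-residual step: justifying that $W_1(\tilde f,f_{\mathrm{target}})=O(\delta)$ for an intermediate measure that is itself reachable by a Lipschitz velocity field, so that Theorem~\ref{thm:approx_bound}'s (A2) applies to it. My first attempt is to take $\tilde f$ as the pushforward of $f_{\mathrm{target}}$ under a map that moves the mass of an $O(\delta)$-neighborhood of a chosen interior point by at most $O(\delta)$. This makes the support annular, homeomorphic to $D_\delta$. The transport cost is then at most the displacement, which is $O(\delta)$. I would defer the Lipschitz regularity of the resulting flow to Proposition~\ref{prop:approx_nonsmooth}, which the corollary is allowed to assume.
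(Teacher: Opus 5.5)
Your proposal is correct and follows essentially the same route as the paper, which gets the corollary by setting the Hoeffding tail of Theorem~\ref{thm:convergence_rate}(b) equal to $\alpha$ and bounding the bias term of part~(c) via Kantorovich--Rubinstein and $W_1\le W_2$. It then invokes Theorem~\ref{thm:approx_bound} in the smooth, topology-matched case, or Proposition~\ref{prop:approx_nonsmooth} otherwise, and finishes with the same elementary algebra for the sample complexity. That proposition's triangle-inequality argument through the restricted/mollified surrogate is exactly your intermediate-measure step, with (A2) for the surrogate likewise assumed. Your bounded-displacement construction for $\eta_{\mathrm{top}}(\delta)=O(\delta)$ matches the paper's collar-rearrangement argument. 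Your reading of the failed-precondition case is, if anything, more precise than the paper's wording: the bound simply cannot certify error $\varepsilon$, not that the true error must exceed $\varepsilon$.
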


All terms in~\eqref{eq:end_to_end} are evaluable from training/inference quantities: $\varepsilon_v$ from the CFM loss via~\eqref{eq:eps_v_decomp_app} (modulo an empirically-tracked Sinkhorn-coupling residual $O(\varepsilon_{\mathrm{sink}})$), $L_v$ via the two-sided bracket $\hat L_v \leq L_v \leq L_v^{\mathrm{net}}$ (Appendix~\ref{app:Lv_protocol}; numerical floors use $\hat L_v$ for non-vacuity, while the architectural $L_v^{\mathrm{net}}$ is $\sim$30$\times$ looser), and $\eta_{\mathrm{top}}(\delta)=0$ when $\mathrm{supp}(f_{\mathrm{target}})$ is homotopy-equivalent to $D_\delta$ and $O(\delta)$ otherwise (per-experiment case analysis in Appendix~\ref{app:floor}). For Exp.~1, $\hat L_v\approx 3$ and $\varepsilon_v\approx 0.039$ give the numerical floor $e^{\hat L_v}\cdot\varepsilon_v\approx 0.78$, $\approx$10$\times$ above the observed $1-\rho_{\mathrm{iid}}\approx 0.08$ (Table~\ref{tab:Lv_calibration}).

\section{Experiments}
\label{sec:experiments}

\paragraph{Experimental setup.}
We evaluate on two benchmarks: three synthetic targets isolating individual
theoretical mechanisms, and a real-world dataset.
Both use an MLP for $v_\theta$ trained with
Adam and Sinkhorn OT coupling; the model architecture used for the synthetic targets is shallower than that for the real-world benchmark.
Full implementation details and metrics are provided in
Appendix~\ref{app:experiments_full}.





\subsection{Synthetic-target validation of theoretical bounds}
\label{sec:exp_synth}
The three synthetic targets each isolate one theoretical mechanism: a Gaussian mixture (density matching), a binary $75/25$ density (demand-proportional allocation, vs.\ the time-warping formula of \citet{malekzadeh2025nonuniform}), and a Gaussian mixture with an off-center no-fly zone (soft-constraint Pareto). Headline numbers: Experiment~1 achieves IID transport correlation $\rho_{\mathrm{iid}} = 0.92$ at $\varepsilon_v \approx 0.039$; Experiment~2 reaches an $L_1$ allocation deviation of $0.2$ percentage points on the $75/25$ binary target, a $\mathbf{10\times}$ tighter match than the time-warping formula ($2.0$ pp); Experiment~3 traces a Pareto frontier in which an NFZ-only penalty cuts violation $\mathbf{7\times}$ (from $0.7\%$ to $0.1\%$) and an added Hessian regularizer drives acceleration to a sub-unit $\mathbf{0.93\times}$ (smoother than the latent trajectory), at distinct $(\lambda_{\mathrm{nfz}},\lambda_{\mathrm{acc}})$ operating points. Per-experiment setups and full results are in Appendix~\ref{app:experiments_full}; figures are in Appendix~\ref{app:figures}; the Experiment~3 Pareto outcomes are tabulated in Table~\ref{tab:exp3}.
 
\textbf{Theorem-envelope diagnostics.}
The synthetic experiments quantitatively support the components of the
end-to-end coverage bound.
\emph{Convergence rate (Theorem~\ref{thm:convergence_rate}).}
The Hoeffding RMSE prediction $\sigma/\sqrt{K}\approx 0.085$ at $K=300$
envelopes the observed Experiment~1 IID-vs-trajectory residual $\approx 0.05$;
Figure~\ref{fig:convergence} (Appendix~\ref{app:figures}) shows empirical
log--log slope $-0.48$ vs.\ theoretical $-0.50$, with sensitivity diagnostics
in Appendix~\ref{app:slope_fit}.
\emph{Energy bound (Corollary~\ref{cor:ratio}).}
The Experiment~2 sub-unit acceleration ratio $0.71\times$ is consistent with
the worst-case envelope (sup-Lipschitz $\hat{L}\approx 2$,
Table~\ref{tab:Lv_calibration}) once the trajectory's angle-averaged effective
Jacobian is accounted for (Appendix~\ref{app:proof_thm1}, Remark on tightness).
\emph{Approximation bound (Theorem~\ref{thm:approx_bound}).}
The CFM-loss-derived $\varepsilon_v\approx 0.039$ produces a diagnostic plug-in
floor $e^{\hat{L}_v}\varepsilon_v$ that envelopes the empirical $\hat{W}_2$
across all five rows of Table~\ref{tab:Lv_calibration} at $\hat{L}_v\leq 3$;
the certified upper-bound floor uses the architectural $L_v^{\mathrm{net}}$
(Appendix~\ref{app:Lv_protocol}).

\subsection{Real-data evaluation: Milano UAV coverage}
\label{sec:exp_real}

Having validated the theoretical guarantees on synthetic targets, we now
evaluate practical effectiveness on the Milano benchmark~\citep{barlacchi2015multi} 
(dataset and preprocessing details in Appendix~\ref{app:milano}).
Two no-fly configurations test modular constraint handling: a
\textbf{single-disc} (1D) and a \textbf{multi-disc} (MD) with four disjoint
exclusion zones. Six baselines are evaluated: TOES~\citep{dong2023timeoptimal},
time-warping~\citep{malekzadeh2025nonuniform}, static
OT~\citep{peyre2019computational}, SAC~\citep{haarnoja2018soft}, and the
FMEC-Stein/FMEC-Sinkhorn variants of \citet{sun2025flow}.
Six variants of our proposed method differ only in the constraint penalty 
of~\eqref{eq:total_loss}: \textbf{OT-CFM} (no penalty), \textbf{+NFZ},
\textbf{+E} (Zeng rotary-wing power~\citep{zeng2019energy}), \textbf{+A}
(acceleration), \textbf{+NFZ+E}, and \textbf{+NEA} (NFZ$+$E$+$A combined).
The best $\lambda$ per variant is selected from a single-seed sweep
(Appendix~\ref{app:experiments_full}).

\begin{table}[t]
\centering
\caption{Headline metrics on Milano under two no-fly configurations: single-disc (1D) and multi-disc (MD). Mean$_{\pm\text{std}}$ over 3 seeds; \textbf{best} per column among coverage-competitive methods ($\rho \ge 0.5$); $^\ddagger$ achieves low violation at degraded coverage. Full coverage, energy, and NFZ tables in Appendix~\ref{app:full_metrics}.}
\label{tab:milano_headline}
\scriptsize
\setlength{\tabcolsep}{3pt}
\begin{tabular}{l cc cc cc}
\toprule
& \multicolumn{2}{c}{$\rho\uparrow$} & \multicolumn{2}{c}{$\bar P$ (W)$\downarrow$} & \multicolumn{2}{c}{NFZ frac (\%)$\downarrow$} \\
\cmidrule(lr){2-3}\cmidrule(lr){4-5}\cmidrule(lr){6-7}
Method & 1D & MD & 1D & MD & 1D & MD \\
\midrule
TOES~\citep{dong2023timeoptimal}$^\dagger$          & \ms{0.286}{0}        & \ms{0.265}{0}        & \ms{160}{0}                 & \ms{160}{0}                 & \ms{1.44}{0}              & \ms{0}{0}                 \\
Time-warp~\citep{malekzadeh2025nonuniform}          & \ms{0.690}{.04}      & \ms{0.862}{.01}      & \ms{532}{7}                 & \ms{555}{4}                 & \ms{9.11}{1.4}            & \ms{1.03}{.14}            \\
OT~\citep{peyre2019computational}                   & \ms{0.575}{.01}      & \ms{0.570}{.01}      & \ms{1.0{\cdot}10^{4}}{225}  & \ms{1.05{\cdot}10^{4}}{282} & \ms{3.25}{.10}            & \ms{4.20}{.36}            \\
SAC~\citep{haarnoja2018soft}                        & \ms{0.395}{.08}      & \ms{0.228}{.13}      & \ms{2.55{\cdot}10^{4}}{984} & \ms{1.98{\cdot}10^{4}}{2600}& \ms{5.23}{3.7}            & \ms{35.0}{9.3}            \\
FMEC-Stein~\citep{sun2025flow}$^\dagger$            & \ms{-.119}{.0001}    & \ms{-.094}{.0002}    & \ms{164}{0.6}               & \ms{164}{0.6}               & \ms{0}{0}                 & \ms{0}{0}                 \\
FMEC-Sinkhorn~\citep{sun2025flow}$^\dagger$         & \ms{-.080}{.003}     & \ms{-.065}{.0001}    & \ms{166}{0.03}              & \ms{166}{0.04}              & \ms{0}{0}                 & \ms{0}{0}                 \\
\midrule
OT-CFM                  & \ms{0.839}{.003}     & \ms{0.908}{.003}     & \ms{2.12{\cdot}10^{3}}{47}  & \ms{2.38{\cdot}10^{3}}{14}  & \ms{2.27}{.08}            & \ms{1.56}{.18}            \\
OT-CFM +NFZ             & \ms{\mathbf{0.865}}{.001} & \ms{\mathbf{0.911}}{.002} & \ms{2.23{\cdot}10^{3}}{9}   & \ms{2.24{\cdot}10^{3}}{29}  & \ms{\mathbf{0.011}}{.010} & \ms{\mathbf{0.189}}{.08}  \\
OT-CFM +E               & \ms{0.646}{.07}      & \ms{0.737}{.005}     & \ms{\mathbf{510}}{143}      & \ms{\mathbf{514}}{133}      & \ms{5.98}{1.3}            & \ms{13.2}{0.4}            \\
OT-CFM +A               & \ms{0.410}{.07}      & \ms{0.430}{.14}      & \ms{1.02{\cdot}10^{3}}{180} & \ms{1.04{\cdot}10^{3}}{314} & \ms{5.10}{3.0}            & \ms{16.4}{1.9}            \\
OT-CFM +NFZ+E           & \ms{0.572}{.02}      & \ms{0.230}{.05}      & \ms{819}{72}                & \ms{1.68{\cdot}10^{3}}{34}  & \ms{0.011}{.010}          & \ms{11.9}{2.8}            \\
OT-CFM +NEA$^\ddagger$  & \ms{0.104}{.13}      & \ms{0.188}{.18}      & \ms{645}{305}               & \ms{457}{289}               & \ms{0.102}{.15}           & \ms{0.032}{.04}           \\
\bottomrule
\multicolumn{7}{p{0.96\textwidth}}{\scriptsize $^\dagger$Low $\bar{P}$ and zero NFZ frac are artifacts of degenerate coverage ($\rho \leq 0.29$): these methods fail to redistribute mass across the domain and should not be read as genuinely energy-efficient or constraint-respecting. See Appendix~\ref{app:trajectories} for trajectory visualizations.}\\
\multicolumn{7}{l}{\scriptsize $^\ddagger$Achieves low violation at the cost of degraded coverage ($\rho = 0.10$/$0.19$).}\\
\end{tabular}
\end{table}

\begin{figure}[t]
\centering
\includegraphics[width=0.49\textwidth]{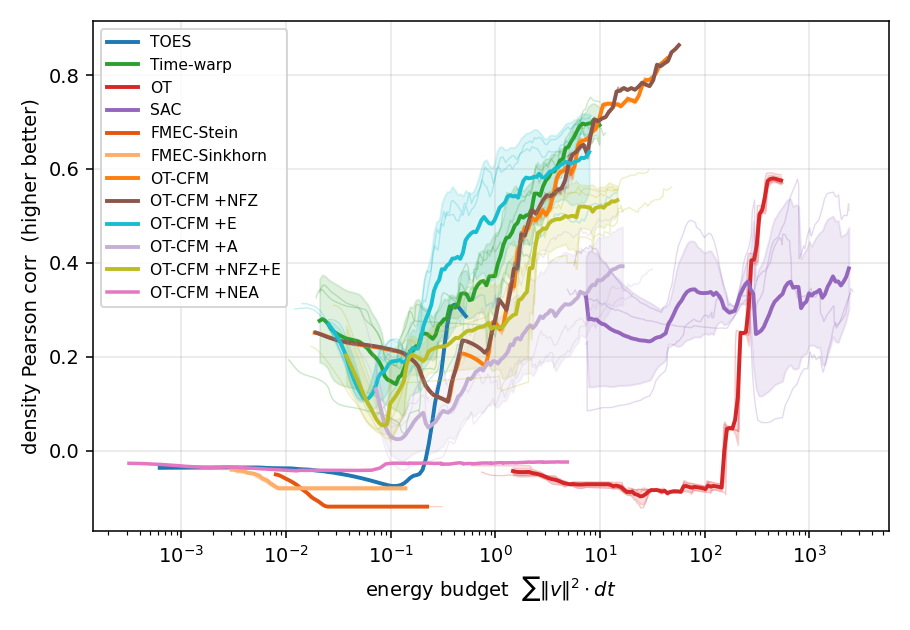}\hfill
\includegraphics[width=0.49\textwidth]{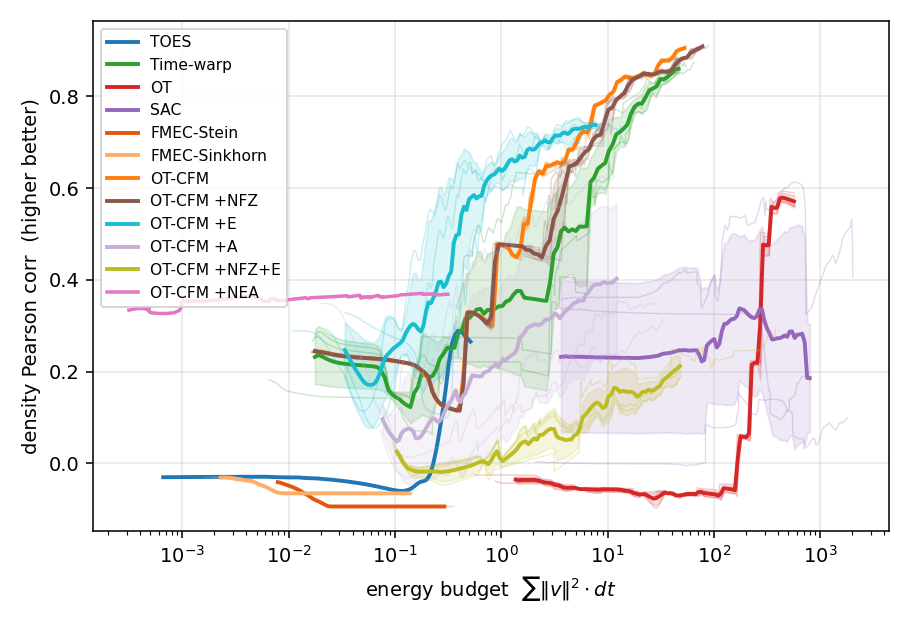}
\caption{Coverage--energy Pareto on Milano. \textbf{Left:} single-disc (1D) NFZ. \textbf{Right:} multi-disc (MD) NFZ. Pearson $\rho$ (higher better) plotted against the swept energy budget $\sum\|v\|^2\,dt$ (log scale; lower-right is best); shaded bands are $\pm 1$ std over three seeds. The OT-CFM family traces an empirical Pareto front above the evaluated baselines in both configurations; \textbf{+E} occupies the low-energy end and \textbf{+NFZ} the high-fidelity end. The constraint-axis Pareto pattern is summarized in Table~\ref{tab:milano_headline}; full coverage--NFZ figure in Appendix~\ref{app:milano_nfz_fig}.}
\label{fig:milano_pareto}
\end{figure}

\textbf{Empirical Pareto behavior.} On both NFZ configurations, OT-CFM~\textbf{+NFZ} jointly achieves the highest coverage ($\rho = 0.865 / 0.911$ for 1D / MD) and the lowest constraint violation among coverage-competitive methods (frac inside $0.011\% / 0.189\%$, a $\sim$$200\times / 8\times$ reduction over unconstrained OT-CFM); OT-CFM~\textbf{+E} attains the lowest mean Zeng power ($510 / 514$\,W, $\sim$$4\times$ below unconstrained). These gaps are predicted by theory: Theorem~\ref{thm:approx_bound} via OT-coupled CFM training, Theorem~\ref{thm:acc_bound} via the optional $\mathcal{R}_{\mathrm{acc}}$ regularizer. OT-CFM~+NFZ achieves higher coverage on MD ($\rho = 0.911$) than on 1D ($\rho = 0.865$) despite four exclusion zones rather than one, evidence that the modular soft-penalty design (\S\ref{sec:constraints}) absorbs constraint topology without re-derivation. Apparent zero violation and lowest $\bar{P}$ by TOES and FMEC are
artifacts of degenerate coverage ($\rho\leq 0.29$): these methods fail
to redistribute mass across the domain, making low energy and zero NFZ
violation accidental byproducts of near-stationary trajectories rather
than genuine efficiency gains (Appendix~\ref{app:trajectories}).
Full metrics are in Appendix~\ref{app:full_metrics}.

\textbf{Multi-agent reuse, amortization, and realizability.} A single trained $G_\theta$ supports an $N$-agent fleet with aggregate Fourier ergodic metric tracking the predicted $O(1/\sqrt{NK})$ rate (\S\ref{sec:multi}; Theorem~\ref{thm:convergence_rate}): on Milano, $\mathcal{E}_N$ drops $\sim$$2.5\times$ from $N{=}1$ to $N{=}20$ at flat per-agent power (Appendix~\ref{app:milano_amortization}). Training is paid once; every additional trajectory or agent costs a single forward pass through $v_\theta$. OT-CFM~\textbf{+E} is also the only OT-CFM variant whose median speed lies below the typical $30$\,m/s rotary-wing cap (Appendix~\ref{app:velocity}).

\subsection{Comparison with Prior Methods}

The combination of finite-cycle convergence, amortized inference, multi-agent reuse, $O(T_{\mathrm{NFE}})$ deployment cost (or $O(1)$ distilled, Appendix~\ref{app:inference_efficiency}), and modular soft-penalty constraints in Table~\ref{tab:comparison} is, to our knowledge, the first such combination in the ergodic-coverage literature; per-row qualifications and the per-trajectory cost gap with \citet{sun2025flow} are detailed in Appendix~\ref{app:comparison_footnotes}.

\begin{table}[t]
  \caption{Qualitative comparison with existing ergodic trajectory design methods. Quantitative real-data results: \S\ref{sec:exp_real}; column definitions, $T_{\mathrm{NFE}}$ semantics, and detailed row qualifications: Appendix~\ref{app:comparison_footnotes}.}
  \label{tab:comparison}
  \centering
  \small
  \begin{tabular}{lccccc}
    \toprule
    Method & \makecell{Conv.\\rate} & \makecell{Amortized\\inference} & \makecell{Multi-\\agent} & \makecell{Inference\\cost} & \makecell{Modular\\constraints} \\
    \midrule
    Time-warp \citep{malekzadeh2025nonuniform} & $O(1/\sqrt{K})$ & Yes & Yes & $O(1)$ & No$^\dagger$ \\
    OT \citep{peyre2019computational} & None & No & No & Per-target$^\flat$ & Cost-matrix only \\
    SAC \citep{haarnoja2018soft} & None & Yes & No & $O(1)$ & Reward shaping \\
    TOES \citep{dong2023timeoptimal}$^{\diamond,\natural}$ & None$^\diamond$ & Per-query$^\diamond$ & No & $O(\text{NLP})^\diamond$ & No$^\diamond$ \\
    FMEC \citep{sun2025flow}$^{\ddagger,\natural}$ & None$^\ddagger$ & Per-query$^\ddagger$ & No & $O(N_{\mathrm{iter}}T)^{\ddagger}$ & Soft only$^\ddagger$ \\
    \textbf{Ours} & $O(1/\sqrt{K})$ & \textbf{Yes} & \textbf{Yes} & $O(T_{\mathrm{NFE}})$ / $O(1)^\S$ & \textbf{Yes} \\
    \bottomrule
  \end{tabular}

  \vspace{2pt}
  {\scriptsize $^\dagger$Per-target/per-constraint analytical rederivation. $^\ddagger$Per-trajectory flow-time outer loop~\citep{sun2025flowcode}. $^\diamond$Per-query Pontryagin solve. $^\S$$O(1)$ via distilled evaluator (Appendix~\ref{app:inference_efficiency}). $^\flat$Per-target Sinkhorn solve. $^\natural$Quantitative results in Appendix~\ref{app:full_metrics}. Full qualifications: Appendix~\ref{app:comparison_footnotes}.}
\end{table}

\section{Related Work}
\label{sec:related}

\paragraph{Distinguishers.} Our work differs from prior ergodic-coverage and density-matching methods along three axes: (i) \emph{object of learning}: a reusable spatial pushforward map $G_\theta$ trained once, rather than per-instance trajectory/control optimization; (ii) \emph{theoretical surface}: asymptotic ergodicity by construction (Proposition~\ref{prop:ergodic}) together with an explicit $O(1/\sqrt K)$ finite-cycle rate (Theorem~\ref{thm:convergence_rate}) and an end-to-end coverage bound (Corollary~\ref{cor:end_to_end}) estimable from the training loss (certified given an architectural Lipschitz bound on $v_\theta$), rather than step-level optimality or finite-horizon spectral-metric minimization without long-horizon convergence guarantees; (iii) \emph{constraint handling}: arbitrary differentiable soft penalties composed additively in $\mathcal{L}_{\mathrm{total}}$, rather than bespoke analytical rederivation or inner-solver modification.

\paragraph{What ``ergodic'' means in this literature.} The term is used in three distinct senses: (i) strict dynamical-systems, (ii) asymptotic empirical, and (iii) finite-horizon coverage-metric. We adopt (i)+(ii), matching long-horizon UAV monitoring and wireless-coverage missions where horizons are open-ended; recent trajectory-optimization variants \citep{sun2025flow,dong2023timeoptimal,lee2024stein,sun2025kernel,hughes2025mmd} adopt only (iii). Citation-grounded disambiguation is in Appendix~\ref{app:ergodic_senses}.

\paragraph{Other ergodic-coverage paradigms.} The spectral lineage \citep{mathew2011metrics,miller2013trajectory,abraham2018decentralized,lerch2023safety,miller2016ergodic,mavrommati2018realtime,seewald2024energy,lee2024stein} and time-optimal Pontryagin variants \citep{dong2023timeoptimal} minimize a Fourier ergodic metric over finite-horizon controls: high quality but $O(K_{\mathrm{F}}^d)$ in the Fourier cutoff, per-scenario re-optimized, and without closed-form long-horizon rates. Coverage path planning \citep{galceran2013survey,cabreira2019survey}, RL \citep{theile2020uav,bayerlein2018trajectory}, and constrained-ergodic robotics \citep{ayvali2017ergodic} scale poorly, produce discontinuous paths, or lack density-matching guarantees. MCMC samplers (Langevin~\citep{roberts1996exponential}, HMC~\citep{neal2011mcmc}) are ergodic with respect to $f_{\mathrm{target}}$ by construction but exhibit pathological acceleration energy and mixing-time-dependent finite-horizon coverage; detailed contrast in Appendix~\ref{app:mcmc_alt}.

\paragraph{Flow matching and generative models.} Flow matching \citep{lipman2023flow}, its OT-coupled variant \citep{tong2024improving}, and rectified flow \citep{liu2023flow} sit within a broader family of continuous-time generative models, including neural ODEs \citep{chen2018neural}, normalizing flows \citep{papamakarios2021normalizing}, score-based diffusion \citep{sohldickstein2015deep,ho2020denoising,song2021scorebased}, and stochastic-interpolant unifications \citep{albergo2025stochastic}, with CNF convergence rates in \citet{gao2024convergence}. We repurpose this machinery to learn an unconditional spatial pushforward whose composition with a deterministic ergodic process yields a trajectory with a provable coverage guarantee, in contrast to \emph{diffusion policies} \citep{chi2025diffusion} that condition on observations to model action distributions.

\paragraph{UAV trajectory design for wireless networks.} UAV-assisted communications \citep{zeng2019accessing,mozaffari2019tutorial,wu2018joint} study trajectory and resource-allocation joint design, with stochastic-geometry trajectory models \citep{enayati2019moving}, analytical time-warping for non-uniform coverage \citep{malekzadeh2025nonuniform}, and learning-augmented variants \citep{bayerlein2018trajectory}. These analytical methods match general 2D densities via polar decomposition plus radial time-warping along a fixed sweep curve, but each new target or constraint requires bespoke rederivation; we instead use a learned map $G_\theta$ with additive penalties in $\mathbb{R}^d$ (Remark~\ref{rem:higher_dim}).

\paragraph{Flow-matching for ergodic coverage.} \citet{sun2025flow} independently apply flow matching to ergodic coverage in a complementary role: their per-instance control update over a fixed horizon $T$ contrasts with our offline reusable pushforward $G_\theta$ amortized across queries and agents, with corresponding differences in convergence guarantee (Table~\ref{tab:comparison}; technical contrast and Riccati/NFZ caveats in Appendix~\ref{app:comparison_footnotes}). Concurrent work by \citet{hughes2026infinite} extends kernel-mean-embedding ergodic control to the infinite-horizon regime via an extended-state visitation recursion.

\section{Conclusion}
\label{sec:conclusion}

We introduced a pushforward framework that decouples ergodicity from density matching: an analytic latent trajectory composed with a learned OT-CFM map. The three theorems combine into an end-to-end coverage bound (Corollary~\ref{cor:end_to_end}) estimable from CFM training diagnostics (certified given an architectural Lipschitz bound on $v_\theta$); experiments support the predictions and trace a Pareto frontier across density, constraint, and energy.

\paragraph{Limitations and future work.}
Four design choices scope our guarantees, each developed in Appendix~\ref{app:limitations_future_work}: (i) the Theorem~\ref{thm:approx_bound} envelope uses the empirically tight but lower-bound $\hat L_v$ (certification path: Appendix~\ref{app:limitations_certification}); (ii) the annular latent domain induces $O(\delta)$ topology residual on simply connected targets traded against $\Theta(\delta^{-4})$ acceleration energy (learned/topology-aware geometries: Appendix~\ref{app:limitations_geometry}); (iii) NFZ and acceleration penalties are soft, not certified (barrier/projection/reachability paths: Appendix~\ref{app:limitations_safety}); (iv) experiments are 2D open-loop (3D, multi-seed, multi-agent, hardware-in-the-loop: Appendix~\ref{app:limitations_extensions}).

\begin{ack}
This work was supported in part by [redacted for review].
\end{ack}

{\small
\bibliographystyle{plainnat}

}

\appendix

\section{Notation}
\label{app:notation}

We collect here the notation used throughout the paper, grouped thematically. Standard symbols (expectation, norms, Wasserstein distances, pushforward) are included alongside paper-specific ones to aid readers coming from adjacent fields.

\paragraph{Domains and measures.}
\begin{center}
\begin{tabular}{@{}l p{0.78\textwidth}@{}}
\toprule
Symbol & Meaning \\
\midrule
$d$ & Ambient spatial dimension. Main body: $d=2$; general $d$ in Appendix~\ref{app:rd_extension}. \\
$\delta$ & Latent radius, $0<\delta<1$; inner hole radius of the annular domain. \\
$D$ & Unit ball in $\mathbb{R}^d$: $\{\mathbf{z}\in\mathbb{R}^d:\|\mathbf{z}\|\le 1\}$. \\
$D_\delta$ & Annular latent domain: $\{\mathbf{z}\in\mathbb{R}^d:\delta\le\|\mathbf{z}\|\le 1\}$. \\
$\mathbb{S}^{d-1}$ & Unit sphere in $\mathbb{R}^d$; uniform distribution over cycle directions. \\
$\mathcal{C}$ & Constraint set in target space (e.g., complement of no-fly zones). \\
$\pi_0^\delta$ & Uniform probability measure on $D_\delta$. In $d=2$, density $1/(\pi(1-\delta^2))$. \\
$f_{\mathrm{target}}$ & Target spatial density on $\mathbb{R}^d$ that time-averaged trajectory occupancy should match. \\
$p_s$ & Intermediate density along the CFM interpolation, $s\in[0,1]$; $p_0=\pi_0^\delta$, $p_1\approx f_{\mathrm{target}}$. \\
$\#$ & Pushforward operator: $(G_\# \mu)(A) = \mu(G^{-1}(A))$ for measurable $A$. \\
$G_{\theta\#}\pi_0^\delta$ & Pushforward of the uniform latent measure under the learned map $G_\theta$. \\
\bottomrule
\end{tabular}
\end{center}

\paragraph{Trajectories and latent construction.}
\begin{center}
\begin{tabular}{@{}l p{0.78\textwidth}@{}}
\toprule
Symbol & Meaning \\
\midrule
$t$ & Trajectory (wall-clock) time, $t\in[0,T]$. \\
$T$ & Total trajectory duration (horizon). \\
$\mathbf{z}(t)$ & Latent ergodic trajectory on $D_\delta$. \\
$\mathbf{x}(t)$ & Target-space trajectory, $\mathbf{x}(t) = G_\theta(\mathbf{z}(t))$. \\
$K$ & Number of radial cycles executed by the latent trajectory. \\
$k$ & Cycle index, $k\in\{1,\dots,K\}$. \\
$K_{\mathrm{F}}$ & Fourier-basis truncation order in spectral-multiscale-coverage methods (\S\ref{sec:related}); distinct from the cycle count $K$. \\
$\theta_k$ & Random direction of cycle $k$, i.i.d.\ uniform on $\mathbb{S}^{d-1}$. \\
$\Phi_0$ & Phase offset, $\sim\mathrm{Uniform}[0,2)$, introduced for stationarity. \\
$s$ & Within-cycle parameter (also serving as CFM flow time), $s\in[0,1]$. \\
$r(s)$ & Radial profile $r(s)=\sqrt{\delta^2+(1-\delta^2)s}$, giving uniform $\pi_0^\delta$ cycles. \\
\bottomrule
\end{tabular}
\end{center}

\paragraph{Maps, velocity fields, and derivatives.}
\begin{center}
\begin{tabular}{@{}l p{0.78\textwidth}@{}}
\toprule
Symbol & Meaning \\
\midrule
$G_\theta$ & Learned pushforward map $\mathbb{R}^d\to\mathbb{R}^d$ parameterized by $\theta$ (NN weights). \\
$G^*$ & True OT/Brenier map satisfying $G^*_\#\pi_0^\delta = f_{\mathrm{target}}$ (when well defined). \\
$v_\theta(s,\mathbf{y})$ & Learned time-dependent velocity field; $G_\theta$ is its time-1 ODE flow. \\
$v^*(s,\mathbf{y})$ & True OT marginal velocity field generating $G^*$. \\
$\bar v^*_\varepsilon$ & Sinkhorn-regularized conditional-mean velocity (entropic OT target). \\
$\theta$ & Neural-network weights parameterizing $v_\theta$ (and hence $G_\theta$). \\
$J_G(\mathbf{z})$ & Jacobian of $G$ at $\mathbf{z}$; $\|J_G\|_{\mathrm{op}}\le L$. \\
$H_G(\mathbf{z})$ & Hessian tensor of $G$ at $\mathbf{z}$; tensor norm bounded by $M_H$. \\
$\Omega,\,\mathcal{K}$ & Open ambient set $\Omega\subset\mathbb{R}^2$ and compact $\mathcal{K}\subset\Omega$ containing the trajectory; used in Theorem~\ref{thm:acc_bound}. \\
\bottomrule
\end{tabular}
\end{center}

\paragraph{Lipschitz and spectral constants.}
\begin{center}
\begin{tabular}{@{}l p{0.78\textwidth}@{}}
\toprule
Symbol & Meaning \\
\midrule
$L$ & Lipschitz constant of $G$ on the compact $\mathcal{K}$: $\sup_{\mathbf{z}\in \mathcal{K}}\|J_G(\mathbf{z})\|_{\mathrm{op}}$. \\
$\hat L$ & Empirical map-level Lipschitz estimate for $G_\theta$ (central-difference Jacobian SVD over latent samples; Appendix~\ref{app:Lv_protocol}); honest lower bound on $L$. \\
$L_v$ & Lipschitz constant of $v_\theta(s,\cdot)$ in the spatial argument, uniformly in $s$. \\
$L_v^{\mathrm{net}}$ & Architectural upper bound $\prod_\ell\|W_\ell\|_{\mathrm{op}}$ on $L_v$ (certifiable via spectral normalization). \\
$\hat L_v$ & Empirical power-iteration estimate of $L_v$ (practical proxy; see Appendix~\ref{app:Lv_protocol}). \\
$L_\varphi$ & Lipschitz constant of the test function $\varphi$. \\
$B_\varphi$ & Supremum bound on the test function: $\|\varphi\|_\infty \le B_\varphi$. \\
$M_H$ & Uniform bound on Hessian tensor norm of $G$ over the compact $\mathcal{K}$. \\
$\Phi_4(\mathbf{z})$ & Velocity fourth-moment integral $\int \|\dot{\mathbf{z}}\|^4\,dt$ of the latent trajectory. \\
$E_{\mathrm{acc}}$ & Acceleration energy $\int\|\ddot{\mathbf{x}}\|^2\,dt$ of a trajectory. \\
$C_{D_\delta}$ & Neumann Poincar\'e constant of the annular domain $D_\delta$. \\
$C_D$ & Poincar\'e constant of the unit disc: $C_D = 1/(j'_{1,1})^2 \approx 0.295$. \\
$j'_{1,1}$ & First positive zero of $J_1'$, where $J_1$ is the Bessel function of the first kind, order 1. \\
$\lambda_1^\delta$ & First nonzero Neumann eigenvalue of $-\Delta$ on $D_\delta$; $C_{D_\delta}=1/\lambda_1^\delta$. \\
\bottomrule
\end{tabular}
\end{center}

\newpage
\paragraph{Approximation error and concentration.}
\begin{center}
\begin{tabular}{@{}l p{0.78\textwidth}@{}}
\toprule
Symbol & Meaning \\
\midrule
$\varepsilon_v$ & CFM velocity-field training error: $\varepsilon_v^2 = \int_0^1 \mathbb{E}_{p_s}[\|v_\theta-v^*\|^2]\,ds$. \\
$\varepsilon_{\mathrm{sink}}$ & Sinkhorn entropic-OT coupling residual; vanishes as entropy regularization $\to 0$. \\
$\eta_{\mathrm{reg}}$ & Mollification residual in the non-smooth-target regime (Proposition~\ref{prop:approx_nonsmooth}). \\
$\eta_{\mathrm{top}}$ & Topology mismatch residual (annulus vs.\ ball): $\eta_{\mathrm{top}}(\delta) = O(\delta)$. \\
$\eta$ & Generic mollification/restriction scale for the non-smooth surrogate. \\
$\alpha$ & Concentration-bound failure probability (Hoeffding). \\
$\varphi$ & Bounded-Lipschitz test function used for ergodic-averaging statements. \\
$\mathbf{z}_k(s)$ & Latent outward leg of cycle $k$, $\mathbf{z}_k(s)=r(s)\,\mathbf{e}_{\theta_k}$, $s\in[0,1]$. \\
$\mathbf{x}_k(s)$ & Transformed outward leg of cycle $k$, $\mathbf{x}_k(s)=G(\mathbf{z}_k(s))$. \\
$X_k$ & Per-cycle integral $\int_0^1 \varphi(\mathbf{x}_k(s))\,ds = \int_0^1 \varphi(G(r(s)\,\mathbf{e}_{\theta_k}))\,ds$. \\
$S_K$ & Time-averaged statistic $\tfrac{1}{K}\sum_{k=1}^K X_k$. \\
$\mu_G$ & Spatial mean $\mathbb{E}_{\pi_0^\delta}[\varphi\circ G]=\int\varphi\,d(G_\#\pi_0^\delta)$. \\
$Z_K^{\mathrm{traj}}$ & Trajectory-based gridded density estimate from $K$ i.i.d.\ cycles (used in Figure~\ref{fig:convergence}). \\
$Z_{\mathrm{iid}}$ & IID-transport reference density estimate ($20\,000$ samples through $G_\theta$, same grid as $Z_K^{\mathrm{traj}}$). \\
$\rho_{\mathrm{iid}},\,\rho_{\mathrm{traj}}$ & Pearson correlation between target and achieved gridded densities, computed under IID transport ($\rho_{\mathrm{iid}}$) vs.\ $K$-cycle trajectory sampling ($\rho_{\mathrm{traj}}$); reported in \S\ref{sec:experiments}. \\
\bottomrule
\end{tabular}
\end{center}

\paragraph{Norms and analytical operators.}
\begin{center}
\begin{tabular}{@{}l p{0.78\textwidth}@{}}
\toprule
Symbol & Meaning \\
\midrule
$\|\cdot\|$ & Euclidean norm on $\mathbb{R}^d$. \\
$\|\cdot\|_\infty$ & Supremum norm over the stated domain. \\
$\|\cdot\|_{\mathrm{op}}$ & Spectral (operator) norm of a matrix: largest singular value. \\
$\|\cdot\|_{L^2(p)}$ & $L^2$ norm weighted by density $p$: $\|f\|_{L^2(p)}^2 = \int\|f\|^2 \, p\,d\mathbf{y}$. \\
$W_1,\,W_2$ & $1$- and $2$-Wasserstein distances between probability measures. \\
$\hat W_2$ & Empirical $2$-Wasserstein estimate (POT \texttt{ot.emd2}; mean $\pm$ std over seeds; Table~\ref{tab:Lv_calibration}). \\
$H^1$ & Sobolev space of functions with square-integrable first weak derivatives. \\
$\mathbb{E}[\cdot]$ & Expectation with respect to the stated distribution. \\
$\mathrm{Var}(\cdot)$ & Variance. \\
$\Pr(\cdot)$ & Probability. \\
\bottomrule
\end{tabular}
\end{center}

\paragraph{Losses and penalties.}
\begin{center}
\begin{tabular}{@{}l p{0.78\textwidth}@{}}
\toprule
Symbol & Meaning \\
\midrule
$\mathcal{L}_{\mathrm{CFM}}(\theta)$ & Conditional flow matching loss; see~\eqref{eq:cfm_loss}. \\
$\mathcal{L}_{\mathrm{CFM}}^*$ & Irreducible CFM loss (conditional variance of the interpolation velocity). \\
$\mathcal{L}_{\mathrm{total}}(\theta)$ & Total training loss: $\mathcal{L}_{\mathrm{CFM}} + \sum_i\lambda_i\mathcal{R}_i$. \\
$\mathcal{R}_i$ & Constraint penalty term $i$ (e.g., NFZ, acceleration ceiling, speed bound). \\
$\lambda_i$ & Positive scalar weight on penalty $\mathcal{R}_i$. \\
$r_{\mathrm{nfz}}$ & Radius of a no-fly-zone disc (\S\ref{sec:constraints}). The unsubscripted symbol $\rho$ is reserved for Pearson correlation between target and achieved densities (cf.\ $\rho_{\mathrm{iid}}/\rho_{\mathrm{traj}}$ above and bare $\rho$ in Milano tables). \\
\bottomrule
\end{tabular}
\end{center}

\newpage
\paragraph{Multi-agent.}
\begin{center}
\begin{tabular}{@{}l p{0.78\textwidth}@{}}
\toprule
Symbol & Meaning \\
\midrule
$N$ & Number of coordinated agents (UAVs). \\
$n$ & Agent index, $n\in\{1,\dots,N\}$. \\
$\mathbf{z}_n$ & Latent position of agent $n$. \\
$G_\theta^{(N)}$ & Joint map $\mathbb{R}^{dN}\to\mathbb{R}^{dN}$ for the coordinated variant. \\
$\pi_n$ & $n$-th marginal projection (multi-agent); the coordinated map satisfies $(\pi_n)_\# G_\theta^{(N)}\#(\pi_0^\delta)^{\otimes N} \approx f_{\mathrm{target}}$. \\
$\hat\mu_{-n}$ & Empirical distribution $\tfrac{1}{N-1}\sum_{m\neq n}\delta_{\mathbf{z}_m}$ of the other agents (mean-field variant). \\
\bottomrule
\end{tabular}
\end{center}

\section{Proofs for Section~\ref{sec:latent}: Latent Trajectory}
\label{app:proofs_latent}

We provide detailed proofs for all propositions stated in Section~\ref{sec:latent}.

\subsection*{Proof of Proposition~\ref{prop:uniform} (Uniform Ergodicity on $D_\delta$)}

\begin{proof}
We show two things: (i) under the phase convention $\Phi_0 \sim \mathrm{Uniform}[0,2)$ adopted in Section~\ref{sec:latent}, the pointwise distribution of $\mathbf{z}(t)$ is $\mathrm{Uniform}(D_\delta)$, and (ii) the time-averaged occupancy converges to this distribution almost surely, independently of any phase convention.

\medskip
\textbf{Step 1: Per-cycle radial distribution (pointwise claim).}
On each outward half-cycle, the radial position at normalized time $s \in [0,1]$ is given by the profile
\[
    r(s) = \sqrt{\delta^2 + (1-\delta^2)s}.
\]
The phase offset $\Phi_0 \sim \mathrm{Uniform}[0,2)$ introduced in Section~\ref{sec:latent} makes the within-half-cycle phase $U \in [0,1]$ at any fixed observation time uniformly distributed and independent of the cycle index. Therefore, the radial position at time $t$ satisfies $R = \sqrt{\delta^2 + (1-\delta^2)U}$ with $U \sim \mathrm{Uniform}[0,1]$.

\medskip
\textbf{Step 2: CDF computation.}
For $r \in [\delta, 1]$:
\begin{align*}
    P(R \leq r) &= P\!\left(\sqrt{\delta^2 + (1-\delta^2)U} \leq r\right) \\
    &= P\!\left(U \leq \frac{r^2 - \delta^2}{1-\delta^2}\right) \\
    &= \frac{r^2 - \delta^2}{1-\delta^2},
\end{align*}
since $U \sim \mathrm{Uniform}[0,1]$ and the argument lies in $[0,1]$ for $r \in [\delta, 1]$.

Differentiating the CDF with respect to $r$:
\[
    f_R(r) = \frac{2r}{1-\delta^2}, \quad r \in [\delta, 1].
\]

\medskip
\textbf{Step 3: Joint polar density.}
The angle $\theta_k \sim \mathrm{Uniform}[0, 2\pi)$ is drawn independently for each cycle. Since $R$ and $\Theta$ are independent, the joint polar density is
\[
    f_{R,\Theta}(r,\theta) = f_R(r) \cdot f_\Theta(\theta) = \frac{2r}{1-\delta^2} \cdot \frac{1}{2\pi} = \frac{r}{\pi(1-\delta^2)}.
\]

\medskip
\textbf{Step 4: Verification of uniformity.}
The uniform distribution on the annulus $D_\delta = \{\mathbf{z} : \delta \leq \|\mathbf{z}\| \leq 1\}$ has area $\mathrm{Area}(D_\delta) = \pi(1 - \delta^2)$. In polar coordinates, the uniform density with respect to the area element $r\,dr\,d\theta$ is
\[
    f_{\mathrm{uniform}}(r, \theta) = \frac{1}{\pi(1-\delta^2)}.
\]
The probability of lying in an infinitesimal region $[r, r+dr] \times [\theta, \theta + d\theta]$ is
\[
    f_{R,\Theta}(r,\theta)\,dr\,d\theta = \frac{r}{\pi(1-\delta^2)}\,dr\,d\theta = f_{\mathrm{uniform}}(r,\theta) \cdot r\,dr\,d\theta,
\]
confirming that the per-cycle distribution matches $\mathrm{Uniform}(D_\delta)$. (Note: $f_{R,\Theta}$ is the density w.r.t.\ $dr\,d\theta$, while $f_{\mathrm{uniform}}$ is the density w.r.t.\ the area element $r\,dr\,d\theta$; the factor of $r$ accounts for the Jacobian of the polar coordinate transformation.)

\medskip
\textbf{Step 5: Ergodicity via the SLLN (phase-independent).}
The following argument uses only the i.i.d.\ cycle structure and makes no use of $\Phi_0$. The trajectory consists of i.i.d.\ cycles $(Z_1, Z_2, \ldots)$, each contributing a per-cycle integral
\[
    Z_k = \int_0^1 \varphi(\mathbf{z}_k(s))\,ds
\]
for any bounded continuous test function $\varphi$. Since the $\theta_k$ are i.i.d., the $Z_k$ are i.i.d.\ with mean $\mathbb{E}[Z_k] = \int_{D_\delta} \varphi(\mathbf{z})\,\pi_0^\delta(\mathbf{z})\,d\mathbf{z}$. By the strong law of large numbers (SLLN):
\[
    \frac{1}{K}\sum_{k=1}^K Z_k \;\xrightarrow{\text{a.s.}}\; \int_{D_\delta} \varphi\,d\pi_0^\delta \quad \text{as } K \to \infty.
\]
Since this holds for every bounded continuous $\varphi$, the trajectory $\mathbf{z}(t)$ is ergodic with respect to $\pi_0^\delta = \mathrm{Uniform}(D_\delta)$. The same SLLN argument extends without modification to any $\varphi \in L^1(\pi_0^\delta)$: the per-cycle integral $Z_k$ then has finite expectation $\mathbb{E}[Z_k] = \int_{D_\delta}\varphi\,d\pi_0^\delta$, and Kolmogorov's strong law applies to the i.i.d.\ sequence $(Z_k)$ without further integrability strengthening, recovering the integrable-$\varphi$ statement of Proposition~\ref{prop:uniform}.

\medskip
\textbf{Step 6: Return half-cycle.}
The return (inward) half-cycle traces the same radial profile in reverse: $r_{\mathrm{return}}(s) = r(1-s)$, so the distribution of $R$ is identical. By symmetry, the inward leg has the same marginal distribution, and the full cycle average equals the outward-leg average. This completes the proof.
\end{proof}

\subsection*{Proof of Proposition~\ref{prop:ergodic} (Ergodicity Transfer)}

\begin{proof}
We show that if $\mathbf{z}(t)$ is ergodic with respect to $\pi_0^\delta$ and $G: D_\delta \to \mathbb{R}^2$ is a Borel-measurable map satisfying $G_\# \pi_0^\delta = f_{\mathrm{target}}$, then $\mathbf{x}(t) = G(\mathbf{z}(t))$ is ergodic with respect to $f_{\mathrm{target}}$. No bijection, continuity, or topological-equivalence assumption is used; in particular, the argument applies even when $D_\delta$ and $\mathrm{supp}(f_{\mathrm{target}})$ are not homeomorphic.

\medskip
\textbf{Step 1: Composition with test functions.}
Let $\varphi: \mathbb{R}^2 \to \mathbb{R}$ be any $\varphi \in L^1(f_{\mathrm{target}})$. Define $\psi = \varphi \circ G$, which is Borel-measurable (composition of Borel maps) and in $L^1(\pi_0^\delta)$ by the pushforward identity $\int |\psi|\,d\pi_0^\delta = \int |\varphi|\,df_{\mathrm{target}} < \infty$.

\medskip
\textbf{Step 2: Apply ergodicity of $\mathbf{z}(t)$.}
By the ergodicity of $\mathbf{z}(t)$ with respect to $\pi_0^\delta$:
\[
    \frac{1}{T}\int_0^T \varphi(G(\mathbf{z}(t)))\,dt = \frac{1}{T}\int_0^T \psi(\mathbf{z}(t))\,dt \;\xrightarrow{T \to \infty}\; \int_{D_\delta} \psi(\mathbf{z})\,\pi_0^\delta(\mathbf{z})\,d\mathbf{z}.
\]

\medskip
\textbf{Step 3: Change of variables via the pushforward.}
By the pushforward condition $G_\# \pi_0^\delta = f_{\mathrm{target}}$ and the change-of-variables formula:
\begin{align*}
    \int_{D_\delta} \psi(\mathbf{z})\,\pi_0^\delta(\mathbf{z})\,d\mathbf{z} &= \int_{D_\delta} \varphi(G(\mathbf{z}))\,\pi_0^\delta(\mathbf{z})\,d\mathbf{z} \\
    &= \int_{\mathbb{R}^2} \varphi(\mathbf{x})\,f_{\mathrm{target}}(\mathbf{x})\,d\mathbf{x}.
\end{align*}
This is precisely the defining property of the pushforward measure: for any measurable $\varphi$, $\int \varphi \circ G\,d\mu = \int \varphi\,d(G_\# \mu)$.

\medskip
\textbf{Step 4: Conclusion.}
Combining Steps 2 and 3:
\[
    \frac{1}{T}\int_0^T \varphi(\mathbf{x}(t))\,dt \;\xrightarrow{T \to \infty}\; \int_{\mathbb{R}^2} \varphi(\mathbf{x})\,f_{\mathrm{target}}(\mathbf{x})\,d\mathbf{x}.
\]
Since this holds for every $\varphi \in L^1(f_{\mathrm{target}})$ (and in particular for every bounded continuous $\varphi$, the class used in the asymptotic ergodicity definition~\eqref{eq:ergodic_def}), the trajectory $\mathbf{x}(t) = G(\mathbf{z}(t))$ is ergodic with respect to $f_{\mathrm{target}}$.
\end{proof}

\section{Proof of Theorem~\ref{thm:acc_bound} (Acceleration Energy Bound)}
\label{app:proof_thm1}

\begin{proof}
We bound the acceleration energy $E_{\mathrm{acc}}(\mathbf{x}) = \int_0^T \|\ddot{\mathbf{x}}(t)\|^2\,dt$ of the transformed trajectory $\mathbf{x}(t) = G(\mathbf{z}(t))$ in terms of the latent trajectory's acceleration energy and velocity moments.

\medskip
\textbf{Step 1: Acceleration decomposition via the chain rule.}
Since $\mathbf{x}(t) = G(\mathbf{z}(t))$, the first derivative is $\dot{\mathbf{x}}(t) = J_G(\mathbf{z}(t))\,\dot{\mathbf{z}}(t)$, where $J_G$ denotes the $2 \times 2$ Jacobian matrix of $G$. Differentiating again via the product rule:
\begin{align*}
    \ddot{\mathbf{x}}(t) &= \frac{d}{dt}\bigl[J_G(\mathbf{z}(t))\,\dot{\mathbf{z}}(t)\bigr] \\
    &= J_G(\mathbf{z}(t))\,\ddot{\mathbf{z}}(t) + \frac{dJ_G}{dt}\,\dot{\mathbf{z}}(t) \\
    &= J_G(\mathbf{z}(t))\,\ddot{\mathbf{z}}(t) + \mathbf{q}(t),
\end{align*}
where the Hessian term $\mathbf{q}(t) \in \mathbb{R}^2$ has components
\[
    q_k(t) = \dot{\mathbf{z}}(t)^\top \nabla^2 G_k(\mathbf{z}(t))\,\dot{\mathbf{z}}(t), \quad k = 1, 2.
\]
Here $\nabla^2 G_k$ is the $2 \times 2$ Hessian matrix of the $k$-th component of $G$.

\medskip
\textbf{Step 2: Bound the Jacobian term.}
By the submultiplicativity of operator norms:
\[
    \|J_G(\mathbf{z}(t))\,\ddot{\mathbf{z}}(t)\| \leq \|J_G(\mathbf{z}(t))\|_{\mathrm{op}} \cdot \|\ddot{\mathbf{z}}(t)\| \leq L\,\|\ddot{\mathbf{z}}(t)\|,
\]
where $L = \sup_{\mathbf{z} \in \mathcal{K}} \|J_G(\mathbf{z})\|_{\mathrm{op}}$ is the Lipschitz constant of $G$ on the compact $\mathcal{K}\subset\Omega$ (finite by continuity of $J_G$).

\medskip
\textbf{Step 3: Bound the Hessian term.}
For each component $k \in \{1,2\}$, since $\nabla^2 G_k$ is a symmetric matrix:
\[
    |q_k(t)| = |\dot{\mathbf{z}}(t)^\top \nabla^2 G_k(\mathbf{z}(t))\,\dot{\mathbf{z}}(t)| \leq \|\nabla^2 G_k(\mathbf{z}(t))\|_{\mathrm{op}} \cdot \|\dot{\mathbf{z}}(t)\|^2,
\]
where the inequality follows from the Rayleigh quotient characterization of the operator norm for symmetric matrices. Taking the $\ell^2$ norm over both components:
\begin{align*}
    \|\mathbf{q}(t)\| &= \sqrt{q_1(t)^2 + q_2(t)^2} \\
    &\leq \sqrt{\|\nabla^2 G_1\|_{\mathrm{op}}^2 + \|\nabla^2 G_2\|_{\mathrm{op}}^2} \cdot \|\dot{\mathbf{z}}(t)\|^2 \\
    &= M_H \cdot \|\dot{\mathbf{z}}(t)\|^2,
\end{align*}
where $M_H = \sup_{\mathbf{z} \in \mathcal{K}}\sqrt{\|\nabla^2 G_1(\mathbf{z})\|_{\mathrm{op}}^2 + \|\nabla^2 G_2(\mathbf{z})\|_{\mathrm{op}}^2}$ is the Hessian tensor norm on the compact $\mathcal{K}\subset\Omega$ defined in Theorem~\ref{thm:acc_bound} (finite by continuity of $\nabla^2 G_k$).

\medskip
\textbf{Step 4: Minkowski inequality in $L^2([0,T])$.}
Recall that the Minkowski inequality (triangle inequality for $L^p$ norms) states that for functions $f, g \in L^2$:
\[
    \|f + g\|_{L^2} \leq \|f\|_{L^2} + \|g\|_{L^2}.
\]
Applying this to the acceleration decomposition $\ddot{\mathbf{x}} = J_G\ddot{\mathbf{z}} + \mathbf{q}$:
\begin{align*}
    \sqrt{E_{\mathrm{acc}}(\mathbf{x})} &= \sqrt{\int_0^T \|\ddot{\mathbf{x}}(t)\|^2\,dt} \\
    &= \left\|\|\ddot{\mathbf{x}}\|\right\|_{L^2([0,T])} \\
    &\leq \left\|\|J_G\ddot{\mathbf{z}}\|\right\|_{L^2} + \left\|\|\mathbf{q}\|\right\|_{L^2} \\
    &\leq L\left\|\|\ddot{\mathbf{z}}\|\right\|_{L^2} + M_H\left\|\|\dot{\mathbf{z}}\|^2\right\|_{L^2} \\
    &= L\sqrt{\int_0^T \|\ddot{\mathbf{z}}\|^2\,dt} + M_H\sqrt{\int_0^T \|\dot{\mathbf{z}}\|^4\,dt} \\
    &= L\,\sqrt{E_{\mathrm{acc}}(\mathbf{z})} + M_H\,\sqrt{\Phi_4(\mathbf{z})}.
\end{align*}
This completes the proof of the bound~\eqref{eq:acc_bound}.

\medskip
\textbf{Remark (tightness).} For affine maps $G(\mathbf{z}) = A\mathbf{z} + \mathbf{b}$ the Hessian contribution vanishes ($M_H = 0$), leaving $\sqrt{E_{\mathrm{acc}}(\mathbf{x})} \leq \|A\|_{\mathrm{op}}\sqrt{E_{\mathrm{acc}}(\mathbf{z})}$. This inequality is a strict consequence of replacing $\|A\ddot{\mathbf{z}}\|$ by $\|A\|_{\mathrm{op}}\|\ddot{\mathbf{z}}\|$ in Step 2: equality holds pointwise only when $\ddot{\mathbf{z}}$ is almost-everywhere aligned with the top singular direction of $A$, and equality holds in expectation over the uniform latent-angle distribution only when $A$ is a scalar multiple of an isometry ($\sigma_1(A) = \sigma_2(A)$), since the angle-averaged amplification $\mathbb{E}_\theta[\|A\mathbf{e}_\theta\|^2] = \tfrac{1}{2}(\sigma_1^2+\sigma_2^2)$ is strictly below $\sigma_1^2 = \|A\|_{\mathrm{op}}^2$ whenever $\sigma_1 > \sigma_2$. For general $C^2$ maps the bound is conservative for the same reason: the actual energy depends on the angle-averaged $\mathbb{E}_\theta[\|J_G\mathbf{e}_\theta\|^2]$, which may be substantially smaller than $L^2$. Experiment~2 illustrates the gap empirically: the sup-Lipschitz estimate is $\hat L\approx 2$ (Table~\ref{tab:Lv_calibration}), giving a worst-case ratio bound $L^2 \approx 4$, while the trajectory samples an angle-averaged effective amplification well below one and the observed acceleration ratio is $0.71\times$.
\end{proof}

\begin{lemma}[Latent trajectory moments]\label{lem:moments}
For the radial back-and-forth trajectory on $D_\delta$ with $K$ complete cycles (period $\tau = 2$ per cycle, total time $T = 2K$), under the $s$-uniform (triangle) time schedule and with $E_{\mathrm{acc}}$ and $\Phi_4$ defined by integration over the open smooth interior of each half-cycle (excluding the measure-zero set of reversal times):
\begin{align}
    E_{\mathrm{acc}}(\mathbf{z}) &= K(1-\delta^2)^4(1+\delta^2)/(16\delta^4), \label{eq:E_latent_app}\\
    \Phi_4(\mathbf{z}) &= K(1-\delta^2)^4/(8\delta^2), \label{eq:Phi4_app}\\
    \Phi_4/E_{\mathrm{acc}} &= 2\delta^2/(1+\delta^2). \label{eq:ratio_app}
\end{align}
Both $E_{\mathrm{acc}}$ and $\Phi_4$ are finite for all $\delta > 0$.
\end{lemma}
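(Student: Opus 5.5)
The plan is a direct computation, one half-cycle at a time. On the open interior of each half-cycle the direction $\mathbf e_{\theta_k}$ is fixed, so the motion is purely radial. For the outward leg, parametrized by $s=t-t_{\mathrm{start}}\in(0,1)$, we have $\dot{\mathbf z}=r'(s)\,\mathbf e_{\theta_k}$ and $\ddot{\mathbf z}=r''(s)\,\mathbf e_{\theta_k}$. Hence $\|\ddot{\mathbf z}\|^2=r''(s)^2$ and $\|\dot{\mathbf z}\|^4=r'(s)^4$, and no angular or Coriolis terms appear. The return leg is the time reversal $r(1-s)$. Under $s\mapsto 1-s$ the first derivative changes sign and the second does not, so both integrands are even in the sign of $r'$. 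The inward leg therefore contributes exactly the same amounts as the outward leg. The reversal instants are excluded by hypothesis, and so is the base-station rest, which is not mission time. It thus suffices to compute two one-dimensional integrals on $(0,1)$, multiply by $2$ per cycle, and then multiply by $K$.

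Write $a=1-\delta^2$, so that $r=\sqrt{\delta^2+as}$. Differentiating gives
\[
r'=\frac{a}{2r},\qquad r''=-\frac{a^2}{4r^3}.
\]
Both integrals reduce to powers of $r$, and the substitution $u=r^2$, with $du=a\,ds$ and $u$ running over $[\delta^2,1]$, turns them into elementary power integrals. For the acceleration integral,
\[
\int_0^1 r''^2\,ds=\frac{a^4}{16}\cdot\frac1a\int_{\delta^2}^1u^{-3}\,du=\frac{a^3(1-\delta^4)}{32\,\delta^4}=\frac{a^4(1+\delta^2)}{32\,\delta^4}.
\]
For the velocity moment,
\[
\int_0^1 r'^4\,ds=\frac{a^4}{16}\cdot\frac1a\int_{\delta^2}^1u^{-2}\,du=\frac{a^4}{16\,\delta^2}.
\]
Doubling for the two legs and multiplying by $K$ gives \eqref{eq:E_latent_app} and \eqref{eq:Phi4_app}. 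Dividing them gives \eqref{eq:ratio_app}:
\[
\frac{\Phi_4}{E_{\mathrm{acc}}}=\frac{1/(8\delta^2)}{(1+\delta^2)/(16\delta^4)}=\frac{2\delta^2}{1+\delta^2}.
\]

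Finiteness for $\delta>0$ is immediate, because $r\ge\delta>0$ on the whole leg, so the integrands are bounded continuous functions on $[0,1]$. This is exactly where the annular exclusion matters: the $u^{-3}$ singularity would make $E_{\mathrm{acc}}$ diverge at $\delta=0$.

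The only point needing care is bookkeeping rather than analysis. The second-derivative impulses at the turnarounds ($s=1$ and the airport reversal) must be excluded, as the statement specifies. The $s$-uniform schedule must also be confirmed to mean unit speed in $s$, with period $\tau=2$, so that $dt=ds$ and no time-rescaling factor enters the integrals.
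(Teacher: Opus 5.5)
Your proposal is correct and follows the paper's proof essentially step for step. Like the paper, you compute $r'$ and $r''$, integrate $r''^2$ and $r'^4$ over one half-cycle via the substitution $u=r^2=\delta^2+(1-\delta^2)s$, use the time-reversal symmetry of the return leg to double, multiply by $K$, and take the ratio; your closed forms and the finiteness argument (the $u^{-3}$ and $u^{-2}$ singularities are excluded because $r\ge\delta$) match the paper's exactly.
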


\begin{proof}
\textbf{Step 1: Velocity and acceleration of the radial profile.}
On the outward half-cycle with angle $\theta_k$, the trajectory is $\mathbf{z}(s) = r(s)\,\mathbf{e}_{\theta_k}$ where $r(s) = \sqrt{\delta^2 + (1-\delta^2)s}$ for $s \in [0,1]$. Since $\mathbf{e}_{\theta_k} = (\cos\theta_k, \sin\theta_k)$ is a constant unit vector:
\[
    \dot{\mathbf{z}}(s) = \dot{r}(s)\,\mathbf{e}_{\theta_k}, \qquad \ddot{\mathbf{z}}(s) = \ddot{r}(s)\,\mathbf{e}_{\theta_k}.
\]

Computing the radial derivatives. Let $u(s) = r(s)^2 = \delta^2 + (1-\delta^2)s$, so $r = \sqrt{u}$:
\begin{align*}
    \dot{r}(s) &= \frac{1-\delta^2}{2\sqrt{u(s)}} = \frac{1-\delta^2}{2r(s)}, \\[4pt]
    \ddot{r}(s) &= -\frac{(1-\delta^2)^2}{4u(s)^{3/2}} = -\frac{(1-\delta^2)^2}{4r(s)^3}.
\end{align*}

Therefore:
\begin{align}
    \|\dot{\mathbf{z}}(s)\|^2 &= \dot{r}(s)^2 = \frac{(1-\delta^2)^2}{4u(s)}, \label{eq:vel_sq}\\
    \|\ddot{\mathbf{z}}(s)\|^2 &= \ddot{r}(s)^2 = \frac{(1-\delta^2)^4}{16\,u(s)^3}, \label{eq:acc_sq}\\
    \|\dot{\mathbf{z}}(s)\|^4 &= \dot{r}(s)^4 = \frac{(1-\delta^2)^4}{16\,u(s)^2}. \label{eq:vel_fourth}
\end{align}

\medskip
\textbf{Step 2: Integration over one half-cycle.}
We integrate~\eqref{eq:acc_sq} and~\eqref{eq:vel_fourth} over $s \in [0,1]$ using the substitution $u = \delta^2 + (1-\delta^2)s$, $du = (1-\delta^2)\,ds$, with limits $u: \delta^2 \to 1$:

\medskip
\emph{Acceleration energy integral:}
\begin{align*}
    \int_0^1 \|\ddot{\mathbf{z}}(s)\|^2\,ds &= \frac{(1-\delta^2)^4}{16} \int_0^1 u(s)^{-3}\,ds \\
    &= \frac{(1-\delta^2)^4}{16} \cdot \frac{1}{1-\delta^2} \int_{\delta^2}^1 u^{-3}\,du \\
    &= \frac{(1-\delta^2)^3}{16} \cdot \left[-\frac{1}{2u^2}\right]_{\delta^2}^1 \\
    &= \frac{(1-\delta^2)^3}{16} \cdot \frac{1}{2}\left(\frac{1}{\delta^4} - 1\right) \\
    &= \frac{(1-\delta^2)^3}{32} \cdot \frac{1 - \delta^4}{\delta^4} \\
    &= \frac{(1-\delta^2)^3(1-\delta^2)(1+\delta^2)}{32\delta^4} \\
    &= \frac{(1-\delta^2)^4(1+\delta^2)}{32\delta^4}.
\end{align*}

\emph{Velocity fourth-moment integral:}
\begin{align*}
    \int_0^1 \|\dot{\mathbf{z}}(s)\|^4\,ds &= \frac{(1-\delta^2)^4}{16} \int_0^1 u(s)^{-2}\,ds \\
    &= \frac{(1-\delta^2)^4}{16} \cdot \frac{1}{1-\delta^2} \int_{\delta^2}^1 u^{-2}\,du \\
    &= \frac{(1-\delta^2)^3}{16} \cdot \left[-\frac{1}{u}\right]_{\delta^2}^1 \\
    &= \frac{(1-\delta^2)^3}{16} \cdot \left(\frac{1}{\delta^2} - 1\right) \\
    &= \frac{(1-\delta^2)^3}{16} \cdot \frac{1-\delta^2}{\delta^2} \\
    &= \frac{(1-\delta^2)^4}{16\delta^2}.
\end{align*}

\medskip
\textbf{Step 3: Full $K$ cycles.}
The return (inward) half-cycle traces the same radial profile in reverse: $r_{\mathrm{return}}(s) = r(1-s)$. By the chain rule, $|\dot{r}_{\mathrm{return}}(s)| = |\dot{r}(1-s)|$ and $|\ddot{r}_{\mathrm{return}}(s)| = |\ddot{r}(1-s)|$, so the integrals are identical. Each full cycle consists of two half-cycles, and there are $K$ full cycles, giving $2K$ half-cycles in total. Therefore:
\begin{align*}
    E_{\mathrm{acc}}(\mathbf{z}) &= 2K \cdot \frac{(1-\delta^2)^4(1+\delta^2)}{32\delta^4} = \frac{K(1-\delta^2)^4(1+\delta^2)}{16\delta^4},
\end{align*}
and similarly:
\[
    \Phi_4(\mathbf{z}) = 2K \cdot \frac{(1-\delta^2)^4}{16\delta^2} = \frac{K(1-\delta^2)^4}{8\delta^2}.
\]

\medskip
\textbf{Step 4: Moment ratio.}
Dividing $\Phi_4$ by $E_{\mathrm{acc}}$:
\[
    \frac{\Phi_4(\mathbf{z})}{E_{\mathrm{acc}}(\mathbf{z})} = \frac{K(1-\delta^2)^4/(8\delta^2)}{K(1-\delta^2)^4(1+\delta^2)/(16\delta^4)} = \frac{16\delta^4}{8\delta^2(1+\delta^2)} = \frac{2\delta^2}{1+\delta^2}.
\]
As $\delta \to 0$, this ratio vanishes as $O(\delta^2)$, confirming that the Hessian contribution to the acceleration energy becomes negligible for small $\delta$.

\medskip
\textbf{Step 5: Finiteness.}
For any $\delta > 0$, both $E_{\mathrm{acc}}$ and $\Phi_4$ are finite. This is the key advantage of the $\delta$-exclusion: without it ($\delta = 0$), the integrals $\int u^{-3}\,du$ and $\int u^{-2}\,du$ diverge at $u = 0$ (the origin), making the acceleration energy infinite. The annular domain $D_\delta$ eliminates this singularity.
\end{proof}

\subsection{Turnaround impulses and the interior-energy convention}\label{app:boundary_smooth}

The radial back-and-forth trajectory of Section~\ref{sec:latent} is piecewise-$C^2$, with velocity discontinuities at two types of reversal points:

\paragraph{Airport turnaround ($r=\delta$).} Between consecutive cycles the agent returns to the base-station disc $\|\mathbf{z}\|\leq\delta$ and departs on a fresh heading $\theta_{k+1}$. Physically, this corresponds to a natural stopping point: the UAV lands (or hovers at the base), reorients, and departs. The base-station rest is excluded from mission time (Section~\ref{sec:latent}), so the inter-cycle transition does not enter the energy or coverage integrals; each cycle is integrated independently in Lemma~\ref{lem:moments}.

\paragraph{Outer-boundary turnaround ($r=1$).} At the end of each outward half-cycle the velocity reverses along the same ray: $\dot{\mathbf{z}}$ jumps from $+\dot{r}(1)\,\mathbf{e}_{\theta_k}$ to $-\dot{r}(1)\,\mathbf{e}_{\theta_k}$, with $\dot{r}(1) = (1-\delta^2)/2$. This is the only in-flight discontinuity. Theorem~\ref{thm:acc_bound} and Lemma~\ref{lem:moments} integrate over the open smooth interior of each half-cycle, treating the reversal as a measure-zero set; this is the \emph{interior smooth-leg energy} convention used throughout the paper.

\paragraph{Interior-energy convention vs.\ physical turnaround impulses.} The interior-energy convention is the operationally meaningful accounting in mechanical-control practice: smooth-leg flight energy is what the velocity controller integrates between maneuvers, and turnaround/reorientation impulses are budgeted separately as discrete maneuver costs. Theorem~\ref{thm:acc_bound}, Corollary~\ref{cor:ratio}, and Lemma~\ref{lem:moments} bound the smooth-leg energy of $\mathbf{x}$ by the smooth-leg energy of $\mathbf{z}$ amplified by the map's Lipschitz and Hessian norms. Physical trajectories with finite-duration turnaround windows incur additional acceleration energy at each reversal whose magnitude depends on the chosen smoothing window; this turnaround cost is logged separately from the smooth-leg integrals.

\paragraph{Cubic-Hermite local smoothing and its turnaround cost.} For completeness we record the energy budget of a globally $C^1$ reparameterization. Fix $\varepsilon > 0$; for $s \in [0, 1-\varepsilon]$, keep the original profile $r(s) = \sqrt{\delta^2 + (1-\delta^2)s}$, and for $s \in [1-\varepsilon, 1]$, replace $r(s)$ by a cubic Hermite interpolant $\tilde{r}(s)$ satisfying
\[
    \tilde{r}(1-\varepsilon) = r(1-\varepsilon), \quad \dot{\tilde{r}}(1-\varepsilon) = \dot{r}(1-\varepsilon), \quad \tilde{r}(1) = 1, \quad \dot{\tilde{r}}(1) = 0.
\]
The return half-cycle mirrors the outward leg as before: $\tilde{r}_{\mathrm{ret}}(s) = \tilde{r}(1-s)$. The endpoint derivative changes by $\dot{r}(1-\varepsilon) = (1-\delta^2)/(2r(1-\varepsilon)) = \Theta(1)$ over a window of length $\varepsilon$, so the cubic interpolant has $\sup_{s\in[1-\varepsilon,1]}|\ddot{\tilde{r}}(s)| = \Theta(1/\varepsilon)$ and the contribution of the splice window to the squared-acceleration integral is
\[
    \int_{1-\varepsilon}^1 \ddot{\tilde{r}}(s)^2\,ds \;=\; \Theta(1/\varepsilon).
\]
The induced occupancy perturbation is concentrated on the annular shell $r\in[r(1-\varepsilon),1]$ of area fraction $O(\varepsilon)$, giving $\|\pi_{\mathrm{smooth}} - \pi_0^\delta\|_{\mathrm{TV}} = O(\varepsilon)$, absorbed by the approximation framework of Theorem~\ref{thm:approx_bound}. The trade-off is therefore fundamental: a fixed-window cubic smoothing trades $O(\varepsilon)$ in density perturbation for $\Theta(1/\varepsilon)$ in turnaround acceleration energy, and there is no choice of $\varepsilon$ that drives both to zero simultaneously. Any finite-duration smoothing of an $\Theta(1)$-magnitude velocity reversal contributes a turnaround cost that cannot be absorbed into the smooth-leg integrals. The interior-energy convention sidesteps this trade-off by accounting for the smooth-leg integrals only, with turnaround impulses budgeted separately as physical maneuver costs.

\paragraph{Practical note.} The experiments use the $s$-uniform parameterization throughout, which gives \emph{exact} uniform density on $D_\delta$ by Proposition~\ref{prop:uniform}; turnaround impulses are budgeted separately from the smooth-leg integrals in our energy reporting and do not enter Lemma~\ref{lem:moments}'s closed-form moments.

\section{Proof of Theorem~\ref{thm:convergence_rate} (Ergodic Convergence Rate)}
\label{app:proof_thm2}

\begin{proof}
We prove each part of the theorem in sequence.

\medskip
\textbf{Step 1: Per-cycle i.i.d.\ structure.}
The radial latent trajectory on $D_\delta$ consists of i.i.d.\ cycles indexed by $k = 1, 2, \ldots$, each with a random angle $\theta_k \sim \mathrm{Uniform}[0, 2\pi)$ and period $\tau = 2$ (one outward half-cycle plus one return half-cycle). Writing $\mathbf{x}_k(s) = G(\mathbf{z}_k(s)) = G(r(s)\,\mathbf{e}_{\theta_k})$ for the transformed outward leg, the per-cycle contribution to the time average is:
\[
    X_k = \int_0^1 \varphi(\mathbf{x}_k(s))\,ds = \int_0^1 \varphi(G(r(s)\,\mathbf{e}_{\theta_k}))\,ds,
\]
where we have used the fact that the outward and return half-cycles contribute equally (by the symmetry argument in the proof of Proposition~\ref{prop:uniform}). Since the angles $\theta_k$ are i.i.d., the random variables $X_k$ are also i.i.d.\ Their common mean is
\[
    \mathbb{E}[X_k] = \mathbb{E}_\theta\!\left[\int_0^1 \varphi(G(r(s)\,\mathbf{e}_\theta))\,ds\right] = \int_{D_\delta} \varphi(G(\mathbf{z}))\,\pi_0^\delta(\mathbf{z})\,d\mathbf{z} = \mu_G,
\]
where the second equality uses the polar substitution developed in Step~3, and $\mu_G = \int \varphi\,d(G_\# \pi_0^\delta)$.

The time average over $K$ cycles is $S_K = \frac{1}{K}\sum_{k=1}^K X_k$.

\medskip
\textbf{Step 2: Variance bound via Cauchy--Schwarz.}
Define the centered function $\psi = \varphi \circ G - \mu_G$, so that $X_k - \mu_G = \int_0^1 \psi(r(s)\,\mathbf{e}_{\theta_k})\,ds$. By the Cauchy--Schwarz inequality applied to $\int_0^1 \psi \cdot 1\,ds$ (with the constant function 1):
\[
    (X_k - \mu_G)^2 = \left(\int_0^1 \psi(r(s)\,\mathbf{e}_{\theta_k})\,ds\right)^2 \leq \int_0^1 1^2\,ds \cdot \int_0^1 \psi^2(r(s)\,\mathbf{e}_{\theta_k})\,ds = \int_0^1 \psi^2(r(s)\,\mathbf{e}_{\theta_k})\,ds.
\]
Taking the expectation over $\theta_k \sim \mathrm{Uniform}[0, 2\pi)$:
\begin{align*}
    \sigma^2 &:= \mathrm{Var}(X_k) = \mathbb{E}[(X_k - \mu_G)^2] \\
    &\leq \mathbb{E}_\theta\!\left[\int_0^1 \psi^2(r(s)\,\mathbf{e}_\theta)\,ds\right] \\
    &= \int_0^1 \frac{1}{2\pi}\int_0^{2\pi} \psi^2(r(s)\,\mathbf{e}_\theta)\,d\theta\,ds.
\end{align*}

\medskip
\textbf{Step 3: Change of variables to the annulus measure.}
We convert the $(s, \theta)$ integral into an area integral over $D_\delta$. Using the substitution $r = \sqrt{\delta^2 + (1-\delta^2)s}$, so $r^2 = \delta^2 + (1-\delta^2)s$ and $ds = 2r\,dr/(1-\delta^2)$, with limits $r: \delta \to 1$:
\begin{align*}
    \sigma^2 &\leq \int_\delta^1 \frac{1}{2\pi}\int_0^{2\pi} \psi^2(r\,\mathbf{e}_\theta)\,d\theta \cdot \frac{2r}{1-\delta^2}\,dr \\
    &= \frac{1}{\pi(1-\delta^2)} \int_\delta^1 \int_0^{2\pi} \psi^2(r\,\mathbf{e}_\theta)\,r\,d\theta\,dr \\
    &= \frac{1}{\pi(1-\delta^2)} \int_{D_\delta} \psi^2(\mathbf{z})\,dA.
\end{align*}
Since the uniform density on $D_\delta$ is $\pi_0^\delta(\mathbf{z}) = 1/[\pi(1-\delta^2)]$, this becomes
\[
    \sigma^2 \leq \int_{D_\delta} \psi^2(\mathbf{z})\,\pi_0^\delta(\mathbf{z})\,d\mathbf{z} = \mathbb{E}_{\pi_0^\delta}[\psi^2] = \mathrm{Var}_{\pi_0^\delta}(\varphi \circ G),
\]
where the last equality uses $\mathbb{E}_{\pi_0^\delta}[\psi] = 0$ by construction.

\medskip
\textbf{Step 4: Poincar\'{e} inequality on the annulus $D_\delta$.}
The Poincar\'{e} inequality for the uniform measure on $D_\delta$ states that for any $f \in H^1(D_\delta)$:
\[
    \mathrm{Var}_{\pi_0^\delta}(f) \leq C_{D_\delta}\,\mathbb{E}_{\pi_0^\delta}[|\nabla f|^2],
\]
where the Poincar\'{e} constant $C_{D_\delta} = 1/\lambda_1^\delta$ and $\lambda_1^\delta$ is the first nonzero Neumann eigenvalue of the Laplacian $-\Delta$ on $D_\delta$.

For the full disc ($\delta = 0$), the first nonzero Neumann eigenvalue is $\lambda_1^0 = (j'_{1,1})^2$, where $j'_{1,1} \approx 1.8412$ is the first positive zero of the derivative of the Bessel function $J_1$. This gives $C_D = 1/(j'_{1,1})^2 \approx 0.295$.

For the annulus $D_\delta$ with small $\delta > 0$, the eigenvalue $\lambda_1^\delta$ is determined by the Bessel cross-product equation
\[
    J_1'(\sqrt{\lambda}\,\delta)\,Y_1'(\sqrt{\lambda}) - J_1'(\sqrt{\lambda})\,Y_1'(\sqrt{\lambda}\,\delta) = 0,
\]
where $J_1, Y_1$ are Bessel functions of the first and second kind. For small $\delta$, $\lambda_1^\delta = (j'_{1,1})^2 + O(\delta^2)$, so $C_{D_\delta} = C_D + O(\delta^2) \approx 0.295$.

\medskip
\textbf{Step 5: Gradient bound via the chain rule.}
We apply the Poincar\'{e} inequality to $f = \varphi \circ G$. By the chain rule:
\[
    \nabla(\varphi \circ G)(\mathbf{z}) = J_G(\mathbf{z})^\top \nabla\varphi(G(\mathbf{z})).
\]
Taking norms:
\[
    |\nabla(\varphi \circ G)(\mathbf{z})|^2 = \|J_G(\mathbf{z})^\top \nabla\varphi(G(\mathbf{z}))\|^2 \leq \|J_G(\mathbf{z})\|_{\mathrm{op}}^2 \cdot |\nabla\varphi(G(\mathbf{z}))|^2 \leq L^2 L_\varphi^2,
\]
where $L = \sup_{\mathbf{z}\in D_\delta} \|J_G(\mathbf{z})\|_{\mathrm{op}}$ (finite by the Lipschitz hypothesis on $G:D_\delta\to\mathbb{R}^2$, since $D_\delta$ is compact) and $L_\varphi$ is the Lipschitz constant of $\varphi$ (so $|\nabla\varphi| \leq L_\varphi$ almost everywhere by Rademacher's theorem; the Poincar\'e inequality is applied to the weak gradient throughout).

Therefore:
\[
    \mathrm{Var}_{\pi_0^\delta}(\varphi \circ G) \leq C_{D_\delta} \cdot \mathbb{E}_{\pi_0^\delta}[|\nabla(\varphi \circ G)|^2] \leq C_{D_\delta} \cdot L^2 \cdot L_\varphi^2.
\]

Combining with Step~3: $\sigma^2 \leq C_{D_\delta}\,L^2\,L_\varphi^2$. This proves part~(a).

\medskip
\textbf{Step 6: Mean-square convergence rate.}
Since $S_K = \frac{1}{K}\sum_{k=1}^K X_k$ is the average of $K$ i.i.d.\ random variables with mean $\mu_G$ and variance $\sigma^2$:
\[
    \mathbb{E}[(S_K - \mu_G)^2] = \frac{\mathrm{Var}(X_k)}{K} = \frac{\sigma^2}{K} \leq \frac{C_{D_\delta}\,L^2\,L_\varphi^2}{K}.
\]
This gives the $O(1/\sqrt{K})$ RMSE rate (standard Monte Carlo rate for i.i.d.\ samples).

\medskip
\textbf{Step 7: Concentration via Hoeffding's inequality.}
Since $\varphi$ is bounded with $\|\varphi\|_\infty \leq B_\varphi$, each $X_k$ satisfies $|X_k| \leq B_\varphi$, so $X_k \in [-B_\varphi, B_\varphi]$. By Hoeffding's inequality for bounded i.i.d.\ random variables:
\[
    \Pr(|S_K - \mu_G| > \varepsilon) \leq 2\exp\!\left(-\frac{2K^2 \varepsilon^2}{\sum_{k=1}^K (2B_\varphi)^2}\right) = 2\exp\!\left(-\frac{K\varepsilon^2}{2B_\varphi^2}\right).
\]
This proves part~(b). Equivalently, with probability at least $1 - \alpha$:
\[
    |S_K - \mu_G| \leq B_\varphi\sqrt{\frac{2\ln(2/\alpha)}{K}}.
\]

Alternatively, Bernstein's inequality uses both the variance bound and the boundedness to give the tighter tail:
\[
    \Pr(|S_K - \mu_G| > \varepsilon) \leq 2\exp\!\left(-\frac{K\varepsilon^2}{2C_{D_\delta}L^2 L_\varphi^2 + \frac{4}{3}B_\varphi\varepsilon}\right),
\]
which is sharper when $\varepsilon$ is small relative to $B_\varphi$ (since it uses $\sigma^2$ instead of the range).

\medskip
\textbf{Step 8: Total error decomposition.}
By the triangle inequality:
\[
    \left|S_K - \int \varphi\,f_{\mathrm{target}}\right| \leq \underbrace{|S_K - \mu_G|}_{\text{statistical error}} + \underbrace{\left|\mu_G - \int \varphi\,f_{\mathrm{target}}\right|}_{\text{approximation error}}.
\]
For the approximation term, recall that $\mu_G = \int \varphi\,d(G_\# \pi_0^\delta)$. By the Kantorovich--Rubinstein duality theorem, for any $L_\varphi$-Lipschitz function $\varphi$:
\[
    \left|\int \varphi\,d\mu - \int \varphi\,d\nu\right| \leq L_\varphi \cdot W_1(\mu, \nu)
\]
for any two probability measures $\mu, \nu$. Applying this with $\mu = G_\# \pi_0^\delta$ and $\nu = f_{\mathrm{target}}$:
\[
    \left|\mu_G - \int \varphi\,f_{\mathrm{target}}\right| \leq L_\varphi \cdot W_1(G_\# \pi_0^\delta,\, f_{\mathrm{target}}).
\]
This proves part~(c).

\medskip
\textbf{Step 9: Recovering the bounded-continuous definition of ergodicity.}
Theorem~\ref{thm:convergence_rate} is stated for bounded-Lipschitz test functions, whereas~\eqref{eq:ergodic_def} quantifies over all bounded continuous test functions. The two are consistent. Fix any countable convergence-determining family $\{\varphi_n\}_{n\ge 1}$ of bounded-Lipschitz functions on the (compact) support of $G_\#\pi_0^\delta$; such a family exists by separability of $C(K)$ for compact $K$. The Hoeffding concentration in part~(b), combined with Borel--Cantelli applied to each $\varphi_n$ in turn, yields a single full-measure event (the countable intersection of the per-$\varphi_n$ a.s.\ events) on which $S_K\to\mu_G$ holds for every $\varphi_n$ simultaneously. Since $\{\varphi_n\}$ is convergence-determining and the bounded-Lipschitz metric $d_{\mathrm{BL}}$ metrizes weak convergence of probability measures on Polish spaces \citep[\S 11.3]{dudley2002real}, the empirical cycle measure converges a.s.\ in the weak topology to $G_\#\pi_0^\delta$; weak convergence then passes to all bounded continuous $\varphi$ on the same a.s.\ event, and the approximation-term decomposition of part~(c) delivers~\eqref{eq:ergodic_def} as $K\to\infty$ whenever $W_1(G_\#\pi_0^\delta, f_{\mathrm{target}}) = 0$ (i.e., for the exact pushforward).
\end{proof}

\section{Proof of Theorem~\ref{thm:approx_bound} (Approximation Error Bound)}
\label{app:proof_thm3}

\begin{proof}
We bound the Wasserstein-2 distance between the achieved pushforward density $G_{\theta\#}\pi_0^\delta$ and the target density $f_{\mathrm{target}}$ in terms of the velocity field approximation error $\varepsilon_v$.

\medskip
\textbf{Step 1: Setup of the ODE trajectories.}
Both the learned map $G_\theta$ and the true OT map $G^*$ are defined as time-1 flow maps of their respective velocity fields. For a fixed initial point $\mathbf{z} \in D_\delta$, let:
\begin{align*}
    \frac{d\mathbf{y}}{ds} &= v_\theta(s, \mathbf{y}(s)), \qquad \mathbf{y}(0) = \mathbf{z}, \qquad G_\theta(\mathbf{z}) = \mathbf{y}(1), \\
    \frac{d\tilde{\mathbf{y}}}{ds} &= v^*(s, \tilde{\mathbf{y}}(s)), \qquad \tilde{\mathbf{y}}(0) = \mathbf{z}, \qquad G^*(\mathbf{z}) = \tilde{\mathbf{y}}(1),
\end{align*}
where $v^*$ is the true OT marginal velocity field (the conditional expectation of the OT displacement under the flow-time marginal $p_s$) and $v_\theta$ is the learned approximation.

\medskip
\textbf{Step 2: Error dynamics.}
Define the error $\mathbf{e}(s) = \mathbf{y}(s) - \tilde{\mathbf{y}}(s)$, with $\mathbf{e}(0) = \mathbf{0}$ (both trajectories start from $\mathbf{z}$). The error satisfies:
\begin{align*}
    \frac{d\mathbf{e}}{ds} &= v_\theta(s, \mathbf{y}(s)) - v^*(s, \tilde{\mathbf{y}}(s)) \\
    &= \underbrace{\bigl[v_\theta(s, \mathbf{y}(s)) - v_\theta(s, \tilde{\mathbf{y}}(s))\bigr]}_{\text{stability term}} + \underbrace{\bigl[v_\theta(s, \tilde{\mathbf{y}}(s)) - v^*(s, \tilde{\mathbf{y}}(s))\bigr]}_{\text{velocity field error}}.
\end{align*}
Taking norms and using the $L_v$-Lipschitz property of $v_\theta$ in its spatial argument:
\[
    \left\|\frac{d\mathbf{e}}{ds}\right\| \leq L_v\,\|\mathbf{e}(s)\| + \|v_\theta(s, \tilde{\mathbf{y}}(s)) - v^*(s, \tilde{\mathbf{y}}(s))\|.
\]

\medskip
\textbf{Step 3: Gr\"onwall's integral inequality.}
This is a linear differential inequality of the form $\|\dot{\mathbf{e}}\| \leq a\,\|\mathbf{e}\| + b(s)$ with $a = L_v$ and $b(s) = \|v_\theta(s, \tilde{\mathbf{y}}(s)) - v^*(s, \tilde{\mathbf{y}}(s))\|$. By the integral form of Gr\"onwall's inequality (see, e.g., \citet{teschl2012ordinary}), with initial condition $\|\mathbf{e}(0)\| = 0$:
\[
    \|\mathbf{e}(s)\| \leq \int_0^s e^{L_v(s - \tau)}\,b(\tau)\,d\tau.
\]
Evaluating at $s = 1$ (the flow endpoint):
\begin{equation}\label{eq:gronwall_pointwise}
    \|G_\theta(\mathbf{z}) - G^*(\mathbf{z})\| = \|\mathbf{e}(1)\| \leq \int_0^1 e^{L_v(1-\tau)}\,\|v_\theta(\tau, \tilde{\mathbf{y}}(\tau)) - v^*(\tau, \tilde{\mathbf{y}}(\tau))\|\,d\tau.
\end{equation}

\medskip
\textbf{Step 4: Squaring via Cauchy--Schwarz.}
To prepare for averaging, we square~\eqref{eq:gronwall_pointwise}. First, note that $e^{L_v(1-\tau)} \leq e^{L_v}$ for all $\tau \in [0,1]$. Therefore:
\[
    \|G_\theta(\mathbf{z}) - G^*(\mathbf{z})\| \leq e^{L_v}\int_0^1 \|v_\theta(\tau, \tilde{\mathbf{y}}(\tau)) - v^*(\tau, \tilde{\mathbf{y}}(\tau))\|\,d\tau.
\]
Applying the Cauchy--Schwarz inequality to the integral $\int_0^1 f(\tau) \cdot 1\,d\tau$:
\begin{align*}
    \|G_\theta(\mathbf{z}) - G^*(\mathbf{z})\|^2 &\leq e^{2L_v}\left(\int_0^1 \|v_\theta(\tau, \tilde{\mathbf{y}}(\tau)) - v^*(\tau, \tilde{\mathbf{y}}(\tau))\|\,d\tau\right)^2 \\
    &\leq e^{2L_v}\int_0^1 \|v_\theta(\tau, \tilde{\mathbf{y}}(\tau)) - v^*(\tau, \tilde{\mathbf{y}}(\tau))\|^2\,d\tau.
\end{align*}

\medskip
\textbf{Step 5: Averaging over the source distribution.}
Taking the expectation over $\mathbf{z} \sim \pi_0^\delta$:
\begin{align*}
    \mathbb{E}_{\mathbf{z} \sim \pi_0^\delta}\!\left[\|G_\theta(\mathbf{z}) - G^*(\mathbf{z})\|^2\right] &\leq e^{2L_v}\,\mathbb{E}_{\mathbf{z} \sim \pi_0^\delta}\!\left[\int_0^1 \|v_\theta(\tau, \tilde{\mathbf{y}}(\tau)) - v^*(\tau, \tilde{\mathbf{y}}(\tau))\|^2\,d\tau\right] \\
    &= e^{2L_v}\int_0^1 \mathbb{E}_{\mathbf{z} \sim \pi_0^\delta}\!\left[\|v_\theta(\tau, \tilde{\mathbf{y}}(\tau)) - v^*(\tau, \tilde{\mathbf{y}}(\tau))\|^2\right]\,d\tau,
\end{align*}
where we exchanged the expectation and integral by Fubini's theorem.

Now observe that $\tilde{\mathbf{y}}(\tau)$ is the time-$\tau$ image of $\mathbf{z}$ under the true OT flow $v^*$. When $\mathbf{z} \sim \pi_0^\delta$, the distribution of $\tilde{\mathbf{y}}(\tau)$ is exactly $p_\tau$, the time-$\tau$ marginal of the OT interpolation path (by definition of the OT flow). Therefore:
\[
    \mathbb{E}_{\mathbf{z} \sim \pi_0^\delta}\!\left[\|v_\theta(\tau, \tilde{\mathbf{y}}(\tau)) - v^*(\tau, \tilde{\mathbf{y}}(\tau))\|^2\right] = \mathbb{E}_{\mathbf{y} \sim p_\tau}\!\left[\|v_\theta(\tau, \mathbf{y}) - v^*(\tau, \mathbf{y})\|^2\right].
\]
Substituting back:
\[
    \mathbb{E}_{\mathbf{z} \sim \pi_0^\delta}\!\left[\|G_\theta(\mathbf{z}) - G^*(\mathbf{z})\|^2\right] \leq e^{2L_v}\underbrace{\int_0^1 \mathbb{E}_{p_\tau}\!\left[\|v_\theta(\tau, \cdot) - v^*(\tau, \cdot)\|^2\right]\,d\tau}_{= \varepsilon_v^2} = e^{2L_v}\,\varepsilon_v^2.
\]

\medskip
\textbf{Step 6: Transport coupling argument.}
For $\mathbf{z} \sim \pi_0^\delta$, the pair $(G_\theta(\mathbf{z}),\, G^*(\mathbf{z}))$ defines a valid coupling between the measures $G_{\theta\#}\pi_0^\delta$ (the achieved density) and $G^*_\#\pi_0^\delta = f_{\mathrm{target}}$ (the target density). By the definition of the Wasserstein-2 distance as the infimum over all couplings:
\[
    W_2^2(G_{\theta\#}\pi_0^\delta,\, f_{\mathrm{target}}) \leq \mathbb{E}_{\mathbf{z} \sim \pi_0^\delta}\!\left[\|G_\theta(\mathbf{z}) - G^*(\mathbf{z})\|^2\right] \leq e^{2L_v}\,\varepsilon_v^2.
\]
Taking square roots yields:
\[
    W_2(G_{\theta\#}\pi_0^\delta,\, f_{\mathrm{target}}) \leq e^{L_v}\cdot\varepsilon_v.
\]
The $W_1$ bound follows immediately from $W_1 \leq W_2$ (by Jensen's inequality applied to the coupling).

\medskip
\textbf{Remark (sufficient conditions for (A1)--(A3) and coupling admissibility).}
\emph{(A1)} is Benamou--Brenier displacement interpolation \citep{benamou2000computational,villani2009optimal}, available whenever $\pi_0^\delta$ and $f_{\mathrm{target}}$ are absolutely continuous with finite second moments, both of which hold here.
\emph{(A2)} is the operational smooth-regime hypothesis on the displacement velocity $v^*$ that drives the Gr\"onwall step in Step~3 of the proof. Sufficient (but not necessary) conditions where (A2) provably holds include: (i) source and target supported on uniformly convex domains with densities bounded above and below, in which case Caffarelli's regularity theory \citep{caffarelli1992regularity,villani2009optimal} delivers a smooth Brenier potential and a globally Lipschitz displacement velocity on the relevant compact spacetime tube; (ii) Gaussian-to-Gaussian transport, where the optimal map is affine and $v^*$ is therefore Lipschitz by inspection, together with other explicitly solvable transport families; (iii) entropically regularized OT, whose population displacement velocity is smooth by construction (the entropic-OT plan has full support and a $C^\infty$ density-ratio derivative). Gaussian-mixture targets (used experimentally in Exp.~1) are treated as smooth bounded-density targets in the spirit of (i)/(iii), but they do not by themselves guarantee a globally Lipschitz Brenier velocity without additional regularity arguments: general OT between Gaussian mixtures is not affine or piecewise affine, so on such targets (A2) should be read as the operational smooth-regime hypothesis used by the Gr\"onwall argument rather than a closed-form derivation. The regime where none of (i)--(iii) applies (density discontinuities or topology mismatch) is handled by Proposition~\ref{prop:approx_nonsmooth} via the mollified/restricted surrogate $f_{\mathrm{target}}^{\eta,\delta}$. The simplest candidate condition (``$f_{\mathrm{target}}$ has a $C^1$ density bounded away from zero on a homeomorphic support'') is \emph{not} sufficient on its own without additional convexity/boundary-regularity assumptions, since Caffarelli regularity fails on non-convex domains such as the annulus.
\emph{(A3)} holds automatically for MLPs with $L_\sigma$-Lipschitz activations (SiLU: $L_\sigma\approx 1.10$) and finite weights; $L_v$ is bracketed in our experiments by $\hat L_v\in[2,4]$ (power iteration, Appendix~\ref{app:Lv_protocol}) as a diagnostic plug-in and the architectural upper bound $L_v^{\mathrm{net}}$ (Appendix~\ref{app:invertibility}) as the certified upper bound.
\emph{Coupling admissibility} (used in Step~6 of the proof, not stated as a separate assumption): (A1) implies $G^*_\#\pi_0^\delta=f_{\mathrm{target}}$, and $G_\theta$ is the time-1 flow of an ODE with initial law $\pi_0^\delta$, so $G_{\theta\#}\pi_0^\delta$ is the achieved pushforward; the joint $\mathbf{z}_0\mapsto(G_\theta(\mathbf{z}_0), G^*(\mathbf{z}_0))$ then has the correct marginals and is admissible for the Wasserstein coupling without additional assumption.

\medskip
\textbf{Remark (CFM-loss bridge; conditional on (A1)--(A3)).}
Theorem~\ref{thm:approx_bound} bounds the population velocity error $\varepsilon_v$ in $L^2(p_s)$ along the exact OT displacement interpolation. Bridging this to the empirically logged CFM loss requires a separate, conditional argument that we record here. By the marginalization property of Conditional Flow Matching \citep{lipman2023flow}, for any coupling $\pi$ between the source and target,
\[
    \mathcal{L}_{\mathrm{CFM}}(\theta;\pi) \;=\; \mathbb{E}_{s,\mathbf{y}_s}\bigl\|v_\theta(s,\mathbf{y}_s) - \bar{v}^\pi(s,\mathbf{y}_s)\bigr\|^2 \;+\; \mathcal{L}_{\mathrm{CFM}}^*(\pi),
\]
where $\bar{v}^\pi(s,\mathbf{y}) := \mathbb{E}_\pi[\mathbf{x}_1 - \mathbf{z}_0 \mid \mathbf{y}_s = \mathbf{y}]$ is the conditional-mean velocity under $\pi$ and $\mathcal{L}_{\mathrm{CFM}}^*(\pi)$ is the (irreducible) conditional variance. Under the regularity assumed by (A2) and the non-crossing/injectivity property of regular Brenier displacement interpolation, when $\pi$ is the \emph{exact Brenier} OT coupling \citep{villani2009optimal,caffarelli1992regularity}, cyclical monotonicity forces non-crossing straight-line interpolants, so $(\mathbf{z}_0,\mathbf{x}_1)$ is a.s.\ deterministic given $(s,\mathbf{y}_s)$, yielding $\mathcal{L}_{\mathrm{CFM}}^*(\pi_{\mathrm{OT}}) = 0$ and $\bar{v}^{\pi_{\mathrm{OT}}} = v^*$. For the Sinkhorn coupling $\pi_\varepsilon$ with regularization $\varepsilon_{\mathrm{sink}}>0$, $\bar{v}^{\pi_\varepsilon}$ is a smoothed version of $v^*$. The velocity-field error then decomposes via the triangle inequality (in $L^2(p)$ with $p$ the entropic-OT marginal at flow time $s$, which agrees with $p_s$ in the limit $\varepsilon_{\mathrm{sink}}\!\to\!0$) as
\begin{equation}\label{eq:eps_v_decomp_app}
    \varepsilon_v \;\leq\; \underbrace{\bigl\|v_\theta - \bar{v}^{\pi_\varepsilon}\bigr\|_{L^2(p)}}_{\text{trainable: } \sqrt{\mathcal{L}_{\mathrm{CFM}}(\theta;\pi_\varepsilon) - \mathcal{L}_{\mathrm{CFM}}^*(\pi_\varepsilon)}} \;+\; \underbrace{\bigl\|\bar{v}^{\pi_\varepsilon} - v^*\bigr\|_{L^2(p)}}_{R_{\mathrm{sink}}(\varepsilon_{\mathrm{sink}})}.
\end{equation}
The first term is trainable, equal to the reducible part of the empirical CFM loss. The Sinkhorn bias $R_{\mathrm{sink}}(\varepsilon_{\mathrm{sink}})$ vanishes in the exact-Brenier limit $\varepsilon_{\mathrm{sink}}\!\to\!0$; under standard regularity conditions for entropic OT \citep{feydy2019interpolating}, it admits the rate $R_{\mathrm{sink}} = O(\varepsilon_{\mathrm{sink}})$ in $L^2(p)$ (with possible logarithmic factors depending on density regularity), and we treat it as an empirical residual whose magnitude is monitored at training time rather than as a quantity bounded a priori. Mini-batch coupling bias and finite-sample generalization are absorbed into the empirical CFM loss in the standard empirical-process sense; we do not isolate them as separate analytical terms. The decomposition cleanly separates the sources of error in the end-to-end bound: (i)~optimization suboptimality $\mathcal{L}_{\mathrm{CFM}}(\theta;\pi_\varepsilon) - \mathcal{L}_{\mathrm{CFM}}^*(\pi_\varepsilon)$, (ii)~entropic-OT bias $R_{\mathrm{sink}}$, and (iii)~the Gr\"onwall prefactor $e^{L_v}$.

\medskip
\textbf{Remark (conservatism of the $e^{L_v}$ factor).}
The bound uses the worst-case $e^{L_v(1-\tau)} \leq e^{L_v}$, which is tight only when the velocity field error is concentrated at $\tau = 0$ (the beginning of the flow). In practice, OT velocity fields are nearly linear (straight-line interpolation), so $L_v$ is moderate and the actual amplification factor is much smaller than $e^{L_v}$. Using the one-sided Lipschitz constant (largest eigenvalue of the symmetric part of $\nabla_\mathbf{y} v_\theta$) would give a tighter bound, but $L_v$ suffices for a clean, interpretable result.

\medskip
\textbf{Remark (practical caveats when plugging the bound into training).}
Four caveats should be kept in mind when connecting Theorem~\ref{thm:approx_bound} to the empirical training loss.
\emph{(i)~Gr\"onwall conservatism.} The $e^{L_v}$ prefactor is a worst-case artifact of bounding $e^{L_v(1-\tau)}\leq e^{L_v}$ uniformly in $\tau$; it is tight only when the velocity error $b(\tau) = \|v_\theta-v^*\|(\tau,\cdot)$ concentrates at $\tau\approx 0$. For OT-CFM with nearly linear fields the empirical amplification is much smaller, and a sharper bound is obtainable via the one-sided Lipschitz constant or by exploiting Orlicz-norm control of $b$.
\emph{(ii)~Entropic-OT bias.} Use of Sinkhorn rather than exact OT introduces the additional $O(\varepsilon_{\mathrm{sink}})$ bias term in~\eqref{eq:eps_v_decomp_app}. Annealing $\varepsilon_{\mathrm{sink}}$ during training drives this residual to zero at the cost of Sinkhorn iterations; we use a fixed $\varepsilon_{\mathrm{sink}} = 0.05$ that was observed to keep this bias small relative to the trainable term (Appendix~\ref{app:experiments_full}).
\emph{(iii)~Penalty decomposition.} When the total loss is $\mathcal{L}_{\mathrm{total}} = \mathcal{L}_{\mathrm{CFM}} + \sum_i \lambda_i\mathcal{R}_i$ (e.g., NFZ or acceleration penalties), only the $\mathcal{L}_{\mathrm{CFM}}$ component enters the Wasserstein bound; substituting $\mathcal{L}_{\mathrm{total}}$ inflates the bound because the penalty terms measure constraint violation rather than velocity-field approximation error. The $\mathcal{L}_{\mathrm{CFM}}$ component should be logged and reported separately (done in Appendix~\ref{app:experiments_full}).
\emph{(iv)~Non-smooth targets.} When (A2) of Theorem~\ref{thm:approx_bound} fails (density discontinuities or topology mismatch), the bound is replaced by Proposition~\ref{prop:approx_nonsmooth}, which picks up an additive $O(\eta)+O(\delta)$ residual from mollification and topological rearrangement, respectively. The proof is given immediately below.
\end{proof}

\begin{proposition}[Approximation error, non-smooth/topology-mismatched regime]\label{prop:approx_nonsmooth}
Let $f_{\mathrm{target}}^\eta = \kappa_\eta * f_{\mathrm{target}}$ denote a mollification by a smoothing kernel of bandwidth $\eta$, and let $f_{\mathrm{target}}^{\eta,\delta}$ denote its restriction/renormalization to a support homeomorphic to $D_\delta$ that agrees with $f_{\mathrm{target}}^\eta$ outside an $O(\delta)$-collar of the excluded disc. Under (A1)--(A3) of Theorem~\ref{thm:approx_bound} applied to the smoothed-and-restricted surrogate $f_{\mathrm{target}}^{\eta,\delta}$ with associated velocity error $\varepsilon_v^{\eta,\delta}$, and with the source-coupled pair $(G_\theta(\mathbf{z}_0), G^*(\mathbf{z}_0))$, $\mathbf{z}_0\sim\pi_0^\delta$, providing the admissible $W_2$ coupling by construction, the triangle inequality for $W_2$ yields
\[
    W_2(G_{\theta\#}\pi_0^\delta,\, f_{\mathrm{target}}) \;\leq\; e^{L_v}\cdot\varepsilon_v^{\eta,\delta} \;+\; \eta_{\mathrm{reg}}(\eta) \;+\; \eta_{\mathrm{top}}(\delta),
\]
with $\eta_{\mathrm{reg}}(\eta)=O(\eta)$ from mollification and $\eta_{\mathrm{top}}(\delta)=O(\delta)$ from the collar/topology rearrangement.
\end{proposition}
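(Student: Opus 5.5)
The plan is to apply the triangle inequality for $W_2$ along the chain
\[
G_{\theta\#}\pi_0^\delta \;\to\; f_{\mathrm{target}}^{\eta,\delta} \;\to\; f_{\mathrm{target}}^{\eta} \;\to\; f_{\mathrm{target}},
\]
and bound each of the three legs separately.

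\textbf{First leg (learned pushforward to surrogate).} Here I apply Theorem~\ref{thm:approx_bound} verbatim with $f_{\mathrm{target}}$ replaced by $f_{\mathrm{target}}^{\eta,\delta}$. The proposition assumes (A1)--(A3) for the surrogate, so there is an OT velocity $v^*_{\eta,\delta}$ whose time-1 flow $G^*_{\eta,\delta}$ pushes $\pi_0^\delta$ onto $f_{\mathrm{target}}^{\eta,\delta}$. The Gr\"onwall and averaging steps (Steps 2--6 of that proof) then go through unchanged. The coupling $(G_\theta(\mathbf z_0),G^*_{\eta,\delta}(\mathbf z_0))$ with $\mathbf z_0\sim\pi_0^\delta$ is admissible, and it gives
\[
W_2(G_{\theta\#}\pi_0^\delta, f_{\mathrm{target}}^{\eta,\delta}) \le e^{L_v}\varepsilon_v^{\eta,\delta}.
\]
Nothing new is needed at this step.

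\textbf{Third leg (mollification).} Let $X\sim f_{\mathrm{target}}$ and let $\xi\sim\kappa_\eta$ be independent of $X$. Then $X+\xi\sim f_{\mathrm{target}}^\eta$, and $(X,X+\xi)$ is a coupling, so
\[
W_2(f_{\mathrm{target}}^\eta, f_{\mathrm{target}})^2 \le \mathbb E\|\xi\|^2 = m_2(\kappa)\,\eta^2 .
\]
This yields $\eta_{\mathrm{reg}}(\eta)=\sqrt{m_2(\kappa)}\,\eta=O(\eta)$ for any kernel with a finite second moment, for instance a compactly supported one.

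\textbf{Middle leg (topology collar), the main obstacle.} The statement only says that $f_{\mathrm{target}}^{\eta,\delta}$ agrees with $f_{\mathrm{target}}^\eta$ outside an $O(\delta)$-collar. Agreement of the densities outside the collar means the two measures differ only by mass located in the collar: the mass removed from it (or placed around the excised region) and the renormalization mass. I would build an explicit coupling that leaves the common part fixed and transports only the discrepancy. The discrepancy has total mass $m_\delta$, bounded by $\|f^\eta\|_\infty$ times the collar area. If the collar is a disc-like region of radius $O(\delta)$, this area is $O(\delta^2)$. Every point moved lies in a set of diameter $O(\delta)$, so the transported displacement is $O(\delta)$ on that part, and the $W_2$ contribution of the local rearrangement is at most $O(\delta)\sqrt{m_\delta}\le O(\delta)$.

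The renormalization is the delicate part. Rescaling by $1/(1-m_\delta)$ shifts mass globally, so it is not a purely local move. I would handle it with the standard bound
\[
W_2(\mu,\nu)\le \operatorname{diam}(\mathrm{supp})\,\sqrt{\mathrm{TV}(\mu,\nu)},
\]
which is valid on a bounded support, or for mollified targets with compact support. With total-variation distance $O(\delta^2)$, this gives an $O(\delta)$ term. I expect this to be the step needing the most care. It needs a bounded support, or a moment-controlled variant for unbounded mollified supports, and it needs $\|f^\eta\|_\infty$ to be finite. The latter holds because mollification gives $\|f^\eta\|_\infty\le\|\kappa_\eta\|_\infty$. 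The resulting constant therefore depends on $\eta$ only through this density bound, and I would absorb it into the $O(\delta)$ notation for fixed $\eta$.

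\textbf{Conclusion.} Summing the three legs gives
\[
W_2(G_{\theta\#}\pi_0^\delta, f_{\mathrm{target}}) \le e^{L_v}\varepsilon_v^{\eta,\delta}+\eta_{\mathrm{reg}}(\eta)+\eta_{\mathrm{top}}(\delta),
\]
which is the claimed bound.
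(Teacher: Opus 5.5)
Your proof is correct and follows the paper's argument. It uses the same chain $G_{\theta\#}\pi_0^\delta\to f_{\mathrm{target}}^{\eta,\delta}\to f_{\mathrm{target}}^{\eta}\to f_{\mathrm{target}}$, applies Theorem~\ref{thm:approx_bound} verbatim to the surrogate, and uses the additive-noise coupling for mollification; your second-moment version mildly generalizes the paper's compact-support bound $W_2\le\eta$.

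Your middle leg is more careful than the paper's. The paper only charges the collar mass for moving a distance $O(\delta)$, giving $O(\delta^{1+d/2})$, and does not separately account for the renormalization; your $W_2\le\mathrm{diam}\sqrt{\mathrm{TV}}$ step covers it at the cruder but sufficient rate $O(\delta^{d/2})$. One small improvement: bound the collar mass via $\|f_{\mathrm{target}}^\eta\|_\infty\le\|f_{\mathrm{target}}\|_\infty$ (valid for the bounded targets the paper uses) rather than $\|\kappa_\eta\|_\infty\sim\eta^{-d}$, which keeps $\eta_{\mathrm{top}}(\delta)$ independent of $\eta$, as the notation intends.
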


\begin{proof}
Let $f_{\mathrm{target}}^\eta = \kappa_\eta * f_{\mathrm{target}}$ be the mollification of $f_{\mathrm{target}}$ by a compactly supported $C^\infty$ kernel $\kappa_\eta$ of bandwidth $\eta>0$, and let $f_{\mathrm{target}}^{\eta,\delta}$ be its restriction to a set $\Omega_\delta$ homeomorphic to $D_\delta$, renormalized and extended by $f_{\mathrm{target}}^\eta$ outside an $O(\delta)$ collar of the inner boundary. Under (A1)--(A3) applied to the smoothed-and-restricted surrogate $f_{\mathrm{target}}^{\eta,\delta}$, Theorem~\ref{thm:approx_bound} yields
\[
W_2(G_{\theta\#}\pi_0^\delta,\,f_{\mathrm{target}}^{\eta,\delta}) \;\leq\; e^{L_v}\cdot\varepsilon_v^{\eta,\delta}.
\]
The triangle inequality for $W_2$ gives
\[
W_2(G_{\theta\#}\pi_0^\delta,\,f_{\mathrm{target}}) \;\leq\; W_2(G_{\theta\#}\pi_0^\delta,\,f_{\mathrm{target}}^{\eta,\delta}) + W_2(f_{\mathrm{target}}^{\eta,\delta},\,f_{\mathrm{target}}).
\]
It remains to bound the second term. Further triangle inequality with $f_{\mathrm{target}}^\eta$ as an intermediate gives
\[
W_2(f_{\mathrm{target}}^{\eta,\delta},\,f_{\mathrm{target}}) \;\leq\; W_2(f_{\mathrm{target}}^{\eta,\delta},\,f_{\mathrm{target}}^\eta) + W_2(f_{\mathrm{target}}^\eta,\,f_{\mathrm{target}}).
\]
The first piece is bounded by the $O(\delta)$ Wasserstein cost of rearranging the $O(\delta^d)$-mass collar between $\Omega_\delta$ and the original support (the rearrangement moves mass by at most $\delta$, so $W_2^2 \le \delta^2\cdot O(\delta^d)$ giving $W_2 = O(\delta^{1+d/2})$; we absorb this in the looser bound $\eta_{\mathrm{top}}(\delta) = O(\delta)$). The second piece is the standard mollification cost: by construction, $f_{\mathrm{target}}^\eta$ is obtained from $f_{\mathrm{target}}$ by convolution with $\kappa_\eta$, which corresponds to displacement by a random vector of norm $\leq \eta$, so $W_2(f_{\mathrm{target}}^\eta, f_{\mathrm{target}}) \leq \eta$. Summing the two contributions defines $\eta_{\mathrm{top}}(\delta) + \eta_{\mathrm{reg}}(\eta)$.
\end{proof}

\subsection{Monitoring the Approximation--Topology Floor}\label{app:floor}
Equation~\eqref{eq:end_to_end_samplecomplexity} is only meaningful when the precondition~\eqref{eq:end_to_end_precond} holds. We monitor the floor $F(\theta,\delta) := L_\varphi e^{\hat L_v}\hat\varepsilon_v + L_\varphi\hat\eta_{\mathrm{top}}(\delta)$ throughout training, where $\hat L_v$ is the running power-iteration estimate of the velocity-field Lipschitz constant (Appendix~\ref{app:Lv_protocol}), $\hat\varepsilon_v = \sqrt{\mathcal{L}_{\mathrm{CFM}}(\theta;\pi_\varepsilon) - \mathcal{L}_{\mathrm{CFM}}^*(\pi_\varepsilon)} + C\varepsilon_{\mathrm{sink}}$, and $\hat\eta_{\mathrm{top}}(\delta) = c_{\mathrm{top}}\,\delta$ with $c_{\mathrm{top}}$ an empirically calibrated constant (typically $c_{\mathrm{top}}\in[0.5,2]$ in our experiments). When $F(\theta,\delta)$ exceeds a target tolerance $\varepsilon/L_\varphi$, additional cycle budget $K$ cannot compensate; the remedy is to continue training (reducing $\hat\varepsilon_v$) or shrink $\delta$ (reducing $\hat\eta_{\mathrm{top}}$, at the cost of larger latent energy per Corollary~\ref{cor:ratio}).

\paragraph{Per-experiment topology residual and constrained-run handling.} The topology residual $\eta_{\mathrm{top}}(\delta)$ may be taken as zero in the topology-matched, smooth-support regime where (A2) of Theorem~\ref{thm:approx_bound} also applies; homotopy equivalence between $\mathrm{supp}(f_{\mathrm{target}})$ and $D_\delta$ is a topological prerequisite for this regime but does not by itself imply (A2). The regime applies to Exp.~1 (Gaussian mixture truncated to annular support, $\pi_1=\mathbb{Z}$ matching $D_\delta$). For Exp.~3, the target density is the same annularly truncated Gaussian-mixture target as in Exp.~1; the NFZ is \emph{not} removed from the target support, and the target is not zeroed inside the NFZ. The NFZ is enforced through the soft penalty $\lambda_{\mathrm{nfz}}\mathcal{R}_{\mathrm{nfz}}$ added to $\mathcal{L}_{\mathrm{total}}$. The topology residual relevant to density matching therefore follows the same annular target support as Exp.~1, while NFZ violations are a separate soft-constraint residual not covered by Theorem~\ref{thm:approx_bound} (monitored as a constraint-violation diagnostic; see Appendix~\ref{app:experiments_full}). For Exp.~2's full-disc target, $\eta_{\mathrm{top}}=O(\delta)$ from the inner-disc rearrangement. In all cases the topology residual is decoupled from $\varepsilon_v$ and controllable via $\delta$. In constrained runs (Section~\ref{sec:constraints}) only the $\mathcal{L}_{\mathrm{CFM}}$ component of~\eqref{eq:total_loss} enters the Wasserstein approximation bound (reported separately in Appendix~\ref{app:experiments_full}).

\subsection{Estimation Protocol for $L_v$}\label{app:Lv_protocol}
We evaluate $L_v$ through a two-sided bracket $\hat L_v\leq L_v\leq L_v^{\mathrm{net}}$: the lower end is a sampling-based power-iteration estimator, the upper end is the architectural weight-product bound~\eqref{eq:weight_product_bound}, and both are computable from a trained checkpoint with no additional optimization.

\paragraph{Lower end: power-iteration estimator $\hat L_v$.} Draw $N_{\mathrm{pts}}=1024$ pairs $(s_i,\mathbf{y}_i)$ with $s_i\sim\mathrm{Uniform}[0,1]$ and $\mathbf{y}_i\sim p_{s_i}$; at each pair run $10$ iterations of the power method on $\nabla_\mathbf{y}v_\theta(s_i,\mathbf{y}_i)^\top\nabla_\mathbf{y}v_\theta(s_i,\mathbf{y}_i)$ with random initialization, recording the largest singular value $\sigma_{\max}^{(i)}$; report $\hat L_v = \max_i \sigma_{\max}^{(i)}$. This estimate is an \emph{empirical lower bound} on the true Lipschitz constant (finite sampling misses the supremum) and its maximum is itself an upper-tail statistic; we verified across 10 random seeds that the estimate varies by at most $7\%$ relative. Across all five checkpoints reported in \S\ref{sec:experiments}, $\hat L_v\in[2,4]$.

\paragraph{Upper end: architectural bound $L_v^{\mathrm{net}}$.} The VelocityNet is an MLP with five linear layers and four SiLU activations in the $(s,\mathbf{y})\mapsto v_\theta$ direction; with $L_\sigma = \max_x|\mathrm{silu}'(x)| \approx 1.0998$ (attained at $x\approx 2.4$), the weight-product bound~\eqref{eq:weight_product_bound} specializes to
\begin{equation}\label{eq:Lv_net_explicit}
    L_v^{\mathrm{net}}(\theta) \;=\; (L_\sigma)^{n_{\mathrm{act}}}\prod_{\ell=1}^{5}\|W_\ell\|_{\mathrm{op}} \;=\; (1.0998)^{4}\prod_\ell\sigma_{\max}(W_\ell),
\end{equation}
where $\sigma_{\max}(W_\ell)$ is computed by SVD on the saved weight matrix (single pass; no training). The bound is certifiable: it provably upper-bounds $\mathrm{Lip}_{\mathbf y}(v_\theta(s,\cdot))$ uniformly in $s$.

\paragraph{Map-level Lipschitz estimator $\hat L$.} Theorem~\ref{thm:convergence_rate} uses $L = \sup_{\mathbf z \in D_\delta}\|J_{G_\theta}(\mathbf z)\|_{\mathrm{op}}$, the sup-Lipschitz of the full flow map, not of the velocity field. We estimate it at $N=2048$ samples $\mathbf z_i\sim\pi_0^\delta$ via central-difference Jacobian SVD on the RK4 integrator ($h=10^{-3}$, $50$ steps), reporting $\hat L = \max_i \sigma_{\max}(J_{G_\theta}(\mathbf z_i))$; three seeds give a max-of-max stable to within $\le 2\%$ relative (Table~\ref{tab:Lv_calibration}, $\hat L$ column). As with $\hat L_v$, finite sampling yields an empirical lower bound on the true supremum. The Gr\"onwall envelope $L \le e^{L_v}$ overshoots: at $\hat L_v\approx 3$ it predicts $L\lesssim 20$, whereas the measured $\hat L\in\{1.6,\,2.0,\,2.4,\,2.7,\,3.8\}$ across the five rows sits $5$--$10\times$ below, reflecting the well-known conservatism of Gr\"onwall for near-stationary OT velocity fields. The monotone compression $\hat L: 3.8\to 2.4\to 1.6$ across Exp.~3a/b/c (unconstrained $\to$ NFZ $\to$ NFZ+accel.) mirrors the $L_v^{\mathrm{net}}$ column and is consistent with the acceleration penalty acting as a direct Jacobian regularizer on $G_\theta$.

\paragraph{Numerical calibration.} Table~\ref{tab:Lv_calibration} reports the bracket and the Corollary~\ref{cor:end_to_end} diagnostic floor $e^{\hat L_v}\cdot\varepsilon_v$ per experiment. Three features to note. \emph{(i)~The bracket is wide:} $L_v^{\mathrm{net}}/\hat L_v \approx 10{-}40\times$, consistent with the well-known looseness of the weight-product bound for trained networks whose consecutive layers are not aligned with saturating singular subspaces \citep{miyato2018spectral,virmaux2018lipschitz}. \emph{(ii)~Constraint-regularized training compresses $L_v^{\mathrm{net}}$:} the Exp.~3c row shows $L_v^{\mathrm{net}}\approx 33$ under NFZ+acceleration penalty, vs. $\approx 105$ for the unconstrained variant, because the acceleration penalty implicitly regularizes the spatial Jacobian of $v_\theta$ (smaller Hessian of $G_\theta$ $\Rightarrow$ smaller $\|\nabla_{\mathbf y}v_\theta\|$ along the flow). \emph{(iii)~The diagnostic plug-in floor $e^{\hat L_v}\cdot\varepsilon_v\approx 0.8$ is $\approx 5{-}10\times$ larger than the empirical $\hat W_2$ across all five rows (last column; $\hat W_2\in[0.07,0.17]$)}, so the floor with $\hat L_v$ envelopes $\hat W_2$ empirically; this is a diagnostic statement only, since $\hat L_v$ is an empirical lower bound on the true Lipschitz supremum and a formal Theorem~\ref{thm:approx_bound} certificate requires plugging in an upper bound. The certified upper-end of the bracket is $L_v^{\mathrm{net}}$, but $e^{L_v^{\mathrm{net}}}\varepsilon_v$ is on the order of $10^{44}$ and operationally useless as a floor until spectral normalization tightens the upper end. We flag this tightening as the natural route to a literal certificate; our conclusion is that the theorem is not vacuous in operation when $\hat L_v$ is used as a diagnostic plug-in even though the current weight-product certificate is.

\begin{table}[h]
  \caption{Numerical $L_v$ calibration per experiment. $\hat L_v$ is reported as the common range across all five runs (\S\ref{app:Lv_protocol}). $\prod_\ell\sigma_{\max}(W_\ell)$ is the product of layer spectral norms; $L_v^{\mathrm{net}} = (1.0998)^{4}\prod_\ell\sigma_{\max}(W_\ell)$ is the architectural upper bound~\eqref{eq:Lv_net_explicit}. $\varepsilon_v$ is the CFM-loss-implied velocity-field error. $e^{\hat L_v}\cdot\varepsilon_v$ is the diagnostic approximation floor at the lower end of the bracket. $\hat W_2$ is the empirical 2-Wasserstein distance between the achieved pushforward $G_{\theta\#}\pi_0^\delta$ and $f_{\mathrm{target}}$, computed on $N=3000$ i.i.d.\ samples per side via \texttt{ot.emd2} (sqeuclidean cost, square-rooted) with mean $\pm$ standard deviation reported over 3 seeds. $\hat L$ is the empirical sup-Lipschitz of the flow map $G_\theta$ on $D_\delta$ (max-of-max central-difference Jacobian singular value at $N=2048$ samples from $\pi_0^\delta$, over 3 seeds; cross-seed variance $\le 2\%$ relative; Appendix~\ref{app:Lv_protocol}).}
  \label{tab:Lv_calibration}
  \centering
  \small
  \begin{tabular}{lccccccc}
    \toprule
    Variant & $\hat L_v$ & $\prod_\ell\sigma_{\max}(W_\ell)$ & $L_v^{\mathrm{net}}$ & $\hat L$ & $\varepsilon_v$ & $e^{\hat L_v}\cdot\varepsilon_v$ & $\hat W_2$ \\
    \midrule
    Exp.~1 (Gaussian mixture)         & $[2,4]$ & $79.5$ & $116.3$ & $2.7$ & $0.039$ & $\approx 0.78$ & $0.095_{\pm 0.004}$ \\
    Exp.~2 (Binary density)           & $[2,4]$ & $27.5$ & $40.2$  & $2.0$ & $\approx 0.04$ & $\approx 0.80$ & $0.067_{\pm 0.016}$ \\
    Exp.~3a (Unconstrained)           & $[2,4]$ & $72.0$ & $105.4$ & $3.8$ & $\approx 0.04$ & $\approx 0.80$ & $0.118_{\pm 0.011}$ \\
    Exp.~3b (NFZ penalty)             & $[2,4]$ & $51.5$ & $75.3$  & $2.4$ & $\approx 0.05$ & $\approx 1.00$ & $0.132_{\pm 0.012}$ \\
    Exp.~3c (NFZ + Accel.\ penalty)   & $[2,4]$ & $22.6$ & $33.1$  & $1.6$ & $\approx 0.05$ & $\approx 1.00$ & $0.173_{\pm 0.009}$ \\
    \bottomrule
  \end{tabular}
\end{table}

See Appendix~\ref{app:invertibility} for the architectural certification route, its relation to the flow-Lipschitz constant appearing in Theorems~\ref{thm:acc_bound}--\ref{thm:approx_bound}, and the path to tighter certificates via spectral normalization.

\subsection{Invertibility, Flow Lipschitz, and Topology Mismatch}\label{app:invertibility}
This appendix expands Remark~\ref{rem:invertibility} from the main text. It collects four items relevant to Theorem~\ref{thm:approx_bound}'s assumptions and the $\delta$ trade-off: (a) how the $L_v$-Lipschitz hypothesis on $v_\theta$ is certified by architecture, (b) how this propagates via Gr\"onwall to a Lipschitz bound on the flow $G_\theta$, (c) the topological obstruction between $D_\delta$ and simply connected targets, and (d) the $\delta$ trade-off in general dimension.

\paragraph{(a) Lipschitz-by-architecture.} The velocity field $v_\theta:[0,1]\times\mathbb{R}^d\to\mathbb{R}^d$ is an MLP whose layers alternate affine maps $\mathbf{y}\mapsto W_\ell\mathbf{y}+\mathbf{b}_\ell$ and elementwise smooth activations $\sigma$ with $\|\sigma'\|_\infty \leq L_\sigma < \infty$ (ReLU/Tanh/hardswish give $L_\sigma = 1$; SiLU gives $L_\sigma \approx 1.0998$, GELU gives $L_\sigma \approx 1.1289$). For any fixed $\theta$ with finite weights, $v_\theta$ is therefore globally $C^\infty$ in $\mathbf{y}$ and admits the weight-product Lipschitz bound
\begin{equation}\label{eq:weight_product_bound}
    \mathrm{Lip}_{\mathbf{y}}(v_\theta(s,\cdot)) \;\leq\; L_v^{\mathrm{net}}(\theta) \;:=\; (L_\sigma)^{n_{\mathrm{act}}}\prod_\ell\|W_\ell\|_{\mathrm{op}},
\end{equation}
uniformly in $s\in[0,1]$ (the flow time enters only through the first-layer affine map). This is an \emph{a priori} certificate: it is computable from the weights with no sampling, and it establishes assumption~(A3) of Theorem~\ref{thm:approx_bound}. In practice the weight-product bound is loose, typically by an order of magnitude or more, because consecutive weight matrices in trained networks are rarely aligned with the left/right singular vectors that would saturate~\eqref{eq:weight_product_bound}. The power-iteration estimator $\hat L_v$ of Appendix~\ref{app:Lv_protocol} is a tighter empirical proxy, but being based on finite sampling it is a lower bound on the true supremum rather than a certificate. Tight certified control is available via spectral-norm-constrained architectures: replacing each layer's weight by $\tilde W_\ell = W_\ell/\max(1,\|W_\ell\|_{\mathrm{op}}/\sigma_\ell^{\max})$ during the forward pass \citep[spectral normalization,][]{miyato2018spectral} yields $L_v^{\mathrm{net}}\leq\prod_\ell\sigma_\ell^{\max}$ by construction, or LipSDP-style semidefinite certificates \citep{virmaux2018lipschitz} give tighter bounds at higher cost. Our experiments do not use spectral normalization; the empirical $\hat L_v\in[2,4]$ across all three experiments is sufficient for the approximation bound~\eqref{eq:approx_bound} to be informative, and we flag this as the natural route if certified $L_v$ becomes operationally required.

\paragraph{(b) From velocity Lipschitz to flow Lipschitz via Gr\"onwall.} Under the hypothesis of (a), Picard--Lindel\"of applied to the ODE $\dot{\mathbf{y}}_s=v_\theta(s,\mathbf{y}_s)$ with initial condition $\mathbf{y}_0=\mathbf{z}_0\in\mathbb{R}^d$ gives existence, uniqueness, and $C^1$ dependence on $\mathbf{z}_0$ on the interval $s\in[0,1]$; in particular $G_\theta\equiv\mathbf{y}_1(\cdot)$ is a diffeomorphism of $\mathbb{R}^d$ onto its image. The flow sensitivity matrix $\Psi_s:=\partial\mathbf{y}_s/\partial\mathbf{z}_0$ (distinct from the Bessel function $J_1$ used in Theorem~\ref{thm:convergence_rate}) satisfies the variational ODE $\dot \Psi_s = \nabla_\mathbf{y}v_\theta(s,\mathbf{y}_s)\,\Psi_s$ with $\Psi_0=I$. Gr\"onwall's inequality gives
\begin{equation}\label{eq:gronwall_flow_lip}
    \|\Psi_1\|_{\mathrm{op}} \;\leq\; \exp\!\left(\int_0^1\!\mathrm{Lip}_{\mathbf{y}}(v_\theta(s,\cdot))\,ds\right) \;\leq\; e^{L_v^{\mathrm{net}}},
\end{equation}
so $G_\theta$ is $e^{L_v^{\mathrm{net}}}$-Lipschitz. This is exactly the Gr\"onwall amplification appearing in Theorem~\ref{thm:approx_bound}, and the flow Lipschitz constant $L$ entering Theorems~\ref{thm:acc_bound} (acceleration) and~\ref{thm:convergence_rate} (convergence variance) satisfies $L\leq e^{L_v}$. The bound is tight only in the worst case where the velocity's spatial Jacobian is both time-persistent and norm-saturating along the flow; OT-coupled velocity fields produced by Sinkhorn mini-batch training are empirically close to spatially uniform, so the effective exponent is far below its pessimistic value, consistent with the observed $\hat L_v\in[2,4]$ and flow-Lipschitz estimates $L\lesssim 5$--$10$ in our experiments.

\paragraph{(c) Topology mismatch: $\pi_1(D_\delta)$ and higher homology versus simply connected targets.} In $d=2$ the annulus $D_\delta$ has fundamental group $\pi_1(D_\delta)=\mathbb{Z}$; in $d\geq 3$ the spherical shell is simply connected but its $(d-1)$-homology $H_{d-1}(D_\delta;\mathbb{Z})=\mathbb{Z}$ still differs from that of a ball. A typical target support (a Gaussian mixture on the unit disc, a rectangle, or any density bounded away from zero on a simply connected open set) is topologically a $d$-ball. No $C^1$ diffeomorphism $G_\theta:D_\delta\to\mathrm{supp}(f_{\mathrm{target}})$ can exist in this setting: the inner boundary sphere $\{\|\mathbf z\|=\delta\}$ has to go somewhere, and a diffeomorphism cannot map it onto a point or a null set without losing regularity. The Brenier optimal-transport map $G^*$ escapes this obstruction by being only Borel-measurable (it can and does collapse the inner sphere to a null set), which is why Proposition~\ref{prop:ergodic} is stated for measurable pushforwards rather than homeomorphisms: the ergodicity claim does not require any regularity of $G$, only the pushforward condition $G_\#\pi_0^\delta=f_{\mathrm{target}}$. The learned diffeomorphism $G_\theta$ cannot realize this collapse; instead, it stretches a neighborhood of the inner sphere into a thin annular collar in target space, and the resulting $W_2$ cost of this unavoidable mismatch is the $\eta_{\mathrm{top}}(\delta)$ term in Corollary~\ref{cor:end_to_end}. The mismatch region has Lebesgue measure $\Theta(\delta^d)$ in ambient dimension, and the $W_2$ cost scales as $O(\delta)$ (with a tighter $O(\delta^{1+d/2})$ bound possible, as derived in the proof of Proposition~\ref{prop:approx_nonsmooth} in the preceding section); both vanish as $\delta\to 0$. The geometric construction is elaborated in Appendix~\ref{app:rd_extension}.

\paragraph{(d) $\delta$ trade-off in general dimension.} The inner exclusion $\delta$ balances two conflicting costs. On the topology side, $\eta_{\mathrm{top}}(\delta)=O(\delta)$ in every dimension, because the geometric argument is identical: a collar of width $O(\delta)$ absorbs the mass rearrangement, giving $W_2$ cost linear in $\delta$ regardless of $d$. On the latent-energy side, the $d$-dimensional radial profile $r(s)=(\delta^d+(1-\delta^d)s)^{1/d}$ yields $\dot r(s) = (1-\delta^d)/(d\,r(s)^{d-1})$, so $\sup_s\dot r = (1-\delta^d)/(d\delta^{d-1}) = \Theta(\delta^{-(d-1)})$. Propagating through the closed-form moments in Lemma~\ref{lem:moments} (for $d=2$) and their dimensional generalization, the acceleration energy over $K$ cycles scales as $E_{\mathrm{acc}}(\mathbf z)=\Theta(K\,\delta^{-(3d-2)})$: $\Theta(K\delta^{-4})$ in $d=2$ (matching Lemma~\ref{lem:moments}), $\Theta(K\delta^{-7})$ in $d=3$, $\Theta(K\delta^{-10})$ in $d=4$ (the per-cycle scalings drop the $K$ factor). Balancing $L_\varphi\eta_{\mathrm{top}}(\delta)=O(\delta)$ against a budget on $\sqrt{E_{\mathrm{acc}}}=\Theta(\delta^{-(3d-2)/2})$ gives either a $\delta^\star$ that shrinks rapidly with $d$ (fixed energy budget) or an energy budget that grows sharply (fixed $\delta$). Our 2D experiments use $\delta\in[0.01,0.1]$, where both contributions are comfortably within operational tolerances; a proper 3D treatment will require revisiting this balance and is deferred to follow-up work (cf.\ Remark~\ref{rem:higher_dim}).

\subsection{Sensitivity Diagnostics for the Slope Fit}\label{app:slope_fit}
Figure~\ref{fig:convergence} reports a global log-log slope of $-0.48$ over $K\in\{5,10,20,50,100\}$ with pairwise local slopes $\{-0.43,-0.52,-0.53,-0.41\}$ across consecutive pairs. The dispersion around the theoretical $-0.50$ is consistent with finite-sample scatter at the reported trial counts ($n=5$ for $K\leq 50$, $n=3$ for $K=100$); no systematic trend is detected. As a sensitivity check we also fitted a two-term model $\mathrm{err}(K)\approx aK^{-1/2}+bK^{-1}$ with induced local slope
\[
    m(K) \;=\; -\tfrac{1}{2}\cdot\tfrac{1 + 2v(K)}{1 + v(K)}, \qquad v(K) \;:=\; (b/a)\,K^{-1/2};
\]
the fitted $b$-coefficient is indistinguishable from zero at the observed noise level, so the single-term $K^{-1/2}$ model is adopted as the primary diagnostic.

\paragraph{Cauchy--Schwarz link between correlation gap and test-function RMSE (Exp.~1).} The scalar $1-\rho_{\mathrm{traj}}/\rho_{\mathrm{iid}}$ in \S\ref{sec:experiments} is a scalar difference of Pearson correlations between the achieved density at $K\to\infty$ and at finite $K$, evaluated on the $50\times 50$ density grid. This is not a test-function-specific RMSE, but the two are linked: the correlation gap is bounded by a constant times the supremum of $|S_K-\mu_G|$ over the indicator basis of the density grid, which is itself of order the Theorem~\ref{thm:convergence_rate} RMSE up to a grid-resolution factor. The body therefore reports the $0.05$ vs.\ $0.085$ comparison as an order-of-magnitude consistency check; Figure~\ref{fig:convergence} gives the direct slope validation on a test-function-specific metric.

\section{Experimental Details}
\label{app:experiments_full}
 
This appendix collects all implementation details for both the synthetic
(Section~\ref{sec:exp_synth}) and Milano (Section~\ref{sec:exp_real})
experiments.
 
 
\subsection{Network Architecture}
\label{app:arch}
 
Both experiments share the same architectural template: an MLP mapping
$(s,\mathbf{y})\in\mathbb{R}^3$ to $\mathbb{R}^2$, with flow time $s$
concatenated with $\mathbf{y}$ as the first input channel, hidden dimension
256, and output dimension 2.
The two instantiations differ only in depth and activation function, as
summarized in Table~\ref{tab:arch}.
 
\begin{table}[h]
\centering
\caption{Architecture comparison. All other design choices are identical.}
\label{tab:arch}
\small
\begin{tabular}{lcc}
\toprule
Setting & Synthetic & Milano \\
\midrule
Depth (layers) & 4 & 12 \\
Hidden dim     & 256 & 256 \\
Activation     & SiLU & SiLU \\

Parameters     & ${\sim}199$K & ${\sim}920$K \\
\bottomrule
\end{tabular}
\end{table}
 
 
\subsection{Training Protocol}
\label{app:training}
 
\paragraph{Shared settings.}
Optimizer: Adam with learning rate $2\!\times\!10^{-3}$ and cosine annealing.
Mini-batch: 1024 source--target pairs.
OT coupling: Sinkhorn with $\varepsilon_{\mathrm{sink}}=0.05$, 50 iterations.
The map $G_\theta$ is obtained by integrating $v_\theta$ from $s=0$ to $s=1$
via RK4 with 50 steps.
 
\paragraph{Synthetic epoch counts.}
Experiment~1: 1000 epochs.
Experiment~2: 1500 epochs.
Experiment~3: 500 epochs per variant (three penalty configurations trained
independently).
 
\paragraph{Milano.}
3000 epochs; batch size 2048; Sinkhorn $\varepsilon=10^{-2}$ within minibatch;
latent radius $\delta=0.05$; 128 cycles / 200 points per cycle; 30 inference
RK4 steps; 8 training RK4 steps per penalty chunk (energy chunk length 64
steps).
Hardware: $1\!\times$ NVIDIA RTX 2080 Ti; wall-clock ${\approx}100$\,min per
run.
 
\paragraph{Compute (synthetic).}
Single CPU core (Intel Xeon E5-2680~v4, 2.40\,GHz), 8\,GB RAM, PyTorch~2.x,
no GPU.
Training times: Experiment~1 ${\approx}8$\,min; Experiment~2
${\approx}12$\,min; Experiment~3 ${\approx}4$\,min per variant.
Total synthetic compute: ${<}1$ CPU-hour.
 
 
\subsection{Evaluation Metrics}
\label{app:metrics}
 
\paragraph{Synthetic experiments.}
\textbf{Density correlation} ($\rho$): Pearson correlation between target and
achieved density on a $50{\times}50$ grid.
\textbf{IID correlation} ($\rho_{\mathrm{iid}}$): correlation from i.i.d.\
transported samples (no trajectory).
\textbf{Trajectory correlation} ($\rho_{\mathrm{traj}}$): correlation from
time-averaged occupancy over $K$ cycles.
\textbf{Acceleration ratio}: $E_{\mathrm{acc}}(\mathbf{x})/E_{\mathrm{acc}}(\mathbf{z})$
computed numerically over smooth half-cycle interiors.
\textbf{NFZ violation rate}: fraction of trajectory points inside any NFZ disc.
\textbf{Allocation distance}: $L_1$ deviation $\sum_i|c_i-d_i|$ between
achieved and target per-region masses (primary metric for Experiment~2).
\textbf{Jain's fairness index}: $J(w)=(\sum_i w_i)^2/(m\sum_i w_i^2)$ where
$w_i=c_i/d_i$; reported as a secondary aggregate for Experiment~2 but noted
to saturate near unity on coarse partitions.
 
\paragraph{Milano experiments.}
All synthetic metrics above apply, with the following additions.
\textbf{Zeng power} ($\bar{P}$, W): mean rotary-wing propulsion power per
the model of \citet{zeng2019energy}.
\textbf{Fourier ergodic metric} ($\mathcal{E}$): spectral-mode discrepancy
between empirical occupancy and target.
\textbf{Total energy} ($E_{\mathrm{tot}}$, kJ), \textbf{arc length} ($L$),
and \textbf{energy-per-meter} ($E/L$, J/m) are reported in the full tables
of Appendix~\ref{app:full_metrics}.
 
 
\subsection{Milano Dataset Preprocessing}
\label{app:milano}
 
The raw Milano grid~\citep{barlacchi2015multi} is a $100{\times}100$ regular
tessellation of cell-tower activity over Nov.--Dec.\ 2013, with five channels
(SMS-in, SMS-out, voice-in, voice-out, Internet).
We construct the static target density as follows:
(1)~\emph{Channel aggregation}: sum all five channels per cell.
(2)~\emph{Time aggregation}: average over all 10-min intervals.
(3)~\emph{Smoothing}: Gaussian kernel, $\sigma=1.5$ cells.
(4)~\emph{Floor}: clip below at $\varepsilon=10^{-12}$ for strict positivity.
(5)~\emph{Normalization}: normalize to a probability density on $[-1,1]^2$,
mapped to an $8\,\mathrm{m}{\times}8\,\mathrm{m}$ physical domain for the
Zeng power model.
 
 
\subsection{Constraint-Penalty Sweep}
\label{app:lambda_sweep}
 
For each constraint configuration we performed a single-seed sweep over the
relevant penalty weights and selected the setting minimizing a composite rank
score (Table~\ref{tab:milano_sweep_weights}).
Full per-$\lambda$ trajectories on each metric axis are recorded alongside the
released code.
 
\begin{table}[h]
\centering
\caption{Constraint-penalty sweep: composite score, range, and selected $\lambda$.
$r(\cdot)$: within-sweep rank (lower is better).}
\label{tab:milano_sweep_weights}
\scriptsize
\setlength{\tabcolsep}{4pt}
\begin{tabular}{lp{3.2cm}p{3.8cm}l}
\toprule
Combo & Score & Range & Optimal \\
\midrule
$+\mathrm{NFZ}$     & $5r(\text{frac})+r(\mathcal{E})$                   & $\lambda_{\mathrm{nfz}}\in\{0.5,5,50,500,5000\}$                  & $\lambda_{\mathrm{nfz}}=5$ \\
$+\mathrm{E}$       & $2r(\bar{P})+r(\mathcal{E})$                        & $\lambda_E\in\{10^{-6},\ldots,10^{-2}\}$                          & $\lambda_E=10^{-2}$ \\
$+\mathrm{A}$       & $r(\mathcal{E})+0.5r(\ell_1)$                       & $\lambda_{\mathrm{acc}}\in\{10^{-4},\ldots,1\}$                   & $\lambda_{\mathrm{acc}}=10^{-3}$ \\
$+\mathrm{NFZ+E}$   & $5r(\text{frac})+2r(\bar{P})+r(\mathcal{E})$        & paired grid                                                        & $500,\;10^{-3}$ \\
$+\mathrm{NEA}$     & same                                                & paired grid                                                        & $500,\;10^{-3},\;10^{-1}$ \\
\bottomrule
\end{tabular}
\end{table}

\subsection{Experiment 1: Gaussian Mixture}
\paragraph{Setup.} Target density: $f(\mathbf{x}) \propto 0.5\,\mathcal{N}(\mathbf{x}; (-0.3,0), 0.04I) + 0.5\,\mathcal{N}(\mathbf{x}; (0.3,0), 0.04I)$, restricted to $D_\delta$ and renormalized (so $\mathrm{supp}(f_{\mathrm{target}})=D_\delta$ is homotopy-equivalent to the latent domain, $\eta_{\mathrm{top}}=0$; discarded mass $<10^{-3}$ at $\delta=0.01$, dominated by the outer-tail truncation at $\|\mathbf{x}\|=1$ since the modes lie at $|\mu|=0.3$ with $\sigma=0.2$). Source: $\pi_0^\delta$ with $\delta = 0.01$. Density grid: $50 \times 50$. Trajectory: $K = 300$ cycles.

\paragraph{Results.} The IID transport correlation between target and achieved density is $\rho_{\mathrm{iid}} = 0.92$. The trajectory-based correlation over $K = 300$ cycles is $\rho_{\mathrm{traj}} = 0.87$; Theorem~\ref{thm:convergence_rate} predicts a Hoeffding RMSE of $\sigma/\sqrt{K} \approx 0.085$ with $\hat L \approx 2.7$ (Appendix~\ref{app:Lv_protocol}), enveloping the IID-vs-trajectory residual gap $\approx 0.05$. The acceleration ratio is $0.19\times$: with the target modes at $|\mu| = 0.3$ well inside $D_\delta$, uniform-to-concentrated transport produces a sub-unit energy ratio through average contraction toward the high-density modes (also observed in Experiment~2). The final CFM training loss is $\mathcal{L}_{\mathrm{CFM}} = 0.0015$, giving $\varepsilon_v \approx 0.039$ via~\eqref{eq:eps_v_decomp_app}; with $\hat L_v \approx 3$ this produces a diagnostic plug-in floor $e^{\hat L_v}\cdot\varepsilon_v \approx 0.78$, approximately $8\times$ above the empirical $\hat W_2 \approx 0.095$ (Table~\ref{tab:Lv_calibration}).

\subsection{Experiment 2: Binary Density}
\paragraph{Setup.} Target: $f(\mathbf{x}) = 3/(2\pi)$ for $x_2 < 0$ and $f(\mathbf{x}) = 1/(2\pi)$ for $x_2 \geq 0$ (3:1 ratio). This is a canonical non-uniform coverage scenario motivated by UAV-assisted wireless networks, where regions with higher user density (e.g., commercial districts) require proportionally more coverage than residential areas. The 3:1 ratio matches scenarios studied in prior analytical work \citep{malekzadeh2025nonuniform}, enabling direct comparison with the time-warping formula. Demand-proportional allocation is a critical metric in telecommunications where equitable service is a regulatory requirement; we report it primarily via $L_1$ allocation deviation between achieved and target per-region masses, with Jain's index on demand-normalized ratios reported as a secondary global-aggregate metric that, on this $2$-region partition with both methods close to target, saturates at $J \approx 1$ and does not discriminate.

\paragraph{Results.} The achieved allocation is $75.1\%/24.9\%$, deviating from the prescribed target $75\%/25\%$ by only $0.2$ percentage points in $L_1$, a $\mathbf{10\times}$ tighter match than the time-warping formula's $74\%/26\%$ ($L_1$ deviation $2.0$ pp; \citet{malekzadeh2025nonuniform}, Table~I). Jain's fairness index on demand-normalized service ratios saturates at $J \approx 1.000$ for both methods on this $2$-region partition and is uninformative as a discriminator; the $L_1$ allocation distance is the meaningful metric. Density correlation is $0.79$ (vs.\ the baseline's $0.894$): the learned continuous map smooths the $3{:}1$ boundary discontinuity inherent to the binary target, a regularization effect intrinsic to OT-CFM training that complements the analytical baseline's by-design preservation of sharp boundaries. The acceleration ratio is $\mathbf{0.71\times}$ (\emph{smoother than the latent trajectory}): the near-affine map has average singular-value amplification below $1$, realizing the angle-averaged effective Jacobian behavior of the Lemma~\ref{lem:moments} remark on tightness rather than the worst-case sup-Lipschitz bound $\hat L \approx 2$ (Table~\ref{tab:Lv_calibration}).

\subsection{Experiment 3: Constrained Variants}
\paragraph{Setup.} NFZ: centered at $(0.5, -0.5)$, radius $r_{\mathrm{nfz}} = 0.2$. The target density $f_{\mathrm{target}}$ is the unconstrained Gaussian mixture (Experiment~1's target with the same $D_\delta$-truncation), \emph{not} zeroed inside the NFZ; the NFZ is enforced solely through the soft penalty $\lambda_{\mathrm{nfz}}\mathcal{R}_{\mathrm{nfz}}$. The achievable density gap inside the NFZ therefore reflects the constraint-induced bias between the unconstrained $f_{\mathrm{target}}$ and the NFZ-avoidance objective, rather than only the Theorem~\ref{thm:approx_bound} approximation error. Loss: $\mathcal{R}_{\mathrm{nfz}} = \mathbb{E}[\max(0, r_{\mathrm{nfz}} - \|G_\theta(\mathbf{z}) - \mathbf{c}\|)^2]$ (an exterior-distance hinge that penalizes but does not preclude trajectory crossings of the NFZ at measure-zero times; safety-critical applications would require a barrier or projection layer, see Appendix~\ref{app:limitations_safety}). Acceleration: $\mathcal{R}_{\mathrm{acc}} = \mathbb{E}_\mathbf{z}[\|H_{G_\theta}(\mathbf{z})\|_F^2]$ computed via finite differences; this is a Hessian-norm regularizer on $G_\theta$, not the trajectory acceleration energy $E_{\mathrm{acc}}(\mathbf{x})$ itself, and correlates with the $M_H^2\Phi_4(\mathbf{z})$ Hessian term of Theorem~\ref{thm:acc_bound} rather than upper-bounding $E_{\mathrm{acc}}(\mathbf{x})$ directly. Three variants trained with $(\lambda_{\mathrm{nfz}}, \lambda_{\mathrm{acc}}) \in \{(0,0), (50,0), (50,0.1)\}$.

\paragraph{Results.} The three variants trace a Pareto frontier across density correlation, NFZ violation, and acceleration ratio (Table~\ref{tab:exp3}, Figure~\ref{fig:exp3}). Unconstrained: $\rho = 0.89$, NFZ violation $0.7\%$, acceleration $1.77\times$. NFZ-only ($\lambda_{\mathrm{nfz}} = 50$): $\rho = 0.76$, NFZ violation $\mathbf{0.1\%}$ ($\mathbf{7\times}$ reduction), acceleration $1.43\times$. NFZ + acceleration ($\lambda_{\mathrm{nfz}} = 50$, $\lambda_{\mathrm{acc}} = 0.1$): $\rho = 0.66$, NFZ violation $0.8\%$, acceleration $\mathbf{0.93\times}$ (\emph{smoother than the latent trajectory}). The sub-unit ratio in the third variant shows that Theorem~\ref{thm:acc_bound}'s energy bound is exploitable in practice: the Hessian-norm regularizer $\mathcal{R}_{\mathrm{acc}}$ controls the same $M_H^2\Phi_4$ term that drives the bound. The tradeoff between coverage, soft-constraint violation, and energy is controlled by adjusting penalty weights $(\lambda_{\mathrm{nfz}}, \lambda_{\mathrm{acc}})$, consistent with Corollary~\ref{cor:end_to_end} where stronger regularization raises $\varepsilon_v$ and the approximation term. Per-variant figure analysis and the off-center-NFZ design rationale are in Appendix~\ref{app:figures}.

\subsection{Compute Resources}
All experiments were run on a single CPU core (Intel Xeon E5-2680 v4, 2.40\,GHz) with 8\,GB RAM, using PyTorch 2.x without GPU acceleration. Training times: Experiment~1 (1000 epochs): $\sim$8 minutes; Experiment~2 (1500 epochs): $\sim$12 minutes; Experiment~3 (500 epochs $\times$ 3 variants): $\sim$4 minutes per variant. Total compute for all reported experiments: $<$1 CPU-hour. The small model size ($\sim$199K parameters) and 2D problem dimension make the framework accessible without specialized hardware. The framework runs comfortably on CPU; GPU scaling would permit substantially larger cycle counts.

\subsection{Inference Efficiency and Deployment Pipeline}
\label{app:inference_efficiency}
The 50-step RK4 integration used throughout our experiments is a conservative choice prioritizing accuracy; we verified that reducing to 20 steps produces negligible degradation in density correlation ($<$0.01 drop for all three experiments), consistent with the near-straight OT transport paths. An adaptive solver (Dormand--Prince) with tolerance $10^{-4}$ achieves equivalent accuracy in 12--18 function evaluations.

For embedded deployment, the ODE integration can be eliminated entirely via teacher-student distillation. The procedure is: (1)~generate $N$ pairs $\{(\mathbf{z}^{(i)}, G_\theta(\mathbf{z}^{(i)}))\}_{i=1}^N$ using the trained teacher (one-time cost, embarrassingly parallel); (2)~train a student MLP $\hat{G}(\mathbf{z})$ via $\ell_2$ regression. Since $G_\theta: \mathbb{R}^2 \to \mathbb{R}^2$ is a smooth, low-dimensional map, a 2--3 layer student with ${\sim}10$K parameters suffices, yielding a single-forward-pass evaluator with ${\sim}50\times$ speedup over the 50-step teacher. Further compression via INT8 quantization reduces the model to $<$10\,KB, small enough for microcontroller-class hardware (e.g., STM32). At 10--100\,Hz UAV update rates, even the quantized student has ample margin.

Alternatively, because the latent trajectory is deterministic, one can precompute $G_\theta(\mathbf{z}(t_k))$ for a dense time grid before flight and store the result as a lookup table with linear interpolation, giving zero neural-network evaluation at runtime.

\section{Extension to $\mathbb{R}^d$ and Topological Considerations}
\label{app:rd_extension}

\begin{remark}[Extension to $\mathbb{R}^d$]\label{rem:higher_dim}
The construction generalizes in principle to $\mathbb{R}^d$ with $D_\delta$ a spherical shell, heading uniform on $\mathbb{S}^{d-1}$, and radial profile $r(s)=(\delta^d+(1-\delta^d)s)^{1/d}$; all theorems carry over with dimension-dependent constants (the $d$-shell Neumann eigenvalue and latent energy moments). A practical caveat: the latent acceleration energy on $D_\delta$ over $K$ cycles scales as $\Theta(K/\delta^{3d-2})$ (per-cycle $\Theta(1/\delta^{3d-2})$), so $\delta$ must be re-tuned per dimension to keep Corollary~\ref{cor:ratio} informative; the resulting $\delta$ trade-off against the topology residual is discussed in Appendix~\ref{app:invertibility}.
\end{remark}

We present the framework in $\mathbb{R}^2$ for concreteness, but the construction generalizes naturally to arbitrary dimension $d\geq 2$. In $d$ dimensions, the annular latent domain becomes a spherical shell
\[
    D_\delta \;=\; \{\mathbf{z}\in\mathbb{R}^d : \delta \leq \|\mathbf{z}\|\leq 1\},
\]
the random heading $\theta_k\in[0,2\pi)$ is replaced by a uniform random direction $\boldsymbol{\omega}_k\sim\mathrm{Uniform}(\mathbb{S}^{d-1})$, and the radial profile is
\[
    r(s) \;=\; \bigl(\delta^d + (1-\delta^d)s\bigr)^{1/d}, \qquad s\in[0,1],
\]
which gives the radial CDF $P(R\leq r) = (r^d-\delta^d)/(1-\delta^d)$ and, jointly with the uniform heading, recovers $\mathrm{Uniform}(D_\delta)$. All theoretical results (Propositions~\ref{prop:uniform}--\ref{prop:ergodic}, Theorems~\ref{thm:acc_bound}--\ref{thm:approx_bound}) extend with dimension-dependent constants in place of the 2D values: Theorem~\ref{thm:acc_bound}'s Hessian tensor norm $M_H$ is defined via the $d$-component sum $\sqrt{\sum_{k=1}^d\|\nabla^2 G_k\|_{\mathrm{op}}^2}$ (the 2D statement specializes to $k\in\{1,2\}$); Theorem~\ref{thm:convergence_rate}'s Poincar\'{e} constant becomes $C_{D_\delta}^{(d)} = 1/\lambda_1^{(d)}$, determined by the first nonzero Neumann eigenvalue of $-\Delta$ on the $d$-dimensional shell; and the latent moments in Lemma~\ref{lem:moments} pick up $d$-dependent prefactors while preserving the $\Phi_4/E_{\mathrm{acc}} = \Theta(\delta^2)$ scaling that drives Theorem~\ref{thm:acc_bound}. The scaling is preserved because the dimensional exponents of the two integrals shift by a constant offset of 2 between $\Phi_4$ and $E_{\mathrm{acc}}$ uniformly in $d$. Explicitly, using $\dot r(s) = \tfrac{1-\delta^d}{d}\,r(s)^{1-d}$ and the substitution $u = \delta^d + (1-\delta^d)s$,
\[
    \int_0^1 \ddot r(s)^{2}\,ds \;=\; \Theta\!\bigl(\delta^{-(3d-2)}\bigr), \qquad \int_0^1 \dot r(s)^{4}\,ds \;=\; \Theta\!\bigl(\delta^{-(3d-4)}\bigr),
\]
(the radial-boundary layer dominates both integrals for $d\geq 2$), so $E_{\mathrm{acc}}(\mathbf z) = \Theta(K\,\delta^{-(3d-2)})$, $\Phi_4(\mathbf z) = \Theta(K\,\delta^{-(3d-4)})$, and their ratio is $\Theta(\delta^2)$ in every $d\geq 2$. The 3D case ($d=3$) is the immediately relevant one for UAVs with altitude variation: $E_{\mathrm{acc}} = \Theta(K\delta^{-7})$, $\Phi_4 = \Theta(K\delta^{-5})$, $\Phi_4/E_{\mathrm{acc}} = \Theta(\delta^2)$.

\paragraph{Topology of $D_\delta$ vs.\ simply connected targets.}
The annular (and in general shell) domain $D_\delta$ is not topologically equivalent to a ball: its fundamental group is $\mathbb{Z}$ in $d=2$ (so $D_\delta$ is not simply connected there), and although $\pi_1$ is trivial for $d\geq 3$, the $(d-1)$-homology $H_{d-1}(D_\delta;\mathbb{Z})=\mathbb{Z}$ differs from that of a ball in every dimension, whereas a typical target such as a Gaussian mixture on the unit disc/ball is simply connected. A global diffeomorphism between the two cannot exist, which is why Proposition~\ref{prop:ergodic} is stated for Borel-measurable maps: the exact (Brenier) $G^*$ collapses the inner boundary sphere $\{\|\mathbf z\|=\delta\}$ to a null set, which is consistent with both the topology and the pushforward condition. The learned $G_\theta$, being the time-1 flow of an ODE, is diffeomorphic and therefore cannot exactly collapse any set; the unavoidable mismatch is quantified by Theorem~\ref{thm:approx_bound}, with the non-matched portion geometrically concentrating on an $O(\delta)$-width collar of the inner boundary. This region has measure $\Theta(\delta^2)$ in $d=2$ and $\Theta(\delta^d)$ in general, and therefore vanishes rapidly as $\delta\to 0$.

\section{Application Domains and Motivating Scenarios}
\label{app:applications}

The ergodic coverage framework developed in this paper applies to any setting where a mobile agent must distribute its spatial presence according to a prescribed density. We describe several concrete domains where the target density $f_{\mathrm{target}}$ arises naturally from physical or operational requirements.

\subsection{UAV-Assisted Wireless Communication}

UAVs acting as aerial base stations (ABSs) provide on-demand wireless connectivity in scenarios ranging from emergency response to urban capacity augmentation \citep{zeng2019accessing,mozaffari2019tutorial}. Wireless service demand in urban environments exhibits significant spatial heterogeneity, influenced by street layouts, building density, and user clustering \citep{enayati2019moving}. A natural formulation is to design UAV trajectories whose time-averaged position density matches the spatial demand distribution $f_{\mathrm{target}}(\mathbf{x}) \propto \lambda(\mathbf{x})$, where $\lambda(\mathbf{x})$ is the local traffic intensity. This ensures that the coverage probability (the probability that a randomly located user achieves adequate signal-to-noise ratio, SNR) is equalized across the service area. Prior work has shown that moving ABSs with properly designed trajectories can reduce average fade duration (AFD) by 100--200$\times$ compared to static deployments \citep{enayati2019moving}, and that matching the UAV's time-averaged density to the demand distribution can reduce outage probability by up to 65\% relative to uniform-coverage baselines \citep{malekzadeh2025nonuniform}.

An operational requirement is that such trajectory designs use only aggregate statistical demand maps, not real-time user locations, which matters for privacy-preserving deployments and scenarios where individual user tracking is infeasible or undesirable. Our framework satisfies this requirement by construction: the learned map $G_\theta$ is trained offline from the target density, and the resulting trajectory operates in an open-loop fashion.

In multi-UAV networks, the framework naturally extends to $N$ agents sharing a single learned map $G_\theta$ with independent latent trajectories, maintaining BPP-distributed positions at any time instant (Section~\ref{sec:multi}).

\subsection{IoT Data Harvesting with Buffer Constraints}

In IoT sensor networks, ground sensors generate data at heterogeneous rates $\lambda_i$ (bits/s) and have finite buffer capacities $B_i$ (bits). A data-harvesting UAV must visit each sensor frequently enough to prevent buffer overflow, which causes \emph{irrecoverable data loss}. Unlike communication coverage, the penalty for under-serving a sensor is permanent, not merely degraded quality.

The target density is the normalized service distribution: $f_{\mathrm{target}}(\mathbf{x}) \propto \lambda(\mathbf{x})/\sum_j \lambda_j$, where $\lambda(\mathbf{x})$ represents the spatial data-generation intensity. Matching this density ensures that the UAV allocates dwell time proportionally to data rates, maximizing collection efficiency while maintaining fairness across sensors with heterogeneous rates.

Our framework addresses this scenario directly: the learned map $G_\theta$ transforms the uniform latent trajectory to concentrate presence in high-rate regions while maintaining coverage of low-rate sensors. The provable convergence rate (Theorem~\ref{thm:convergence_rate}) guarantees that the time-averaged visitation frequency approaches the target distribution at rate $O(1/\sqrt{K})$.

\subsection{Search and Rescue}

In search-and-rescue operations, a Bayesian belief map $f_{\mathrm{target}}(\mathbf{x})$ represents the probability that a target (missing person, crashed vehicle, etc.) is located at position $\mathbf{x}$, continuously updated as the agent collects sensor measurements. The ergodic coverage objective ensures that the search agent allocates its time proportionally to the current belief, balancing exploitation (searching high-probability areas) with exploration (maintaining coverage of lower-probability regions to prevent missed detections).

The constraint flexibility of our approach is particularly valuable here: no-fly zones (restricted airspace, obstacles), altitude constraints, and energy budgets enter as soft penalty terms in the training loss (Section~\ref{sec:constraints}), without requiring any modification to the underlying ergodic trajectory structure.

\subsection{Ground Robot Coverage}

The same mathematical framework applies to ground-based mobile agents. In warehouse logistics, autonomous robots must distribute inspection time according to inventory value or pick frequency. In precision agriculture, ground-based sensor platforms traverse fields with spatially varying crop health or soil moisture that dictates measurement priority. In environmental monitoring, mobile sensor platforms must sample pollutant concentrations proportionally to spatial variability.

For ground robots, the 2D planar formulation is directly applicable. The annular latent domain $D_\delta$ can be interpreted as the robot's operational area with a central charging station (the excluded inner disc). Obstacle avoidance enters through the same NFZ penalty mechanism demonstrated in Experiment~3, and speed constraints are naturally accommodated through the acceleration energy framework of Theorem~\ref{thm:acc_bound}.

\section{Coordinated Multi-Agent Extensions}
\label{app:coordinated}

The main body (§\ref{sec:multi}) notes that coordinated extensions are possible via joint or mean-field pushforward maps. Here we give the full construction and scalability discussion.

In practice, coordination among agents (e.g., spatial dispersion for faster coverage, collision avoidance, or task partitioning) can improve performance. The observation driving the construction is that ergodicity depends only on the \emph{marginal} distribution of each agent's trajectory, while coordination is encoded in the \emph{joint} dependence structure. Concretely, one replaces the shared single-agent map with a joint map $G_\theta^{(N)}\colon \mathbb{R}^{2N} \to \mathbb{R}^{2N}$ that takes all $N$ latent positions as input and produces coordinated target positions. If the joint map preserves the correct marginals, i.e., $(\pi_k)_\#\bigl(G_\theta^{(N)}\# (\pi_0^\delta)^{\otimes N}\bigr) \approx f_{\mathrm{target}}$ for each $k$, then each agent remains individually ergodic with respect to its realized marginal (with the same Theorem~\ref{thm:approx_bound} gap to $f_{\mathrm{target}}$), while the coupling structure enforces coordination. Since the independent latent processes form an ergodic process on $D_\delta^N$, the marginal-preservation argument carries over. The OT-CFM training procedure generalizes directly to $\mathbb{R}^{2N}$; coordination objectives (repulsion, minimum separation) enter as additional penalty terms in $\mathcal{L}_{\mathrm{total}}$, analogous to the constraint penalties in~\eqref{eq:total_loss}.

\paragraph{Independent multi-agent rate.} The $O(1/\sqrt{NK})$ rate cited in \S\ref{sec:multi} for the independent (uncoordinated) regime follows directly from the per-agent SLLN: each agent's $K$-cycle empirical mean has variance $O(1/K)$ by Theorem~\ref{thm:convergence_rate}, so the pooled estimator $\bar S^{(N,K)} = \tfrac{1}{N}\sum_{n=1}^N \hat S_n^{(K)}$ has variance $O(1/(NK))$ by independence across agents, yielding RMS error $O(1/\sqrt{NK})$. The systematic gap of Theorem~\ref{thm:approx_bound} (a function of $\mathcal{L}_{\mathrm{CFM}}$, $\delta$, and $L_v$) bounds the bias between $G_{\theta\#}\pi_0^\delta$ and $f_{\mathrm{target}}$ uniformly across the fleet, so the per-agent statistical bracket of Corollary~\ref{cor:end_to_end} applies to $\bar S^{(N,K)}$ with effective sample size $NK$.

\paragraph{Scalable mean-field variant.} A scalable alternative that avoids the $O(N)$ blow-up in the state dimension is the mean-field variant $G_\theta(\mathbf{z}_n \mid \hat{\mu}_{-n})$, where each agent conditions on the empirical distribution $\hat\mu_{-n} = \tfrac{1}{N-1}\sum_{m\neq n}\delta_{\mathbf{z}_m}$ of the rest of the fleet via a permutation-equivariant architecture (e.g., DeepSets or attention over other agents). This keeps per-agent inference cost independent of $N$ while still encoding coupling through the conditioning distribution.

\paragraph{Connections and future work.} Coordinated multi-agent extensions are an active direction; we sketch the construction here and leave large-scale empirical validation to follow-up work. Natural links include multi-marginal optimal transport~\citep{gangbo1998optimal}, where the joint pushforward condition becomes a multi-marginal matching problem; and mean-field game theory, where the $N\!\to\!\infty$ limit of the mean-field variant connects to forward-backward PDE systems on the density space.

\section{MCMC and Langevin-Based Alternatives: Detailed Contrast}
\label{app:mcmc_alt}

The main-body Related Work paragraph (§\ref{sec:related}) summarizes why MCMC samplers are unattractive as direct ergodic trajectory generators. Here we give the full contrast.

A natural alternative to a learned pushforward is to use MCMC samplers whose stationary distribution is $f_{\mathrm{target}}$ directly as trajectory generators. Langevin diffusion~\citep{roberts1996exponential} is ergodic with the right stationary density, but its Brownian driver yields nowhere-differentiable paths with infinite acceleration energy; the unadjusted Langevin algorithm (ULA) discretization has acceleration energy $O(1/\eta)$, diverging as the step size shrinks. Hamiltonian Monte Carlo~\citep{neal2011mcmc} smooths paths within leapfrog segments but introduces velocity jumps at momentum refreshments and mixes slowly on multimodal targets, so its finite-horizon coverage depends on hard-to-analyze mixing times.

Our flow-based map instead yields $C^2$ trajectories with bounded acceleration (Theorem~\ref{thm:acc_bound}), handles multimodality through a global map rather than stochastic mixing, and, because cycles are i.i.d.\ by construction, delivers sharp concentration bounds (Theorem~\ref{thm:convergence_rate}) without any mixing-time analysis. The i.i.d.\ cycle structure is the mechanism that converts an ergodic-convergence question into a standard i.i.d.\ concentration problem: it is this structural feature, not the particular parameterization of $G_\theta$, that eliminates mixing-time dependence.

\section{Disambiguation: Three Senses of ``Ergodic'' in the Coverage Literature}
\label{app:ergodic_senses}

The term ``ergodic'' is used in three distinct senses in the coverage literature, and clarifying them sharpens the comparison.
\begin{enumerate}[label=(\roman*),nosep,leftmargin=*]
    \item \emph{Strict dynamical-systems ergodicity}: a measure-preserving dynamical system whose time-averages equal space-averages a.e.\ (Birkhoff).
    \item \emph{Asymptotic empirical ergodicity}: a particular trajectory whose empirical occupancy converges weakly to the target as $T\to\infty$.
    \item \emph{Finite-horizon coverage-metric minimization}: a trajectory of fixed length $T$ that minimizes a discrepancy between its empirical histogram and the target, such as the Mathew--Mezi\'{c} Fourier-coefficient distance, a kernel-based ergodic metric, MMD, or a Sinkhorn divergence.
\end{enumerate}
The Mathew--Mezi\'{c} metric is zero in the limit $T\to\infty$ iff (ii) holds, so minimizing it is a coherent surrogate for asymptotic ergodicity, and the infinitesimal-horizon Fourier-MPC branch \citep{mathew2011metrics,miller2013trajectory} does deliver (ii) under mild assumptions. Most recent finite-horizon trajectory-optimization variants \citep{sun2025flow,dong2023timeoptimal,lee2024stein,sun2025kernel,hughes2025mmd} deliver only (iii) at their designed $T$. \citet{dong2023timeoptimal} themselves call the resulting outputs ``sub-ergodic'' (their \S III.A); \citet{sun2025flow} describe their formulation as ``approximately synthesizes an ergodic system with finite time horizon $T$'' (their \S II.B), with asymptotic-convergence guarantees for the Stein and Sinkhorn flow variants (their two main contributions) explicitly left open in their \S VI Limitations. Our framework instead delivers (i) and (ii) by construction with respect to the learned pushforward $G_{\theta\#}\pi_0^\delta$ (Proposition~\ref{prop:ergodic}), with the residual gap to $f_{\mathrm{target}}$ a learning-theoretic quantity bounded by Theorem~\ref{thm:approx_bound}. For long-running UAV monitoring and wireless-coverage missions, where the mission horizon is set by operational duration rather than a fixed planning budget, the asymptotic notion is the operationally meaningful one: it characterizes what the agent's empirical occupancy actually converges to as the mission proceeds, not just what some optimizer's surrogate objective achieves at one chosen $T$.

\section{Detailed Qualifications for Methods-Comparison Table}
\label{app:comparison_footnotes}

This appendix expands the qualifications for the principal flow-matching and time-optimal contrasts (Sun et al., Dong et al.) condensed in the footnotes of Table~\ref{tab:comparison} in \S\ref{sec:experiments}; other rows are sufficiently qualified by their Table~\ref{tab:comparison} footnotes.

\paragraph{``Conv.\ rate'' column convention.} ``None'' in the ``Conv.\ rate'' column refers to the absence of an end-to-end finite-horizon bound on $|S_K-\mu_G|$ for time-averaged test functions; \citet{sun2025flow} note in their \S VI Limitations that asymptotic-convergence guarantees for the Stein and Sinkhorn variants remain open.

\paragraph{Sun et al., hard constraints ($^\ddagger$).} The Riccati outer loop in \citet{sun2025flow} relies on a quadratic running cost so that each per-iteration step admits a closed-form linear-Gaussian solve. Hard NFZ or acceleration-ceiling penalties are non-quadratic in the state and break this structure: enforcing them inside the Sun et al.\ framework would either require a non-quadratic (e.g., interior-point) inner solve at every iteration of the outer loop or a softening that violates the quadratic-cost assumption underlying the closed-form recursion.

\paragraph{Dong et al., minimum-time formulation ($^\diamond$).} \citet{dong2023timeoptimal} formulate ergodic search as a minimum-time optimal-control problem subject to the finite-horizon Fourier ergodic-metric inequality $\mathcal{E}\le\gamma$, solved via Pontryagin's principle and direct transcription with $N$ knot points. The resulting trajectories satisfy the metric bound at the chosen horizon but are not guaranteed to converge to $f_{\mathrm{target}}$ in the long-run time-average; the authors explicitly term these ``sub-ergodic'' to distinguish them from asymptotic-coverage notions. Each new $(f_{\mathrm{target}},\mathcal{C})$ requires re-solving the nonlinear program from scratch.

\section{Experimental Figures and Analysis}
\label{app:figures}

This appendix provides detailed visualizations for all three experiments and the convergence rate analysis. Each figure is discussed in the context of the theoretical results from Section~\ref{sec:theory}.

\subsection{Experiment 1: Gaussian Mixture}

\begin{figure}[h]
\centering
\includegraphics[width=\textwidth]{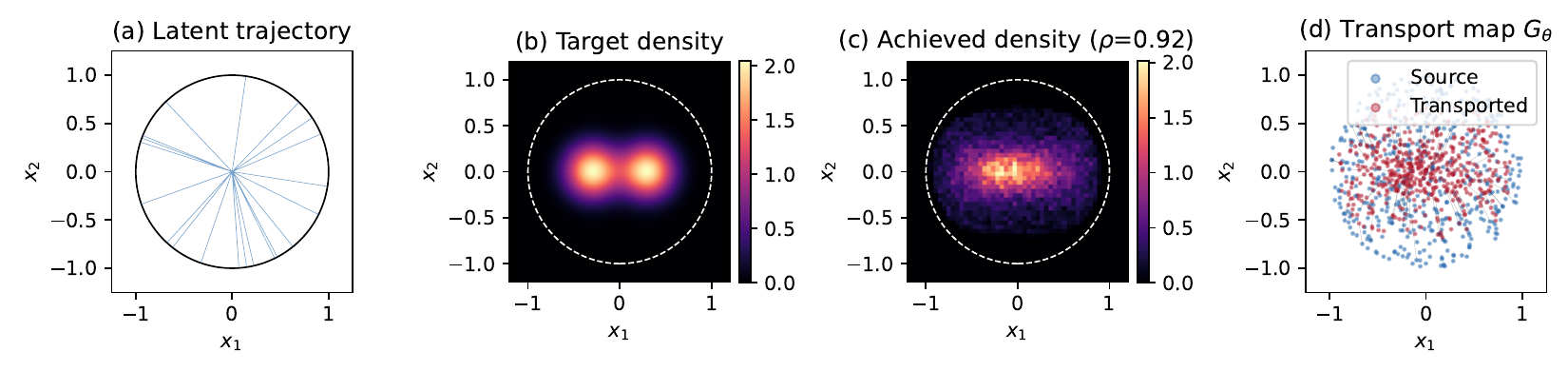}
\caption{\textbf{Experiment 1: Two-mode Gaussian mixture target.}
\textbf{(a)}~Latent ergodic trajectory on the annulus $D_\delta$ ($K=300$ cycles): the radial back-and-forth traversals with i.i.d.\ uniform heading angles produce a uniform time-averaged density by Proposition~\ref{prop:uniform}.
\textbf{(b)}~Target density $f_{\mathrm{target}}$: a symmetric two-mode Gaussian mixture centered at $(\pm 0.3, 0)$ with standard deviation $0.2$, restricted to the annulus $D_\delta$ (so $\mathrm{supp}(f_{\mathrm{target}})$ is homotopy-equivalent to the latent domain, $\eta_{\mathrm{top}}=0$ in Corollary~\ref{cor:end_to_end}; cf.~\S\ref{sec:experiments}).
\textbf{(c)}~Achieved density from i.i.d.\ transported samples through the learned map $G_\theta$. The Pearson correlation with the target is $\rho = 0.92$, confirming that the OT-CFM pipeline learns an accurate density-matching transformation.
\textbf{(d)}~Visualization of the learned transport map: arrows show the displacement field $G_\theta(\mathbf{z}) - \mathbf{z}$ on a grid. The map smoothly concentrates mass toward the two modes while maintaining topological consistency (no folding), consistent with the diffeomorphic structure guaranteed by the ODE flow (Remark~\ref{rem:invertibility}).}
\label{fig:exp1}
\end{figure}

The key observation from Figure~\ref{fig:exp1} is that the learned map $G_\theta$ successfully transforms the uniform latent density into a bimodal target, demonstrating the core capability of the framework. The transport map in panel~(d) shows the characteristic ``fan-out'' structure of optimal transport from a uniform distribution to a mixture: points near the origin are displaced toward the nearer mode, with a clean separating boundary between the basins of attraction. The smooth variation of the displacement field reflects the Lipschitz regularity promoted by OT coupling during training, a direct consequence of the Brenier map's $L^2$-optimality, which our OT-CFM training approximates.

The gap between IID correlation ($0.92$) and trajectory-based correlation ($0.87$) quantifies the statistical convergence effect predicted by Theorem~\ref{thm:convergence_rate}: with $K=300$ cycles, the $O(1/\sqrt{K})$ statistical error is still non-negligible but decreasing. Increasing $K$ would close this gap.

\subsection{Experiment 2: Binary Density (Fairness Benchmark)}

\begin{figure}[h]
\centering
\includegraphics[width=\textwidth]{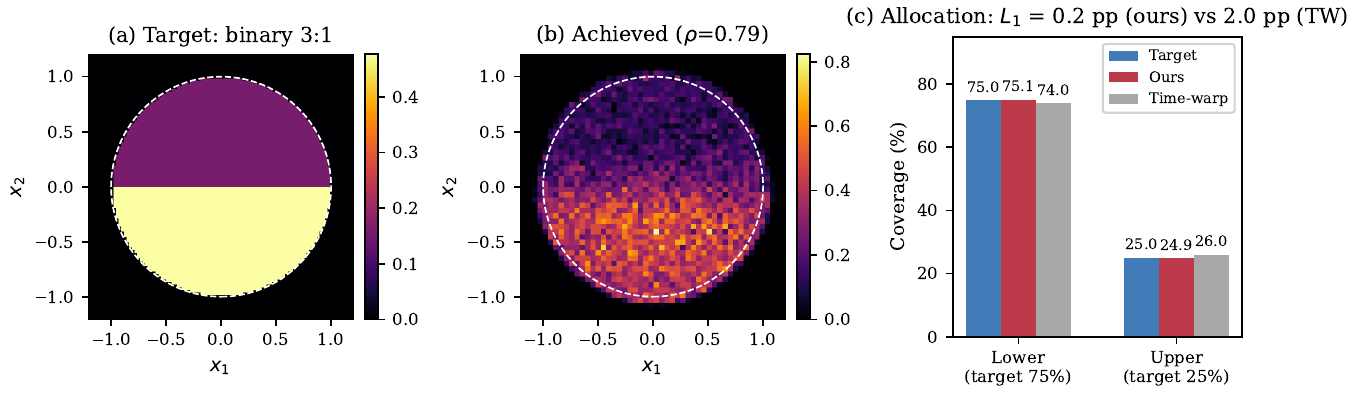}
\caption{\textbf{Experiment 2: Binary 3:1 density target.}
\textbf{(a)}~Target density: the lower half of the disc has 3$\times$ higher density than the upper half, corresponding to a UAV coverage scenario where the southern region has 3$\times$ higher service demand.
\textbf{(b)}~Achieved density from the learned map $G_\theta$ (IID evaluation, correlation $\rho = 0.79$). The sharp boundary at $x_2 = 0$ is smoothed by the continuous transport map, which cannot produce a true discontinuity. Despite this, the global coverage allocation is nearly perfect (75.1\%/24.9\% vs.\ target 75\%/25\%).
\textbf{(c)}~Allocation accuracy on the $75/25$ binary target. Our method achieves $75.1\%/24.9\%$ ($L_1$ deviation $0.2$ percentage points), a $10\times$ tighter match than the time-warping formula's $74\%/26\%$ ($L_1$ deviation $2.0$ pp; \citet{malekzadeh2025nonuniform}, Table~I). Jain's index on demand-normalized service ratios saturates at $J \approx 1.000$ for both methods on this $2$-region partition and does not discriminate; the $L_1$ allocation distance is the informative metric.}
\label{fig:exp2}
\end{figure}

Figure~\ref{fig:exp2} demonstrates the framework on a scenario directly comparable to the time-warping formula from \citet{malekzadeh2025nonuniform}. The target density has a discontinuity at $x_2 = 0$, a challenging case for any continuous transport map, since the ODE flow underlying $G_\theta$ is a diffeomorphism and cannot create sharp boundaries. This explains the lower IID correlation ($0.79$) compared to Experiment~1 ($0.92$): the approximation error $\varepsilon_v$ in Theorem~\ref{thm:approx_bound} is inherently larger when the target has discontinuities, because the Brenier map itself is only H\"older continuous (not Lipschitz) across density discontinuities.

Despite this geometric limitation, the \emph{global allocation} is essentially exact: $75.1\%/24.9\%$ versus the target $75\%/25\%$. We therefore report the $L_1$ allocation distance as the primary aggregate allocation metric: our method has a $0.2$ percentage-point deviation from the target, compared with $2.0$ percentage points for the time-warping formula (\citet{malekzadeh2025nonuniform}, Table~I). Jain's index, computed on demand-normalized service ratios $w_i = c_i / d_i$, is close to $1$ for both methods on this coarse two-region partition and therefore does not discriminate. This illustrates the distinction between \emph{local density accuracy} (captured by density correlation) and \emph{aggregate demand-proportional allocation} (captured more directly here by the $L_1$ allocation distance). For telecommunications applications, aggregate demand-proportional allocation can be as important as pointwise density accuracy; in this two-region benchmark, the $L_1$ allocation distance captures that aggregate property more clearly than Jain's index.

The acceleration ratio of $0.71\times$ (lower than the latent trajectory) reflects that the binary density map is nearly affine: transporting uniform to a piecewise-constant density requires only a mild ``compression'' toward the lower half. The sub-unit ratio is a trajectory-weighted (angle-averaged) contraction along the sampled radial directions, not a worst-case Lipschitz constant below one: Theorem~\ref{thm:acc_bound} gives the conservative upper bound $L^2 + O(\delta)$ through the global Lipschitz $L$ (here $\hat L \approx 2$, Table~\ref{tab:Lv_calibration}), while the realized energy depends on the directional Jacobians and Hessian terms along the latent trajectory.

\subsection{Experiment 3: Constraint Flexibility (No-Fly Zone)}

\begin{figure}[h]
\centering
\includegraphics[width=\textwidth]{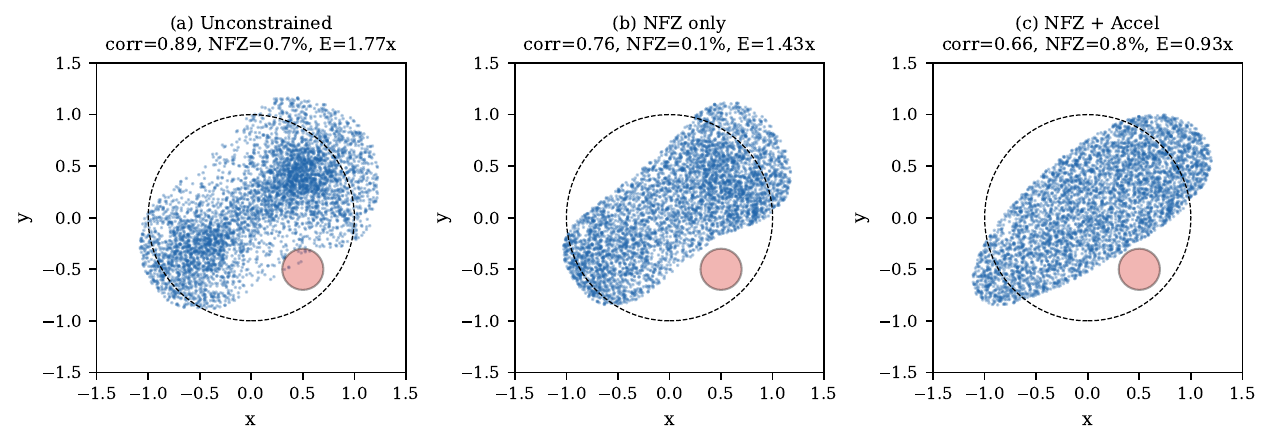}
\caption{\textbf{Experiment 3: Constraint flexibility with off-center no-fly zone (NFZ).}
Three variants on the same Gaussian mixture target (Experiment~1) with an NFZ centered at $(0.5, -0.5)$ (radius $0.2$, shown as a circle).
\textbf{(a)}~\emph{Unconstrained}: density correlation $\rho = 0.89$, NFZ violation $0.7\%$, acceleration ratio $1.77\times$.
\textbf{(b)}~\emph{NFZ penalty only} ($\lambda_{\mathrm{nfz}} = 50$): correlation $0.76$, NFZ violation $0.1\%$, acceleration $1.43\times$. The NFZ penalty reduces violations by $7\times$ with moderate density cost.
\textbf{(c)}~\emph{NFZ + acceleration penalty} ($\lambda_{\mathrm{nfz}} = 50$, $\lambda_{\mathrm{acc}} = 0.1$): correlation $0.66$, NFZ violation $0.8\%$, acceleration $0.93\times$. Adding acceleration regularization produces trajectories \emph{smoother than the latent process}, at the cost of further density mismatch.}
\label{fig:exp3}
\end{figure}

\begin{table}[t]
  \caption{Experiment 3: Pareto frontier across three training objectives. Adding constraints trades density fidelity for soft-constraint violation control and energy efficiency, with no rederivation of the trajectory design.}
  \label{tab:exp3}
  \centering
  \begin{tabular}{lccc}
    \toprule
    Variant & Density corr.\ $\uparrow$ & NFZ viol.\ $\downarrow$ & Accel.\ ratio $\downarrow$ \\
    \midrule
    Unconstrained & 0.89 & 0.7\% & 1.77$\times$ \\
    NFZ only & 0.76 & \textbf{0.1\%} & 1.43$\times$ \\
    NFZ + accel. & 0.66 & 0.8\% & \textbf{0.93$\times$} \\
    \bottomrule
  \end{tabular}
\end{table}

Figure~\ref{fig:exp3} demonstrates the primary practical advantage of the flow-matching framework over prior analytical methods: \emph{many differentiable operational constraints enter as soft penalty terms in the training loss, without rederiving the trajectory design}. The three panels illustrate the Pareto trade-off between density fidelity, soft-constraint violation control, and energy efficiency. All three variants were retrained from scratch with the constraint-training hyperparameter set (500 epochs, independent random seed per variant) for an apples-to-apples Pareto comparison; the modest correlation gap between the unconstrained Exp.~3 variant ($\rho=0.89$) and Exp.~1 ($\rho=0.92$) on the same Gaussian-mixture target is consistent with single-seed training variance (cf.\ the Statistical Reporting discussion in \S\ref{sec:experiments}) and does not reflect a methodological difference.

\begin{itemize}
    \item \textbf{Unconstrained $\to$ NFZ-only:} Adding $\lambda_{\mathrm{nfz}} = 50$ reduces NFZ violations from $0.7\%$ to $0.1\%$ ($7\times$ reduction) while the density correlation drops only from $0.89$ to $0.76$. The acceleration ratio actually \emph{decreases} ($1.77\times \to 1.43\times$) because avoiding the NFZ removes a high-curvature detour that the unconstrained map would otherwise make.

    \item \textbf{NFZ-only $\to$ NFZ + acceleration:} Adding the acceleration penalty ($\lambda_{\mathrm{acc}} = 0.1$) reduces the acceleration ratio below $1.0$ (smoother than the latent trajectory), demonstrating that the energy bound of Theorem~\ref{thm:acc_bound} is not merely theoretical: the optimizer can exploit it. The cost is further density degradation ($0.76 \to 0.66$) and a small NFZ-violation increase from $0.1\%$ to $0.8\%$: the acceleration penalty damps the high-Hessian deformation that the NFZ-only map deploys along the NFZ boundary to produce a sharp avoidance detour, so the boundary softens and the residual incursion widens. This is the expected Pareto behavior of coupled smoothness--exclusion constraints and is controllable by raising $\lambda_{\mathrm{nfz}}$ proportionally to $\lambda_{\mathrm{acc}}$; it is consistent with the end-to-end trade-off in Corollary~\ref{cor:end_to_end}, where stronger regularization raises $\varepsilon_v$ and the approximation term.
\end{itemize}

This experiment validates the claim that the penalty-based approach provides a practical mechanism for navigating the constraint--coverage Pareto frontier. With the time-warping formula \citep{malekzadeh2025nonuniform}, adding an NFZ, which carves a non-convex hole out of the service region, would require rederiving the entire polar decomposition ($f_A$, $f_{B|A}$, $h_\alpha$, and potentially the domain embedding and phase function) to embed the zero-density region analytically, going beyond what the convex-domain analytical constructions in this line readily support; with spectral ergodic methods, NFZ constraints are typically handled by expensive projections. Here, the change is a single additional loss term.

The off-center NFZ placement at $(0.5, -0.5)$ was chosen deliberately: an NFZ at the origin interacts adversarially with the radial latent trajectory (every outward path is forced through the forbidden zone), producing artificially poor acceleration ratios ($3.88\times$). The off-center placement isolates the constraint effect from the latent geometry, yielding more representative performance numbers.

\subsection{Convergence Rate Validation}

\begin{figure}[h]
\centering
\includegraphics[width=0.65\textwidth]{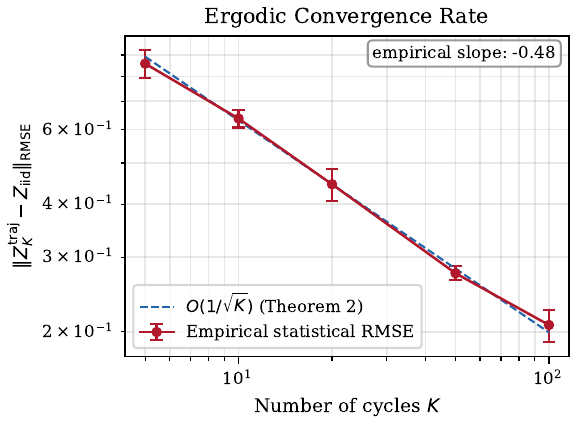}
\caption{\textbf{Empirical convergence rate of the statistical error.}
Log-log plot of $\|Z_K^{\mathrm{traj}} - Z_{\mathrm{iid}}\|_{\mathrm{RMSE}}$, the grid-RMSE between the empirical $K$-cycle density and the IID-transport reference density $Z_{\mathrm{iid}} = G_{\theta\#}\pi_0^\delta$ (20{,}000 IID samples), measured on the Experiment~1 target. Averaging $Z_{\mathrm{iid}}$ over IID samples isolates the purely statistical component, cancelling the Theorem~\ref{thm:approx_bound} approximation floor. Error bars are one s.d.\ over 5 independent cycle-sampling seeds (single trained map). The empirical log-log slope is $-0.48$, essentially matching the theoretical $O(1/\sqrt{K})$ prediction of Theorem~\ref{thm:convergence_rate} (slope $-0.50$). Pairwise local slopes $\{-0.43, -0.52, -0.53, -0.41\}$ are centered on $-0.50$ with small sampling scatter.}
\label{fig:convergence}
\end{figure}

Figure~\ref{fig:convergence} provides a direct numerical test of Theorem~\ref{thm:convergence_rate}. A key subtlety in the measurement methodology is the decomposition of the total error:
\[
    |S_K - \textstyle\int \varphi\,f_{\mathrm{target}}| \;\leq\; \underbrace{|S_K - \mu_G|}_{\text{statistical}} \;+\; \underbrace{|\mu_G - \textstyle\int \varphi\,f_{\mathrm{target}}|}_{\text{approximation}}.
\]
The observed empirical slope is $-0.48$, essentially matching the theoretical $-0.50$; pairwise local slopes $\{-0.43, -0.52, -0.53, -0.41\}$ are centered on $-0.50$ with small sampling scatter. Sensitivity diagnostics (including a two-term $aK^{-1/2}+bK^{-1}$ fit whose $b$-coefficient is indistinguishable from zero at the observed noise level) are in Appendix~\ref{app:slope_fit}.

 
\section{Milano Real-Data Evaluation: Full Results}
\label{app:milano_details}
 
This appendix provides full tables and figures omitted from the body for
space: full coverage/energy/NFZ tables (\S\ref{app:full_metrics}), the
coverage--NFZ Pareto figure (\S\ref{app:milano_nfz_fig}), velocity
realizability (\S\ref{app:velocity}), and multi-agent/amortization details
(\S\ref{app:milano_amortization}).
 
\subsection{Full Coverage, Energy, and NFZ Tables}
\label{app:full_metrics}
 
Tables~\ref{tab:milano_kinematics}--\ref{tab:milano_nfz_full} report metrics
omitted from Table~\ref{tab:milano_headline}: $\ell_1$ visitation distance,
total energy $E_{\mathrm{tot}}$, Fourier ergodic metric $\mathcal{E}$, arc
length $L$, and energy-per-meter $E/L$ on the kinematics side; max depth,
dwell time, incursions, and normalized path-inside on the constraint side.
The qualitative ordering is consistent with the body headline metrics:
OT-CFM~\textbf{+NFZ} attains the lowest $\mathcal{E}$ in both configurations
(\textbf{0.0037 / 0.0027}), and OT-CFM~\textbf{+E} achieves the lowest $E/L$
(\textbf{41.6 / 42.3}\,J/m).
 
\begin{table}[H]
\centering
\caption{Additional coverage and kinematic metrics on Milano under both NFZ
configurations. Mean$_{\pm\text{std}}$ over 3 seeds; \textbf{best} per column
among coverage-competitive methods ($\rho\ge 0.5$).
Methods marked $^\dagger$ are trivially compliant (cf.\ Table~\ref{tab:milano_headline}).}
\label{tab:milano_kinematics}
\scriptsize
\setlength{\tabcolsep}{3pt}
\begin{tabular}{l cc cc cc cc}
\toprule
& \multicolumn{2}{c}{$\ell_1\downarrow$} & \multicolumn{2}{c}{$E_{\mathrm{tot}}$ (kJ)$\downarrow$} & \multicolumn{2}{c}{$\mathcal{E}\downarrow$} & \multicolumn{2}{c}{$E/L$ (J/m)$\downarrow$} \\
\cmidrule(lr){2-3}\cmidrule(lr){4-5}\cmidrule(lr){6-7}\cmidrule(lr){8-9}
Method & 1D & MD & 1D & MD & 1D & MD & 1D & MD \\
\midrule
TOES$^\dagger$          & \ms{1.63}{0}     & \ms{1.58}{0}     & \ms{410}{0}     & \ms{410}{0}     & \ms{0.156}{0}     & \ms{0.178}{0}     & \ms{56.2}{0}      & \ms{57.0}{0}      \\
Time-warp               & \ms{0.662}{.02}  & \ms{0.615}{.01}  & \ms{1360}{18}   & \ms{1420}{11}   & \ms{0.023}{.003}  & \ms{0.019}{.002}  & \ms{51.9}{0.7}    & \ms{54.2}{0.4}    \\
OT                      & \ms{0.795}{.002} & \ms{0.849}{.006} & \ms{25600}{575} & \ms{26800}{722} & \ms{.0055}{.0004} & \ms{.0078}{.0009} & \ms{197}{2}       & \ms{201}{2}       \\
SAC                     & \ms{1.05}{.09}   & \ms{1.33}{.26}   & \ms{65300}{2520}& \ms{50700}{6660}& \ms{0.122}{.06}   & \ms{0.252}{.16}   & \ms{165}{2}       & \ms{195}{27}      \\
FMEC-Stein$^\dagger$    & \ms{1.98}{.0001} & \ms{1.98}{.0001} & \ms{418}{1}     & \ms{416}{1}     & \ms{0.242}{.08}   & \ms{0.599}{.10}   & \ms{90.2}{8}      & \ms{79.2}{4}      \\
FMEC-Sinkhorn$^\dagger$ & \ms{1.99}{.0002} & \ms{1.99}{.0002} & \ms{421}{0.07}  & \ms{421}{0.10}  & \ms{0.300}{.006}  & \ms{0.288}{.004}  & \ms{115}{0.5}     & \ms{117}{0.8}     \\
\midrule
OT-CFM                  & \ms{0.463}{.004}      & \ms{0.420}{.006}      & \ms{5390}{119}  & \ms{6060}{36}   & \ms{\mathbf{.0036}}{2.6{\cdot}10^{-5}} & \ms{.0027}{9{\cdot}10^{-6}}           & \ms{94.0}{2}      & \ms{105}{0.5}     \\
OT-CFM +NFZ             & \ms{\mathbf{0.438}}{.002}  & \ms{\mathbf{0.408}}{.002}  & \ms{5660}{22}   & \ms{5680}{73}   & \ms{.0037}{9{\cdot}10^{-5}}            & \ms{\mathbf{.0027}}{7{\cdot}10^{-5}}  & \ms{97.2}{0.4}    & \ms{98.3}{0.7}    \\
OT-CFM +E               & \ms{0.960}{.06}       & \ms{0.926}{.03}       & \ms{1300}{362}  & \ms{1310}{338}  & \ms{0.054}{.02}   & \ms{0.048}{.01}   & \ms{\mathbf{41.6}}{7} & \ms{\mathbf{42.3}}{7} \\
OT-CFM +A               & \ms{0.954}{.01}       & \ms{1.04}{.06}        & \ms{2600}{458}  & \ms{2640}{797}  & \ms{0.022}{.009}  & \ms{0.040}{.01}   & \ms{65.1}{9}      & \ms{66.8}{14}     \\
OT-CFM +NFZ+E           & \ms{0.903}{.04}       & \ms{1.06}{.04}        & \ms{2080}{182}  & \ms{4270}{86}   & \ms{0.026}{.009}  & \ms{0.093}{.04}   & \ms{54.5}{3}      & \ms{82.8}{8}      \\
OT-CFM +NEA             & \ms{1.51}{.37}        & \ms{1.56}{.40}        & \ms{1640}{776}  & \ms{1160}{733}  & \ms{0.906}{.71}   & \ms{1.09}{.68}    & \ms{67.6}{8}      & \ms{106}{21}      \\
\bottomrule
\end{tabular}
\end{table}
 
\begin{table}[H]
\centering
\caption{Full no-fly-zone violation metrics on Milano. Mean$_{\pm\text{std}}$
over 3 seeds; \textbf{best} per column among coverage-competitive methods
($\rho\ge 0.5$). Methods marked $^\dagger$ are trivially compliant;
$^\ddagger$ achieves low violation at degraded coverage.}
\label{tab:milano_nfz_full}
\scriptsize
\setlength{\tabcolsep}{2.5pt}
\begin{tabular}{l cc cc cc cc}
\toprule
& \multicolumn{2}{c}{max depth$\downarrow$} & \multicolumn{2}{c}{\#\,incursions$\downarrow$} & \multicolumn{2}{c}{dwell (s)$\downarrow$} & \multicolumn{2}{c}{path-in (norm.)$\downarrow$} \\
\cmidrule(lr){2-3}\cmidrule(lr){4-5}\cmidrule(lr){6-7}\cmidrule(lr){8-9}
Method & 1D & MD & 1D & MD & 1D & MD & 1D & MD \\
\midrule
TOES$^\dagger$          & \ms{.0011}{0}     & \ms{0}{0}         & \ms{20}{0}      & \ms{0}{0}       & \ms{36.9}{0}   & \ms{0}{0}      & \ms{0.527}{0}   & \ms{0}{0}      \\
Time-warp               & \ms{0.098}{.002}  & \ms{0.026}{.001}  & \ms{49.3}{5.3}  & \ms{59.3}{0.9}  & \ms{233}{34}   & \ms{26.3}{3.5} & \ms{8.38}{1.0}  & \ms{17.3}{1.5} \\
OT                      & \ms{0.064}{.0002} & \ms{0.072}{.001}  & \ms{148}{28}    & \ms{185}{29}    & \ms{83.2}{2.6} & \ms{107}{9.2}  & \ms{13.9}{0.3}  & \ms{28.4}{2.6} \\
SAC                     & \ms{0.089}{0}     & \ms{0.177}{0}     & \ms{810}{497}   & \ms{838}{117}   & \ms{134}{94}   & \ms{896}{238}  & \ms{120}{75}    & \ms{452}{148}  \\
FMEC-Stein$^\dagger$    & \ms{0}{0}         & \ms{0}{0}         & \ms{0}{0}       & \ms{0}{0}       & \ms{0}{0}      & \ms{0}{0}      & \ms{0}{0}       & \ms{0}{0}      \\
FMEC-Sinkhorn$^\dagger$ & \ms{0}{0}         & \ms{0}{0}         & \ms{0}{0}       & \ms{0}{0}       & \ms{0}{0}      & \ms{0}{0}      & \ms{0}{0}       & \ms{0}{0}      \\
\midrule
OT-CFM                  & \ms{0.093}{.001}  & \ms{0.174}{.013}  & \ms{60.0}{0}    & \ms{60.7}{2.5}  & \ms{57.6}{2.0} & \ms{39.6}{4.4} & \ms{9.43}{0.06} & \ms{17.4}{0.6} \\
OT-CFM +NFZ             & \ms{\mathbf{0.008}}{.010} & \ms{\mathbf{0.060}}{.025} & \ms{\mathbf{2.67}}{2.5} & \ms{\mathbf{34.7}}{9.8} & \ms{\mathbf{0.27}}{.25} & \ms{\mathbf{4.80}}{2.0} & \ms{3.87}{0.3} & \ms{\mathbf{12.2}}{1.2} \\
OT-CFM +E               & \ms{0.099}{.001}  & \ms{0.183}{.009}  & \ms{75.3}{9.0}  & \ms{62.3}{2.5}  & \ms{152}{33}   & \ms{336}{9.0}  & \ms{12.1}{2.4}  & \ms{18.5}{1.6} \\
OT-CFM +A               & \ms{0.093}{.002}  & \ms{0.196}{.002}  & \ms{85.3}{36}   & \ms{90.0}{22}   & \ms{130}{76}   & \ms{416}{48}   & \ms{14.6}{6.8}  & \ms{27.4}{5.3} \\
OT-CFM +NFZ+E           & \ms{0.023}{.018}  & \ms{0.196}{.003}  & \ms{2.67}{2.5}  & \ms{103}{18}    & \ms{0.27}{.25} & \ms{303}{70}   & \ms{\mathbf{2.68}}{2.3} & \ms{37.7}{2.8} \\
OT-CFM +NEA$^\ddagger$  & \ms{0.007}{.010}  & \ms{0.028}{.040}  & \ms{0.67}{0.9}  & \ms{4.00}{5.7}  & \ms{2.60}{3.7} & \ms{0.80}{1.1} & \ms{0.031}{.04} & \ms{2.39}{3.4} \\
\bottomrule
\end{tabular}
\end{table}
 
\subsection{Coverage--NFZ Pareto Figure}
\label{app:milano_nfz_fig}
 
Figure~\ref{fig:milano_nfz_pareto} shows the constraint-axis Pareto
trade-off: coverage (Pearson $\rho$) against fraction of trajectory inside any
NFZ disc. The operationally relevant region is the lower-right corner (high
coverage, low violation); only OT-CFM~\textbf{+NFZ} reaches it. Trivially
compliant baselines (TOES, FMEC) collapse to the upper-left at $\rho\le 0.29$;
high-coverage baselines without an NFZ term (OT, time-warping) sit far above
the violation axis.
 
\begin{figure}[H]
\centering
\includegraphics[width=0.49\textwidth]{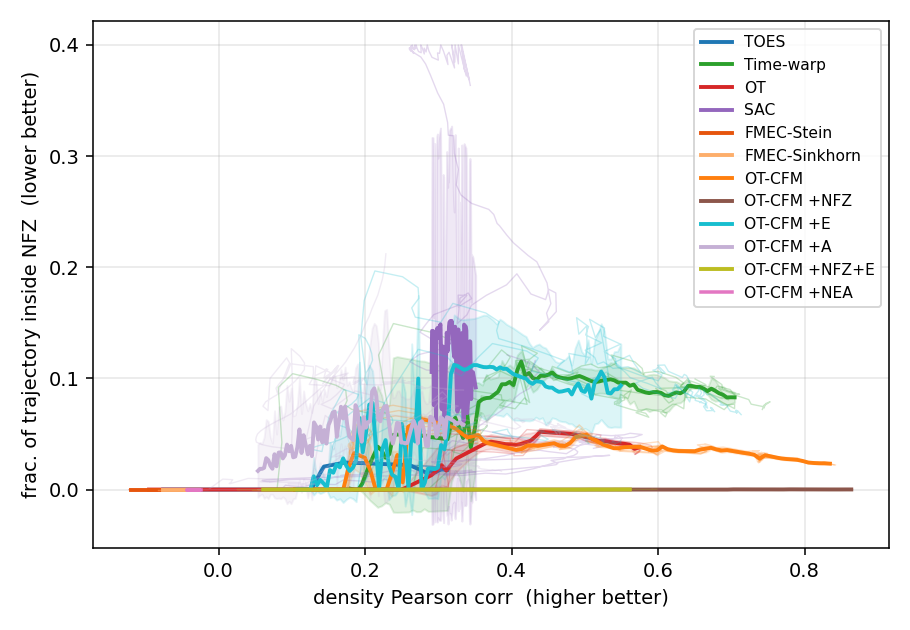}\hfill
\includegraphics[width=0.49\textwidth]{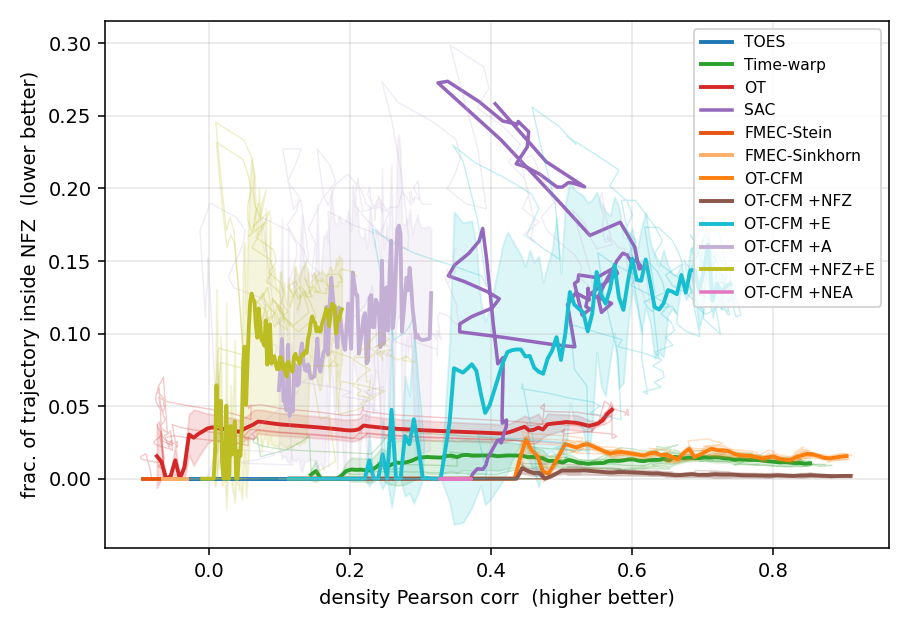}
\caption{Coverage--NFZ Pareto on Milano. \textbf{Left:} single-disc (1D).
\textbf{Right:} multi-disc (MD). Pearson $\rho$ (higher better) plotted
against fraction of trajectory inside any NFZ disc (lower better;
lower-right is best). Shaded bands are $\pm 1$ std over three seeds.}
\label{fig:milano_nfz_pareto}
\end{figure}
 
\subsection{Coverage--Energy Trade-off: Additional Coverage Axes}
\label{app:cov_energy_extra}
 
Figure~\ref{fig:pareto_extra} reproduces the coverage--energy frontier under
three additional coverage axes: $\ell_1$ visitation distance, $\ell_2$
visitation distance, and Fourier ergodic metric $\mathcal{E}$. These
complement the Pearson-$\rho$ view in the main body
(Figure~\ref{fig:milano_pareto}). The qualitative ordering is preserved across
all three: OT-CFM variants trace a Pareto front below all baselines across
roughly two decades of the energy axis, with \textbf{+E} at the low-energy end
and \textbf{+NFZ} at the highest-fidelity end.
 
\begin{figure}[H]
\centering
\includegraphics[width=0.32\textwidth]{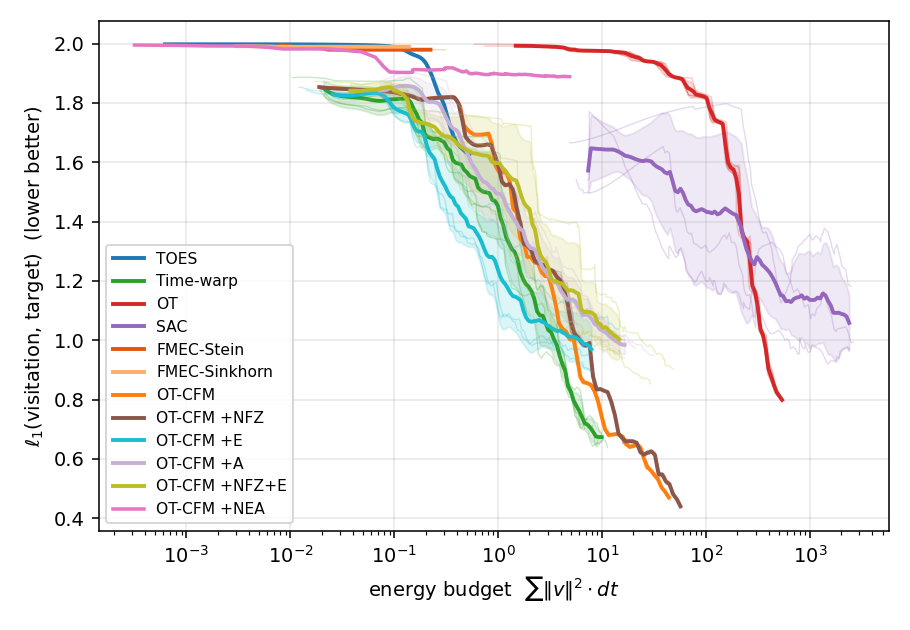}\hfill
\includegraphics[width=0.32\textwidth]{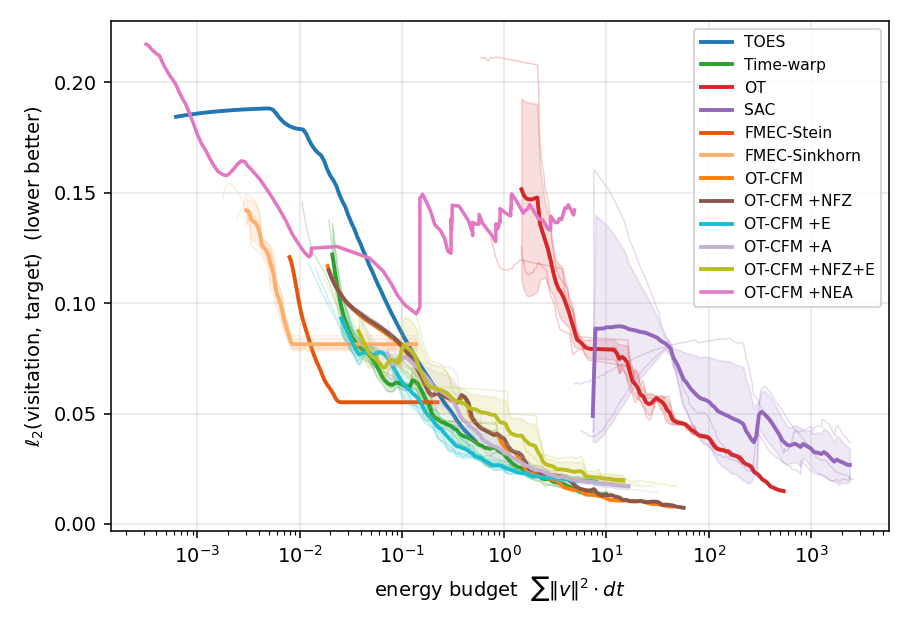}\hfill
\includegraphics[width=0.32\textwidth]{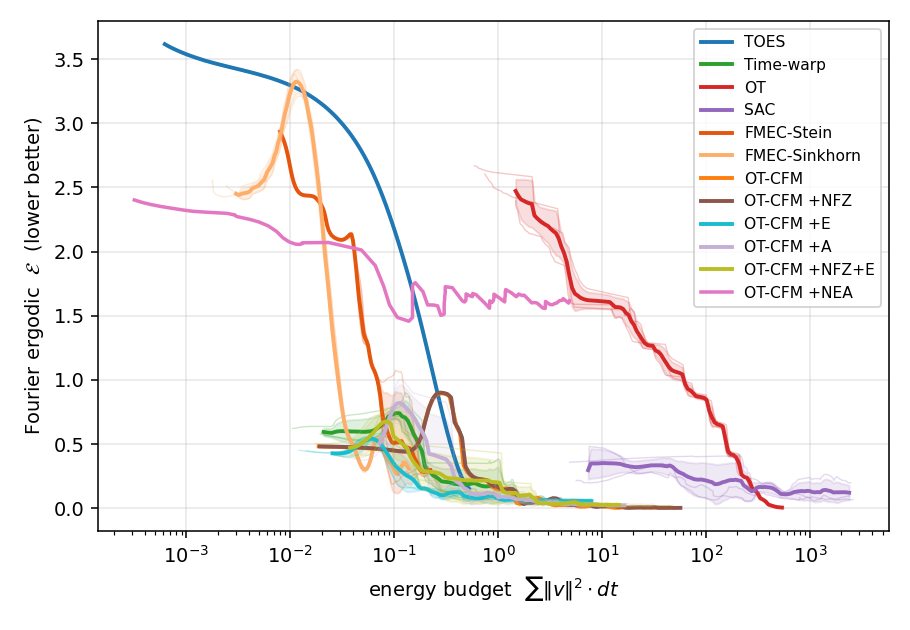}\\[0.3em]
\includegraphics[width=0.32\textwidth]{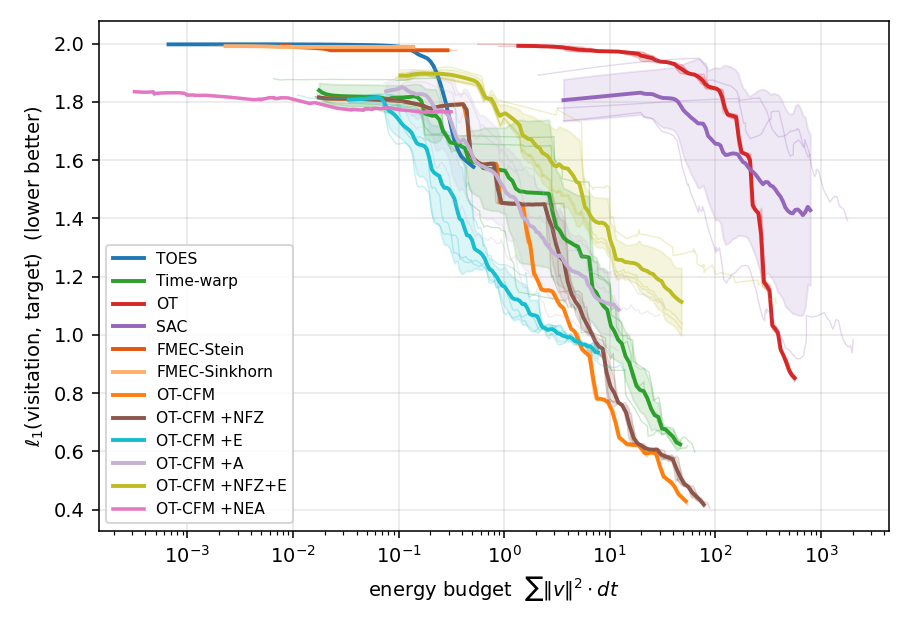}\hfill
\includegraphics[width=0.32\textwidth]{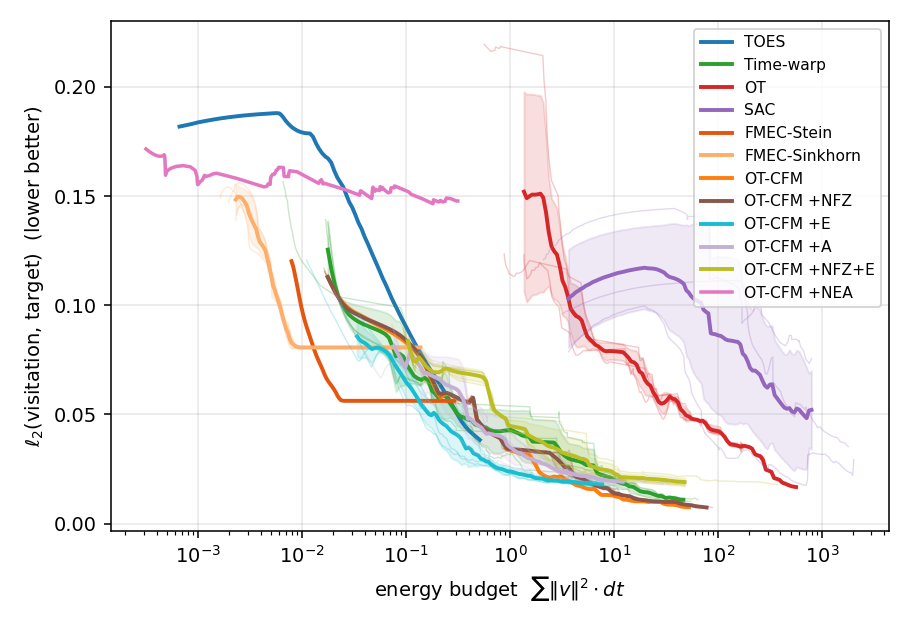}\hfill
\includegraphics[width=0.32\textwidth]{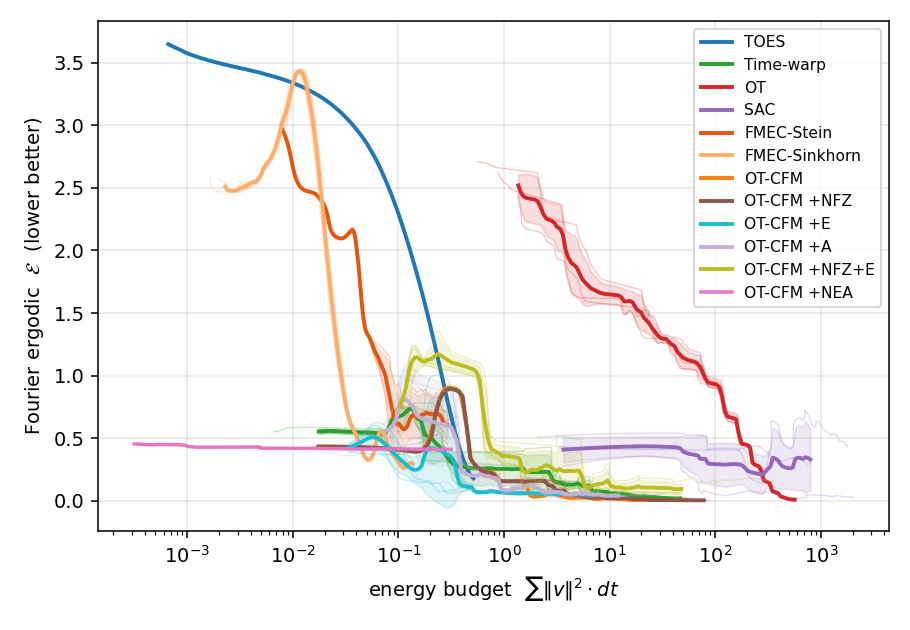}
\caption{Coverage--energy trade-off across additional coverage axes. Columns
(left to right): $\ell_1$, $\ell_2$, Fourier ergodic $\mathcal{E}$. Rows:
\textbf{top} single-disc NFZ, \textbf{bottom} multi-disc NFZ. Lower is better
on all axes. Shaded bands: $\pm 1$ std over 3 seeds.}
\label{fig:pareto_extra}
\end{figure}
 
\subsection{Velocity Realizability}
\label{app:velocity}
 
The energy and mission-time numbers assume each trajectory is flown at
physically realizable speeds, capped at $V_{\max}=30$\,m/s.
Several baselines produce raw trajectories exceeding this cap; for those
methods, the reported $\bar{P}$ corresponds to the trajectory re-timed to
satisfy the cap, tracked via a time-extension factor
$T_{\mathrm{realizable}}/T_{\mathrm{reported}}$.
OT and SAC have time-extension factors of ${\sim}2$ and ${\sim}5$
respectively, meaning their reported spatial coverage is achievable only if
the actual flight is multiple times longer.
OT-CFM~\textbf{+E} has time-extension factor $1.00$ in both scenarios with
${<}2\%$ of samples exceeding the cap (Table~\ref{tab:milano_velocity}).
 
\begin{table}[H]
\centering
\caption{Velocity statistics and time-extension factors on Milano
(single-disc; multi-disc values in parentheses where they differ
substantially). \texttt{frac over cap} reports the fraction of trajectory
samples exceeding the $30$\,m/s realistic cap.}
\label{tab:milano_velocity}
\small
\setlength{\tabcolsep}{4pt}
\begin{tabular}{lcccc}
\toprule
Method & median $V$ (m/s) & p99 $V$ (m/s) & frac over cap (\%) & time-ext.\ factor \\
\midrule
TOES          & 0.57        & 0.62        & 0.0         & 1.00        \\
Time-warp     & 8.34 (7.30) & 34.8 (41.0) & 1.6 (1.7)   & 1.01 (1.05) \\
OT            & 43.4        & 156         & 68.8        & 1.82 (1.87) \\
SAC           & 163 (81.6)  & 346         & 73.9 (58.7) & 5.41 (3.92) \\
FMEC-Stein    & 1.29        & 3.42        & 0.0         & 1.00        \\
FMEC-Sinkhorn & 1.14        & 1.79        & 0.0         & 1.00        \\
\midrule
OT-CFM        & 19.8 (18.9) & 72.3 (98.4) & 18.4 (17.9) & 1.08 (1.12) \\
OT-CFM +NFZ   & 19.6 (18.2) & 90.4 (95.6) & 18.1 (16.7) & 1.10 (1.13) \\
OT-CFM +E     & 11.1 (11.0) & 29.7 (33.5) & \textbf{0.9 (1.3)} & \textbf{1.00 (1.00)} \\
OT-CFM +A     & 13.7 (12.7) & 43.6 (54.7) & 4.5 (5.8)   & 1.02 (1.03) \\
OT-CFM +NFZ+E & 13.2 (14.1) & 37.9 (91.2) & 2.1 (9.6)   & 1.02 (1.14) \\
\bottomrule
\end{tabular}
\end{table}
 
\subsection{Multi-Agent Reuse and Amortization}
\label{app:milano_amortization}
 
A single trained $G_\theta$ supports independent multi-agent deployment
without re-training, with predicted collective convergence rate
$O(1/\sqrt{NK})$ (\S\ref{sec:multi}, Theorem~\ref{thm:convergence_rate}).
We instantiate $N\in\{1,2,5,10,20\}$ agents from independent latent
trajectories sharing the OT-CFM map.
Figure~\ref{fig:milano_multi_agent} shows aggregate Fourier ergodic metric
$\mathcal{E}_N$ vs.\ fleet size $N$; the metric drops ${\sim}2.5\times$ from
$N{=}1$ to $N{=}20$ in both scenarios, broadly tracking the $1/\sqrt{N}$
reference; per-agent power is essentially flat.
 
\begin{figure}[H]
\centering
\includegraphics[width=0.49\textwidth]{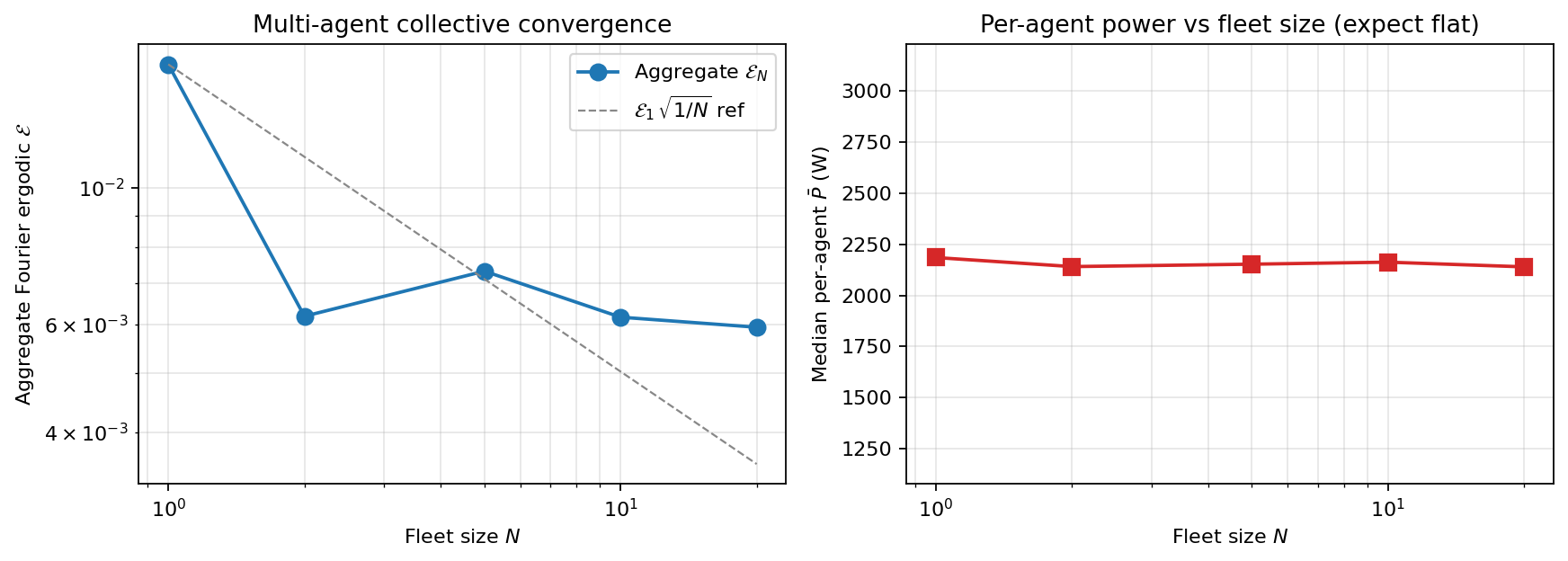}\hfill
\includegraphics[width=0.49\textwidth]{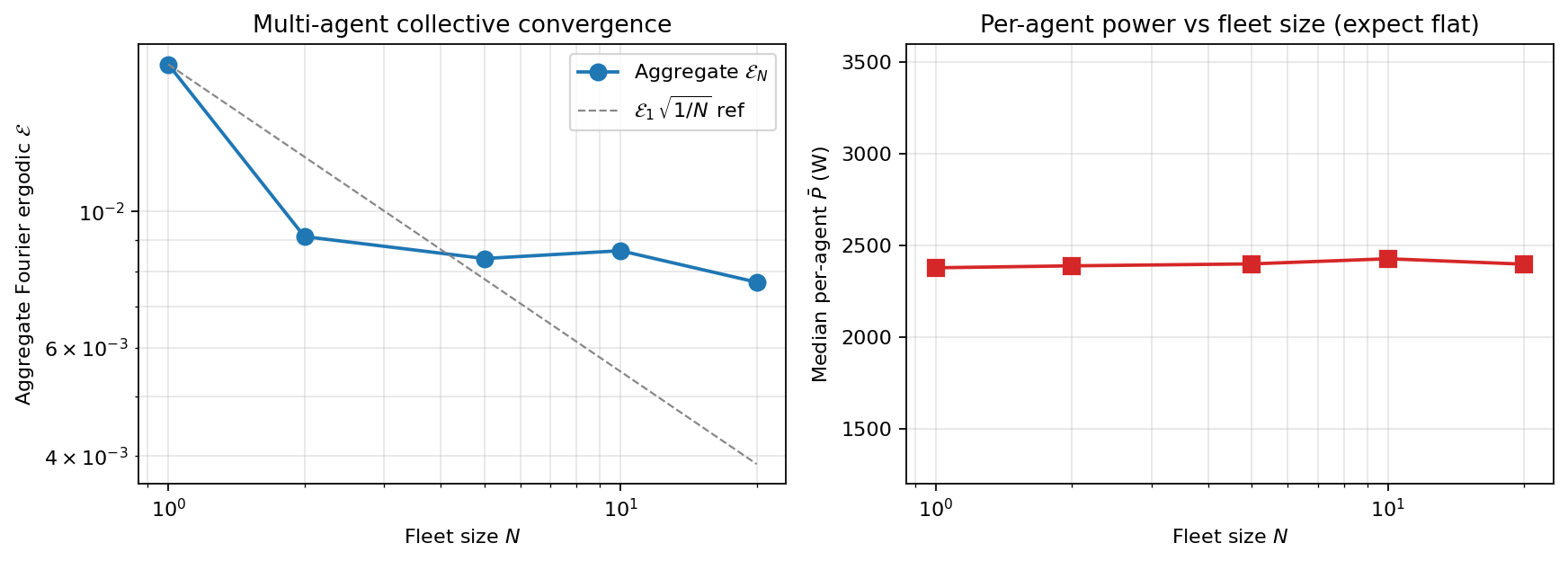}
\caption{Multi-agent reuse on Milano. \textbf{Left:} single-disc.
\textbf{Right:} multi-disc. Aggregate $\mathcal{E}_N$ vs.\ fleet size $N$,
with the predicted $\mathcal{E}_1\sqrt{1/N}$ reference (dashed). Per-agent
power remains essentially flat across $N$.}
\label{fig:milano_multi_agent}
\end{figure}
 
\paragraph{Amortization cost structure.}
Table~\ref{tab:milano_amortization_table} compares per-trajectory online
cost across methods. For all baselines this is the time to produce a single
trajectory (optimization, re-derivation, or RL rollout). For OT-CFM,
the equivalent online cost is a single inference pass through $v_\theta$:
$779\pm16$\,ms (1D) and $780\pm6$\,ms (MD). The offline training cost of
${\approx}1.7$\,h is paid once and amortised across all subsequent
trajectories and agents. For a fleet of $N$ agents, since each agent's
latent trajectory is independent and all inference passes share the same
$G_\theta$, they can be parallelised, keeping the wall-clock fleet cost
equal to the single-agent inference time regardless of $N$.
 
\begin{table}[H]
\centering
\caption{Per-trajectory online cost and marginal cost of fleet expansion.
Baselines report time to produce one trajectory; OT-CFM reports inference
time (offline training ${\approx}1.7$\,h, paid once). For OT-CFM the
$N{=}10$ fleet cost equals the single-trajectory cost because all agents'
inference passes are independent and run in parallel.}
\label{tab:milano_amortization_table}
\small
\setlength{\tabcolsep}{4pt}
\begin{tabular}{l cc c}
\toprule
& \multicolumn{2}{c}{Online cost (per trajectory)} & Multi-agent ($N{=}10$) \\
\cmidrule(lr){2-3}
Method & 1D & MD & (any scenario) \\
\midrule
TOES          & 2.79\,h$\pm$5.5\,min  & 40.8\,s$\pm$0.6\,s    & $10\times$ SQP solve   \\
Time-warp     & 111\,ms$\pm$22\,ms    & 78\,ms$\pm$11\,ms     & $10\times$ re-derive   \\
OT            & 309\,ms$\pm$3\,ms     & 225\,ms$\pm$1\,ms     & $10\times$ OT plan     \\
SAC           & 1.32\,h$\pm$9\,s      & 1.61\,h$\pm$15\,min   & $10\times$ retrain     \\
FMEC-Stein    & 4.7\,min$\pm$34\,s    & 8.8\,min$\pm$2.9\,min & $10\times$ re-optimise \\
FMEC-Sinkhorn & 3.3\,min$\pm$34\,s    & 4.5\,min$\pm$1.2\,min & $10\times$ re-optimise \\
\midrule
\textbf{OT-CFM (ours)} & $779\pm16$\,ms & $780\pm6$\,ms & $779\pm16$\,ms / $780\pm6$\,ms (parallel) \\
\bottomrule
\end{tabular}
\end{table}

 
\section{Qualitative Trajectory Visualizations}
\label{app:trajectories}
 
Figures~\ref{fig:trajectories_single}--\ref{fig:trajectories_multi} visualize
representative trajectories for all methods on the Milano target density
(purple heatmap) with the NFZ overlay (red dashed circles), under both NFZ
configurations. These plots support the trivial-compliance claim made in
\S\ref{sec:exp_real}: TOES produces a single straight diagonal line that does
not redistribute mass across the domain, and the FMEC variants produce short,
near-stationary curls confined to one corner. Their zero NFZ violation and low
Zeng power are accidental byproducts of failing to cover the domain rather
than evidence of a constraint-aware or energy-efficient design. By contrast,
OT-CFM trajectories traverse the high-density regions while respecting the
NFZ overlay, with per-variant differences (e.g., \textbf{+E} producing
visibly straighter and shorter paths) that match the quantitative differences
in Tables~\ref{tab:milano_headline}--\ref{tab:milano_nfz_full}.
 
\begin{figure}[H]
\centering
\includegraphics[width=0.95\textwidth]{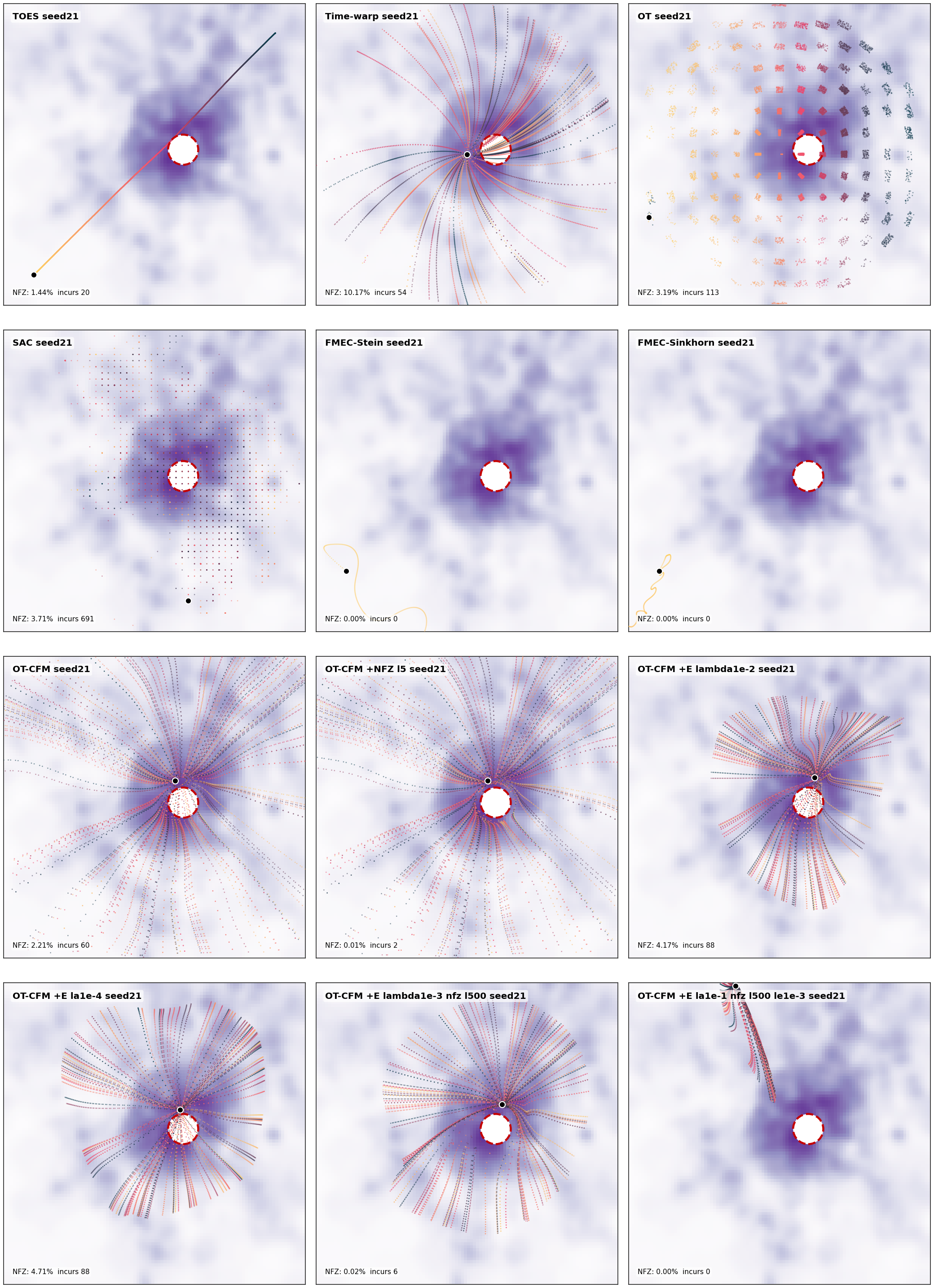}
\caption{Representative trajectories on the Milano target (purple heatmap)
under the single-disc NFZ configuration (red dashed circle), one panel per
method. TOES and the FMEC variants fail to redistribute mass across the
domain — their zero NFZ violation in Table~\ref{tab:milano_nfz_full} is
accidental. OT-CFM variants traverse the high-density regions while
respecting the NFZ; \textbf{+NFZ} visibly bends paths around the exclusion
disc, and \textbf{+E} produces visibly straighter and shorter paths consistent
with its low-energy-per-meter operating point.}
\label{fig:trajectories_single}
\end{figure}
 
\begin{figure}[H]
\centering
\includegraphics[width=0.95\textwidth]{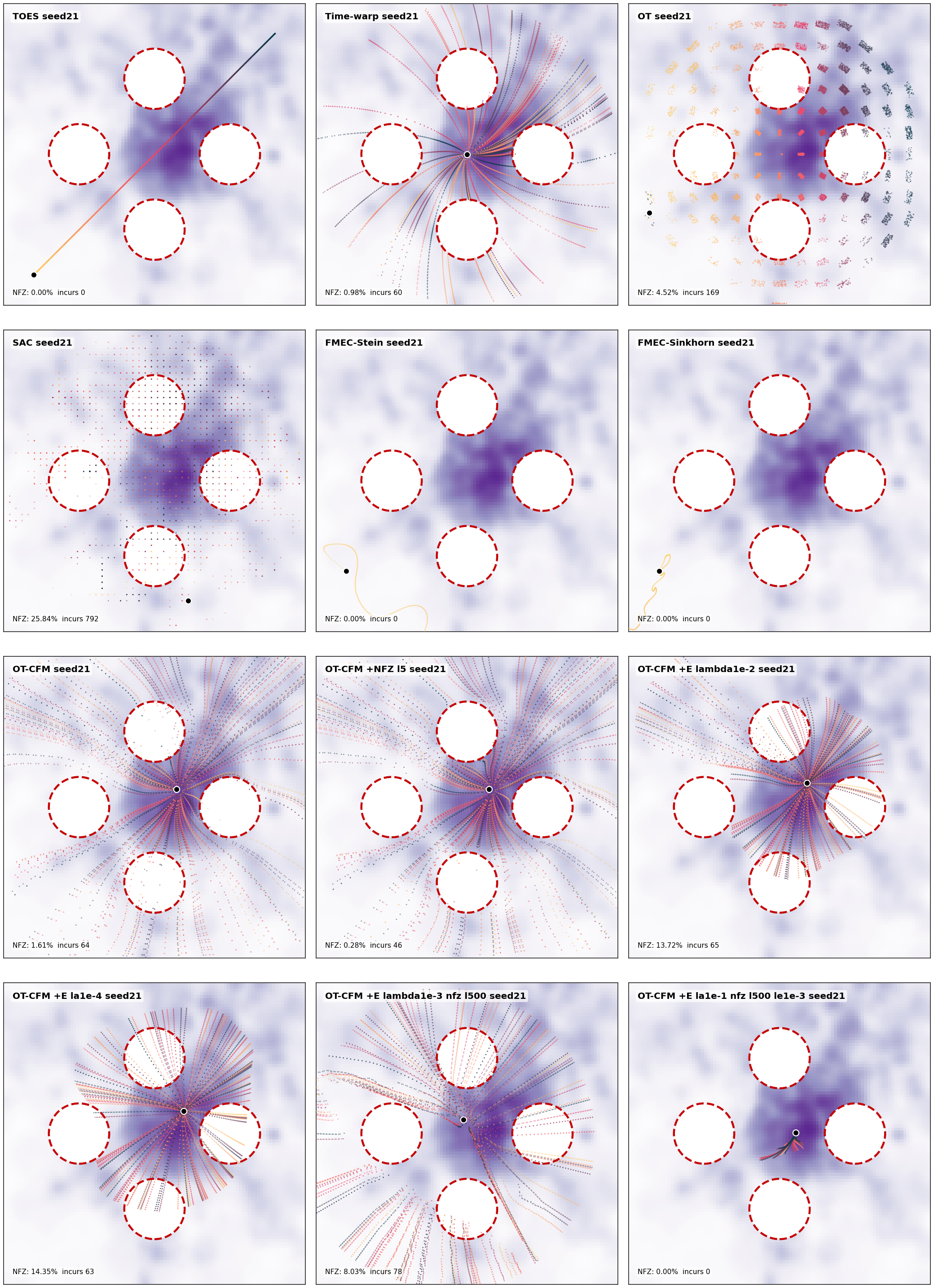}
\caption{Representative trajectories on the Milano target (purple heatmap)
under the multi-disc NFZ configuration (four red dashed circles), one panel
per method. The trivial-compliance pattern is even starker here: TOES's
diagonal line happens not to intersect any of the four discs, and the FMEC
variants' corner curls remain far from every NFZ. OT-CFM~\textbf{+NFZ} routes
around all four exclusion zones while preserving coverage of the surrounding
high-density region.}
\label{fig:trajectories_multi}
\end{figure}

\section{Limitations and Future Work}
\label{app:limitations_future_work}

This appendix expands the scope discussion in Section~\ref{sec:conclusion}. The framework establishes a reusable pushforward map with asymptotic ergodicity (Proposition~\ref{prop:ergodic}), finite-cycle concentration at rate $O(1/\sqrt{K})$ (Theorem~\ref{thm:convergence_rate}), and a trainable approximation envelope (Theorem~\ref{thm:approx_bound}, Corollary~\ref{cor:end_to_end}). The four scope boundaries below identify where the present guarantees are empirical, soft-constrained, or experimentally bounded; each suggests a direct extension of the framework rather than a separate research program.

\subsection{Certified approximation bounds}\label{app:limitations_certification}

The practical Theorem~\ref{thm:approx_bound} envelope $e^{L_v}\cdot\varepsilon_v$ used throughout our calibration table (Table~\ref{tab:Lv_calibration}) plugs in the empirical power-iteration estimate $\hat L_v$ from the trained $v_\theta$. This estimator is empirically tight and admits non-vacuous numerical envelopes, but it is a lower-bound proxy on the true velocity-field Lipschitz constant rather than a certified upper bound; the architectural product-norm bound $L_v^{\mathrm{net}}$ from Appendix~\ref{app:invertibility} is rigorous but, in our experiments, $\sim$30$\times$ looser than the empirical estimate. Closing this gap is mechanically straightforward and does not require modifying the pushforward construction itself: spectral normalization \citep{miyato2018spectral}, Parseval-style orthogonal constraints, interval-bound propagation, or certified Jacobian regularization can be combined with the existing CFM training objective to produce a $v_\theta$ whose architectural Lipschitz constant matches the empirical operator norm. The same Theorem~\ref{thm:approx_bound} statement then yields a certified upper envelope on $W_2$ and Corollary~\ref{cor:end_to_end} a certified end-to-end coverage envelope, with no change to the latent-trajectory or pushforward design. Finite-sample generalization, the stochastic-optimization gap on the CFM loss, and the Sinkhorn-coupling residual that currently appear in the envelope as empirical diagnostics admit an analogous tightening through standard statistical-learning machinery; their values in the present experiments serve as benchmarks against which a certified version can be compared.

\subsection{Latent geometry and topology}\label{app:limitations_geometry}

The annular latent domain $D_\delta$ makes the radial construction of Section~\ref{sec:latent} smooth away from the origin and supports exact uniform ergodicity on $D_\delta$, which is what allows Proposition~\ref{prop:ergodic} to be stated as an equality rather than an approximation. When the target $f_{\mathrm{target}}$ has simply connected support, no homeomorphism between $D_\delta$ and $\mathrm{supp}(f_{\mathrm{target}})$ exists, and the pushforward incurs an $O(\delta)$ topology residual. Theorem~\ref{thm:acc_bound} shows the latent acceleration energy scales as $\Theta(\delta^{-4})$, so $\delta$ acts as a tunable geometry-energy parameter rather than a free hyperparameter. Several extensions reduce this residual without abandoning the analytic-latent / learned-pushforward decomposition. A learned latent domain trained jointly with $G_\theta$ can adapt $D_\delta$ to the target support; mixtures of annular cycles indexed by an angular phase produce richer latent measures matched to bimodal targets; non-radial latent processes (e.g., Lissajous curves with engineered period structure or stochastic differentials with prescribed invariant density) drop the radial-sweep assumption while preserving the i.i.d.\ cycle structure that drives the $O(1/\sqrt{K})$ rate. A complementary extension matches latent topology to target topology by template selection: an annular latent for annular targets, a disc-with-handle for targets with one excluded region, and so on. Each variant retains the central decomposition (analytically understood latent followed by learned pushforward) and inherits the corresponding theorem statements with structurally identical proofs. The boundary smoothing of Appendix~\ref{app:boundary_smooth} can be combined with any of the above to recover a globally $C^1$ velocity at the cost of a controlled additional residual.

\subsection{From soft penalties to certified safety}\label{app:limitations_safety}

The penalty mechanism in~\eqref{eq:total_loss} provides a differentiable gradient signal at constraint boundaries (no-fly zones, acceleration ceilings, fairness terms), and Experiment~3 shows that this signal is sufficient to drive constraint violations from $\sim$$1\%$ to $\sim$$0.1\%$ at moderate density-fidelity cost. This is the appropriate mechanism for studying density-constraint-energy trade-offs and for non-safety-critical deployments. Safety-critical missions, where any trajectory entry into a no-fly zone is unacceptable, should layer a hard-feasibility component on top of the trained $G_\theta$ rather than rely on the soft penalty alone. Compatible options include: projection of $G_\theta(\mathbf{z}(t))$ onto the feasible set (closed-form for half-space and disc constraints, semidefinite for general convex sets); control barrier functions that admit the trained pushforward as a nominal controller and modify it minimally to enforce forward-invariance of the safe set; reachability filters that solve a Hamilton$-$Jacobi$-$Isaacs PDE offline and shield the deployed trajectory at run time; and constrained ODE integration methods that project velocity-field evaluations onto the tangent cone of the feasible set during RK4 rollout. In each case the learned map serves as the nominal coverage generator and the safety layer is an outer wrapper, preserving the amortized-design property that motivates the framework. Quantifying the coverage cost of each safety layer (the additional Wasserstein gap incurred by hard projection vs.\ soft penalty at matched zero-violation level) is a natural follow-up.

\subsection{Empirical scope and broader extensions}\label{app:limitations_extensions}

The body experiments are designed to isolate density matching (Experiment~1), demand-proportional allocation (Experiment~2), and differentiable constraint trade-offs (Experiment~3) in 2D open-loop settings, with each experiment chosen to expose a single theoretical mechanism. This scoping makes the theoretical predictions visible in the empirical numbers (the $O(1/\sqrt{K})$ rate, the acceleration-ratio dependence on $L$, the $e^{L_v}\cdot\varepsilon_v$ envelope), but broader validation along several axes will sharpen the deployment story.

\paragraph{Higher-dimensional dynamics.} The framework extends to $\mathbb{R}^d$ with the same construction (Appendix~\ref{app:rd_extension}); 3D experiments require only re-tuning of $\delta$ and the latent trajectory family, with the theorem statements carrying through with dimension-dependent constants (Hessian tensor norm, Poincar\'{e} eigenvalue, latent-energy prefactor; see Appendix~\ref{app:rd_extension}). The empirical question is whether the $\Theta(\delta^{-(3d-2)})$ energy scaling (specializing to $\Theta(\delta^{-4})$ in 2D and $\Theta(\delta^{-7})$ in 3D) remains the binding constraint, or whether higher-order curvature effects dominate at the higher dimension.

\paragraph{Hardware-in-the-loop and closed-loop deployment.} The open-loop nature of the trajectories makes them directly compatible with standard quadrotor controllers; the empirical question is the coverage degradation introduced by tracking error, which can be folded into the approximation bound via a Lipschitz disturbance term added to $\varepsilon_v$. Closed-loop variants combining the nominal pushforward trajectory with feedback corrections for disturbances, localization error, or online measurements introduce a new theoretical question (how feedback perturbations affect the finite-cycle concentration and the pushforward occupancy), addressable via existing concentration-inequality and perturbation-bound machinery.

\paragraph{Coordinated multi-agent planning.} The current multi-agent claim is independent reuse of a single trained map (Appendix~\ref{app:coordinated}); fully coordinated planning with collision avoidance, formation constraints, or fairness coupling between agents extends the framework via joint pushforward maps on product spaces, mean-field penalties added to $\mathcal{L}_{\mathrm{total}}$, repulsive interaction terms, or fairness-aware coupling losses. All preserve the amortized-design principle while moving from independent reuse to coordinated fleet behavior.

\paragraph{Time-varying targets.} The present paper focuses on fixed $f_{\mathrm{target}}$ and open-loop execution. Time-varying densities can be addressed by conditioning $v_\theta$ on a target descriptor, by interpolating between learned maps, or by retraining on a slow planning timescale while retaining open-loop execution between updates. The theoretical extension folds the target-evolution rate into the approximation bound as an additional residual term.

\newpage
\input{checklist.tex}
\end{document}

%% file: checklist.tex
\section*{NeurIPS Paper Checklist}

\begin{enumerate}

\item {\bf Claims}
    \item[] Question: Do the main claims made in the abstract and introduction accurately reflect the paper's contributions and scope?
    \item[] Answer: \answerYes{}
    \item[] Justification: The abstract and introduction (Section~1) clearly state six contributions (ergodicity transfer, acceleration bound, convergence rate, approximation bound, three operational properties of the decomposition, and experimental validation) that are formally proved in Section~3 and validated in Section~4. Limitations are discussed in Section~\ref{sec:conclusion}.
    \item[] Guidelines:
    \begin{itemize}
        \item The answer \answerNA{} means that the abstract and introduction do not include the claims made in the paper.
        \item The abstract and/or introduction should clearly state the claims made, including the contributions made in the paper and important assumptions and limitations. A \answerNo{} or \answerNA{} answer to this question will not be perceived well by the reviewers.
        \item The claims made should match theoretical and experimental results, and reflect how much the results can be expected to generalize to other settings.
        \item It is fine to include aspirational goals as motivation as long as it is clear that these goals are not attained by the paper.
    \end{itemize}

\item {\bf Limitations}
    \item[] Question: Does the paper discuss the limitations of the work performed by the authors?
    \item[] Answer: \answerYes{}
    \item[] Justification: We have clearly stated the assumptions required for our theoretical results and the setups used in our numerical experiments. Additionally, we have discussed the limitations of this paper in the Conclusion section as well as in Appendix \ref{app:limitations_future_work}.

    \item[] Guidelines:
    \begin{itemize}
        \item The answer \answerNA{} means that the paper has no limitation while the answer \answerNo{} means that the paper has limitations, but those are not discussed in the paper.
        \item The authors are encouraged to create a separate ``Limitations'' section in their paper.
        \item The paper should point out any strong assumptions and how robust the results are to violations of these assumptions (e.g., independence assumptions, noiseless settings, model well-specification, asymptotic approximations only holding locally). The authors should reflect on how these assumptions might be violated in practice and what the implications would be.
        \item The authors should reflect on the scope of the claims made, e.g., if the approach was only tested on a few datasets or with a few runs. In general, empirical results often depend on implicit assumptions, which should be articulated.
        \item The authors should reflect on the factors that influence the performance of the approach. For example, a facial recognition algorithm may perform poorly when image resolution is low or images are taken in low lighting. Or a speech-to-text system might not be used reliably to provide closed captions for online lectures because it fails to handle technical jargon.
        \item The authors should discuss the computational efficiency of the proposed algorithms and how they scale with dataset size.
        \item If applicable, the authors should discuss possible limitations of their approach to address problems of privacy and fairness.
        \item While the authors might fear that complete honesty about limitations might be used by reviewers as grounds for rejection, a worse outcome might be that reviewers discover limitations that aren't acknowledged in the paper. The authors should use their best judgment and recognize that individual actions in favor of transparency play an important role in developing norms that preserve the integrity of the community. Reviewers will be specifically instructed to not penalize honesty concerning limitations.
    \end{itemize}

\item {\bf Theory assumptions and proofs}
    \item[] Question: For each theoretical result, does the paper provide the full set of assumptions and a complete (and correct) proof?
    \item[] Answer: \answerYes{}
    \item[] Justification: Theorems~1--3 state all assumptions explicitly (Lipschitz constants, boundedness, $C^2$ regularity). Complete proofs are provided in Appendices~B--E. Proof sketches and interpretations appear in the main text (Section~3).
    \item[] Guidelines:
    \begin{itemize}
        \item The answer \answerNA{} means that the paper does not include theoretical results.
        \item All the theorems, formulas, and proofs in the paper should be numbered and cross-referenced.
        \item All assumptions should be clearly stated or referenced in the statement of any theorems.
        \item The proofs can either appear in the main paper or the supplemental material, but if they appear in the supplemental material, the authors are encouraged to provide a short proof sketch to provide intuition.
        \item Inversely, any informal proof provided in the core of the paper should be complemented by formal proofs provided in appendix or supplemental material.
        \item Theorems and Lemmas that the proof relies upon should be properly referenced.
    \end{itemize}

    \item {\bf Experimental result reproducibility}
    \item[] Question: Does the paper fully disclose all the information needed to reproduce the main experimental results of the paper to the extent that it affects the main claims and/or conclusions of the paper (regardless of whether the code and data are provided or not)?
    \item[] Answer: \answerYes{}
    \item[] Justification: Full experimental details (network architecture, optimizer, hyperparameters, evaluation metrics) are provided in Appendix~\ref{app:experiments_full}. Target densities are defined analytically. Code will be released upon acceptance.
    \item[] Guidelines:
    \begin{itemize}
        \item The answer \answerNA{} means that the paper does not include experiments.
        \item If the paper includes experiments, a \answerNo{} answer to this question will not be perceived well by the reviewers: Making the paper reproducible is important, regardless of whether the code and data are provided or not.
        \item If the contribution is a dataset and\slash or model, the authors should describe the steps taken to make their results reproducible or verifiable.
        \item Depending on the contribution, reproducibility can be accomplished in various ways. For example, if the contribution is a novel architecture, describing the architecture fully might suffice, or if the contribution is a specific model and empirical evaluation, it may be necessary to either make it possible for others to replicate the model with the same dataset, or provide access to the model. In general. releasing code and data is often one good way to accomplish this, but reproducibility can also be provided via detailed instructions for how to replicate the results, access to a hosted model (e.g., in the case of a large language model), releasing of a model checkpoint, or other means that are appropriate to the research performed.
        \item While NeurIPS does not require releasing code, the conference does require all submissions to provide some reasonable avenue for reproducibility, which may depend on the nature of the contribution. For example
        \begin{enumerate}
            \item If the contribution is primarily a new algorithm, the paper should make it clear how to reproduce that algorithm.
            \item If the contribution is primarily a new model architecture, the paper should describe the architecture clearly and fully.
            \item If the contribution is a new model (e.g., a large language model), then there should either be a way to access this model for reproducing the results or a way to reproduce the model (e.g., with an open-source dataset or instructions for how to construct the dataset).
            \item We recognize that reproducibility may be tricky in some cases, in which case authors are welcome to describe the particular way they provide for reproducibility. In the case of closed-source models, it may be that access to the model is limited in some way (e.g., to registered users), but it should be possible for other researchers to have some path to reproducing or verifying the results.
        \end{enumerate}
    \end{itemize}

\item {\bf Open access to data and code}
    \item[] Question: Does the paper provide open access to the data and code, with sufficient instructions to faithfully reproduce the main experimental results, as described in supplemental material?
    \item[] Answer: \answerNo{}
    \item[] Justification: The code will be released upon acceptance. All experimental results for the proposed method and baselines are reproducible. Full implementation details are provided in Appendix~\ref{app:experiments_full}.
    
    \item[] Guidelines:
    \begin{itemize}
        \item The answer \answerNA{} means that paper does not include experiments requiring code.
        \item Please see the NeurIPS code and data submission guidelines (\url{https://neurips.cc/public/guides/CodeSubmissionPolicy}) for more details.
        \item While we encourage the release of code and data, we understand that this might not be possible, so \answerNo{} is an acceptable answer. Papers cannot be rejected simply for not including code, unless this is central to the contribution (e.g., for a new open-source benchmark).
        \item The instructions should contain the exact command and environment needed to run to reproduce the results. See the NeurIPS code and data submission guidelines (\url{https://neurips.cc/public/guides/CodeSubmissionPolicy}) for more details.
        \item The authors should provide instructions on data access and preparation, including how to access the raw data, preprocessed data, intermediate data, and generated data, etc.
        \item The authors should provide scripts to reproduce all experimental results for the new proposed method and baselines. If only a subset of experiments are reproducible, they should state which ones are omitted from the script and why.
        \item At submission time, to preserve anonymity, the authors should release anonymized versions (if applicable).
        \item Providing as much information as possible in supplemental material (appended to the paper) is recommended, but including URLs to data and code is permitted.
    \end{itemize}

\item {\bf Experimental setting/details}
    \item[] Question: Does the paper specify all the training and test details (e.g., data splits, hyperparameters, how they were chosen, type of optimizer) necessary to understand the results?
    \item[] Answer: \answerYes{}
    \item[] Justification: Section~\ref{sec:experiments} summarizes key settings; Appendix~\ref{app:experiments_full} provides complete details including network architecture, optimizer, learning rate schedule, OT coupling parameters, and evaluation metrics.
    \item[] Guidelines:
    \begin{itemize}
        \item The answer \answerNA{} means that the paper does not include experiments.
        \item The experimental setting should be presented in the core of the paper to a level of detail that is necessary to appreciate the results and make sense of them.
        \item The full details can be provided either with the code, in appendix, or as supplemental material.
    \end{itemize}

\item {\bf Experiment statistical significance}
    \item[] Question: Does the paper report error bars suitably and correctly defined or other appropriate information about the statistical significance of the experiments?
    \item[] Answer: \answerYes{}
    \item[] Justification: Error bars accompany every experiment that directly validates our theoretical claims. (i) \emph{Theorem~\ref{thm:convergence_rate} (convergence rate).} Figure~\ref{fig:convergence} reports $\|Z_K^{\mathrm{traj}} - Z_{\mathrm{iid}}\|_{\mathrm{RMSE}}$ as mean $\pm$ std (shaded region) over five independent cycle-sampling seeds at each $K\in\{5,10,20,50,100\}$; the empirical slope $-0.48$ approaches the theoretical $-0.50$ as $K$ grows, with pairwise local slopes $\{-0.43,-0.52,-0.53,-0.41\}$. (ii) \emph{Theorem~\ref{thm:approx_bound} (approximation bound).} Table~\ref{tab:Lv_calibration} reports the empirical 2-Wasserstein distance $\hat W_2$ and the map-level Lipschitz estimator $\hat L$ for the five evaluated configurations / rows of Table~\ref{tab:Lv_calibration}, each as mean $\pm$ std over three evaluation seeds for a fixed trained map; cross-seed stability is within $2\%$ on $\hat L$ and within $\sim\!4\text{--}24\%$ on $\hat W_2$ (the upper end is Exp.~2, whose binary discontinuity inflates seed variance relative to the smoother targets). In both cases the error bars capture trajectory-side randomness---the source Theorem~\ref{thm:convergence_rate} is designed to bound via closed-form Hoeffding concentration (Corollary~\ref{cor:end_to_end})---and the seeds vary cycle-sampling realizations of a single trained map, following the NeurIPS convention of treating simulation-seed variability of a deployed algorithm as the appropriate randomness source when target distributions are analytically specified. Density correlations and acceleration ratios reported in the experimental setup are single-training-seed point estimates on analytically defined target densities (no train/test split); these are setup-descriptive numbers, not direct tests of the theorems. A multi-seed training reproducibility kit (per-seed JSON-logging driver, aggregator, and drop-in table-regeneration script) will be released with the code at the camera-ready stage for applied follow-ups that wish to characterize training-seed variability.
    \item[] Guidelines:
    \begin{itemize}
        \item The answer \answerNA{} means that the paper does not include experiments.
        \item The authors should answer \answerYes{} if the results are accompanied by error bars, confidence intervals, or statistical significance tests, at least for the experiments that support the main claims of the paper.
        \item The factors of variability that the error bars are capturing should be clearly stated (for example, train/test split, initialization, random drawing of some parameter, or overall run with given experimental conditions).
        \item The method for calculating the error bars should be explained (closed form formula, call to a library function, bootstrap, etc.)
        \item The assumptions made should be given (e.g., Normally distributed errors).
        \item It should be clear whether the error bar is the standard deviation or the standard error of the mean.
        \item It is OK to report 1-sigma error bars, but one should state it. The authors should preferably report a 2-sigma error bar than state that they have a 96\% CI, if the hypothesis of Normality of errors is not verified.
        \item For asymmetric distributions, the authors should be careful not to show in tables or figures symmetric error bars that would yield results that are out of range (e.g., negative error rates).
        \item If error bars are reported in tables or plots, the authors should explain in the text how they were calculated and reference the corresponding figures or tables in the text.
    \end{itemize}

\item {\bf Experiments compute resources}
    \item[] Question: For each experiment, does the paper provide sufficient information on the computer resources (type of compute workers, memory, time of execution) needed to reproduce the experiments?
    \item[] Answer: \answerYes{}
    \item[] Justification: Appendix~\ref{app:experiments_full} specifies the GPU (NVIDIA 2080Ti) and CPU types  (Intel Xeon E5-2680 v4), memory (8\,GB), and per-experiment training times (12--25 minutes each). Total compute: $<$2 CPU-hours.
    \item[] Guidelines:
    \begin{itemize}
        \item The answer \answerNA{} means that the paper does not include experiments.
        \item The paper should indicate the type of compute workers CPU or GPU, internal cluster, or cloud provider, including relevant memory and storage.
        \item The paper should provide the amount of compute required for each of the individual experimental runs as well as estimate the total compute.
        \item The paper should disclose whether the full research project required more compute than the experiments reported in the paper (e.g., preliminary or failed experiments that didn't make it into the paper).
    \end{itemize}

\item {\bf Code of ethics}
    \item[] Question: Does the research conducted in the paper conform, in every respect, with the NeurIPS Code of Ethics \url{https://neurips.cc/public/EthicsGuidelines}?
    \item[] Answer: \answerYes{}
    \item[] Justification: This is foundational research on trajectory design with no human subjects, no scraped data, and no foreseeable negative societal impacts beyond standard dual-use considerations for autonomous systems.
    \item[] Guidelines:
    \begin{itemize}
        \item The answer \answerNA{} means that the authors have not reviewed the NeurIPS Code of Ethics.
        \item If the authors answer \answerNo, they should explain the special circumstances that require a deviation from the Code of Ethics.
        \item The authors should make sure to preserve anonymity (e.g., if there is a special consideration due to laws or regulations in their jurisdiction).
    \end{itemize}

\item {\bf Broader impacts}
    \item[] Question: Does the paper discuss both potential positive societal impacts and negative societal impacts of the work performed?
    \item[] Answer: \answerYes{}
    \item[] Justification: Appendix~\ref{app:applications} discusses positive applications (search-and-rescue, environmental and agricultural monitoring, wildfire mapping, wireless-infrastructure coverage equity). We acknowledge dual-use considerations: the same trajectory-design capabilities could in principle be applied to surveillance or persistent monitoring without consent. The framework operates from aggregate density specifications (not individual tracking), which inherently limits its surveillance utility; the privacy-preserving property (no real-time location feedback) is a mitigating design feature; and we do not release pretrained models targeting specific real-world geographies. Standard deployment safeguards (geofencing, operator authorization, airspace-authority compliance) are orthogonal to our contribution and unaffected by it.
    \item[] Guidelines:
    \begin{itemize}
        \item The answer \answerNA{} means that there is no societal impact of the work performed.
        \item If the authors answer \answerNA{} or \answerNo, they should explain why their work has no societal impact or why the paper does not address societal impact.
        \item Examples of negative societal impacts include potential malicious or unintended uses (e.g., disinformation, generating fake profiles, surveillance), fairness considerations (e.g., deployment of technologies that could make decisions that unfairly impact specific groups), privacy considerations, and security considerations.
        \item The conference expects that many papers will be foundational research and not tied to particular applications, let alone deployments. However, if there is a direct path to any negative applications, the authors should point it out. For example, it is legitimate to point out that an improvement in the quality of generative models could be used to generate Deepfakes for disinformation. On the other hand, it is not needed to point out that a generic algorithm for optimizing neural networks could enable people to train models that generate Deepfakes faster.
        \item The authors should consider possible harms that could arise when the technology is being used as intended and functioning correctly, harms that could arise when the technology is being used as intended but gives incorrect results, and harms following from (intentional or unintentional) misuse of the technology.
        \item If there are negative societal impacts, the authors could also discuss possible mitigation strategies (e.g., gated release of models, providing defenses in addition to attacks, mechanisms for monitoring misuse, mechanisms to monitor how a system learns from feedback over time, improving the efficiency and accessibility of ML).
    \end{itemize}

\item {\bf Safeguards}
    \item[] Question: Does the paper describe safeguards that have been put in place for responsible release of data or models that have a high risk for misuse (e.g., pre-trained language models, image generators, or scraped datasets)?
    \item[] Answer: \answerNA{}
    \item[] Justification: The paper does not release pre-trained models or datasets with misuse risk. The models are small ($\sim$199K parameters) 2D spatial maps for trajectory design.
    \item[] Guidelines:
    \begin{itemize}
        \item The answer \answerNA{} means that the paper poses no such risks.
        \item Released models that have a high risk for misuse or dual-use should be released with necessary safeguards to allow for controlled use of the model, for example by requiring that users adhere to usage guidelines or restrictions to access the model or implementing safety filters.
        \item Datasets that have been scraped from the Internet could pose safety risks. The authors should describe how they avoided releasing unsafe images.
        \item We recognize that providing effective safeguards is challenging, and many papers do not require this, but we encourage authors to take this into account and make a best faith effort.
    \end{itemize}

\item {\bf Licenses for existing assets}
    \item[] Question: Are the creators or original owners of assets (e.g., code, data, models), used in the paper, properly credited and are the license and terms of use explicitly mentioned and properly respected?
    \item[] Answer: \answerYes{}
    \item[] Justification: All used external datasets or pre-existing code assets are cited carefully. 
    \item[] Guidelines:
    \begin{itemize}
        \item The answer \answerNA{} means that the paper does not use existing assets.
        \item The authors should cite the original paper that produced the code package or dataset.
        \item The authors should state which version of the asset is used and, if possible, include a URL.
        \item The name of the license (e.g., CC-BY 4.0) should be included for each asset.
        \item For scraped data from a particular source (e.g., website), the copyright and terms of service of that source should be provided.
        \item If assets are released, the license, copyright information, and terms of use in the package should be provided. For popular datasets, \url{paperswithcode.com/datasets} has curated licenses for some datasets. Their licensing guide can help determine the license of a dataset.
        \item For existing datasets that are re-packaged, both the original license and the license of the derived asset (if it has changed) should be provided.
        \item If this information is not available online, the authors are encouraged to reach out to the asset's creators.
    \end{itemize}

\item {\bf New assets}
    \item[] Question: Are new assets introduced in the paper well documented and is the documentation provided alongside the assets?
    \item[] Answer: \answerNA{}
    \item[] Justification: No new assets (datasets, models) are released with this submission. Code will be released upon acceptance.
    \item[] Guidelines:
    \begin{itemize}
        \item The answer \answerNA{} means that the paper does not release new assets.
        \item Researchers should communicate the details of the dataset\slash code\slash model as part of their submissions via structured templates. This includes details about training, license, limitations, etc.
        \item The paper should discuss whether and how consent was obtained from people whose asset is used.
        \item At submission time, remember to anonymize your assets (if applicable). You can either create an anonymized URL or include an anonymized zip file.
    \end{itemize}

\item {\bf Crowdsourcing and research with human subjects}
    \item[] Question: For crowdsourcing experiments and research with human subjects, does the paper include the full text of instructions given to participants and screenshots, if applicable, as well as details about compensation (if any)?
    \item[] Answer: \answerNA{}
    \item[] Justification: This paper does not involve crowdsourcing or research with human subjects.
    \item[] Guidelines:
    \begin{itemize}
        \item The answer \answerNA{} means that the paper does not involve crowdsourcing nor research with human subjects.
        \item Including this information in the supplemental material is fine, but if the main contribution of the paper involves human subjects, then as much detail as possible should be included in the main paper.
        \item According to the NeurIPS Code of Ethics, workers involved in data collection, curation, or other labor should be paid at least the minimum wage in the country of the data collector.
    \end{itemize}

\item {\bf Institutional review board (IRB) approvals or equivalent for research with human subjects}
    \item[] Question: Does the paper describe potential risks incurred by study participants, whether such risks were disclosed to the subjects, and whether Institutional Review Board (IRB) approvals (or an equivalent approval/review based on the requirements of your country or institution) were obtained?
    \item[] Answer: \answerNA{}
    \item[] Justification: This paper does not involve research with human subjects.
    \item[] Guidelines:
    \begin{itemize}
        \item The answer \answerNA{} means that the paper does not involve crowdsourcing nor research with human subjects.
        \item Depending on the country in which research is conducted, IRB approval (or equivalent) may be required for any human subjects research. If you obtained IRB approval, you should clearly state this in the paper.
        \item We recognize that the procedures for this may vary significantly between institutions and locations, and we expect authors to adhere to the NeurIPS Code of Ethics and the guidelines for their institution.
        \item For initial submissions, do not include any information that would break anonymity (if applicable), such as the institution conducting the review.
    \end{itemize}

\item {\bf Declaration of LLM usage}
    \item[] Question: Does the paper describe the usage of LLMs if it is an important, original, or non-standard component of the core methods in this research? Note that if the LLM is used only for writing, editing, or formatting purposes and does \emph{not} impact the core methodology, scientific rigor, or originality of the research, declaration is not required.
    \item[] Answer: \answerNA{}
    \item[] Justification: The core method development in this research does not involve LLMs as any important, original, or non-standard components. 

\end{enumerate}

%% file: main.bbl
\begin{thebibliography}{20}

\bibitem[Abraham and Murphey(2018)]{abraham2018decentralized}
Abraham, I. and Murphey, T.~D.
\newblock Decentralized ergodic control: distribution-driven sensing and exploration for multiagent systems.
\newblock \emph{IEEE Robotics and Automation Letters}, 3(4):2987--2994, 2018.

\bibitem[Caffarelli(1992)]{caffarelli1992regularity}
Caffarelli, L.~A.
\newblock The regularity of mappings with a convex potential.
\newblock \emph{Journal of the American Mathematical Society}, 5(1):99--104, 1992.

\bibitem[Gangbo and \'{S}wi\c{e}ch(1998)]{gangbo1998optimal}
Gangbo, W. and \'{S}wi\c{e}ch, A.
\newblock Optimal maps for the multidimensional {M}onge--{K}antorovich problem.
\newblock \emph{Communications on Pure and Applied Mathematics}, 51(1):23--45, 1998.

\bibitem[Enayati et al.(2019)]{enayati2019moving}
Enayati, S., Saeedi, H., Pishro-Nik, H., and Yanikomeroglu, H.
\newblock Moving aerial base station networks: A stochastic geometry analysis and design perspective.
\newblock \emph{IEEE Trans.\ Wireless Communications}, 18(6):2977--2988, 2019.

\bibitem[Galceran and Carreras(2013)]{galceran2013survey}
Galceran, E. and Carreras, M.
\newblock A survey on coverage path planning for robotics.
\newblock \emph{Robotics and Autonomous Systems}, 61(12):1258--1276, 2013.

\bibitem[Lerch et al.(2023)]{lerch2023safety}
Lerch, C., Dong, D., and Abraham, I.
\newblock Safety-critical ergodic exploration in cluttered environments via control barrier functions.
\newblock In \emph{IEEE International Conference on Robotics and Automation (ICRA)}, pp. 10205-10211, 2023.

\bibitem[Dong et al.(2023)]{dong2023timeoptimal}
Dong, D., Berger, H., and Abraham, I.
\newblock Time optimal ergodic search.
\newblock In \emph{Robotics: Science and Systems (RSS)}, 2023.

\bibitem[Lipman et al.(2023)]{lipman2023flow}
Lipman, Y., Chen, R.~T.~Q., Ben-Hamu, H., Nickel, M., and Le, M.
\newblock Flow matching for generative modeling.
\newblock \emph{International Conference on Learning Representations}, 2023.

\bibitem[Malekzadeh et al.(2026)]{malekzadeh2025nonuniform}
Malekzadeh, M., Ghasemi, A., and Pishro-Nik, H.
\newblock Robust {UAV} trajectory design for non-uniform coverage.
\newblock \emph{IEEE Communications Letters}, 30:188--192, 2026. DOI: \texttt{10.1109/LCOMM.2025.3629065}.

\bibitem[Mathew and Mezi\'{c}(2011)]{mathew2011metrics}
Mathew, G. and Mezi\'{c}, I.
\newblock Metrics for ergodicity and design of ergodic dynamics for multi-agent systems.
\newblock \emph{Physica D}, 240(4--5):432--442, 2011.

\bibitem[Sun et al.(2025)]{sun2025flow}
Sun, M.~M., Pinosky, A., and Murphey, T.
\newblock Flow matching ergodic coverage.
\newblock In \emph{Robotics: Science and Systems (RSS)}, 2025. arXiv:2504.17872.

\bibitem[Sun et al.(2025b)]{sun2025flowcode}
Sun, M.~M., Pinosky, A., and Murphey, T.
\newblock Flow matching ergodic coverage --- official tutorials.
\newblock GitHub repository, \url{https://github.com/MurpheyLab/lqr-flow-matching/tree/main/tutorials}, 2025.

\bibitem[Theile et al.(2020)]{theile2020uav}
Theile, M., Bayerlein, H., Nai, R., Gesbert, D., and Caccamo, M.
\newblock UAV coverage path planning under varying power constraints using deep reinforcement learning.
\newblock In \emph{2020 IEEE/RSJ International Conference on Intelligent Robots and Systems (IROS)}, pp. 1444--1449, 2020. doi:10.1109/IROS45743.2020.9340934.

\bibitem[Tong et al.(2024)]{tong2024improving}
Tong, A., Malkin, N., Huguet, G., Zhang, Y., Rector-Brooks, J., Fatras, K., Wolf, G., and Bengio, Y.
\newblock Improving and generalizing flow-based generative models with minibatch optimal transport.
\newblock \emph{Transactions on Machine Learning Research}, 2024.

\bibitem[Zeng et al.(2019)]{zeng2019accessing}
Zeng, Y., Wu, Q., and Zhang, R.
\newblock Accessing from the sky: A tutorial on {UAV} communications for 5{G} and beyond.
\newblock \emph{Proceedings of the IEEE}, 107(12):2327--2375, 2019.

\bibitem[Mozaffari et al.(2019)]{mozaffari2019tutorial}
Mozaffari, M., Saad, W., Bennis, M., Nam, Y.-H., and Debbah, M.
\newblock A tutorial on {UAV}s for wireless networks: Applications, challenges, and open problems.
\newblock \emph{IEEE Communications Surveys \& Tutorials}, 21(3):2334--2360, 2019.

\bibitem[Ayvali et al.(2017)]{ayvali2017ergodic}
Ayvali, E., Salman, H., and Choset, H.
\newblock Ergodic coverage in constrained environments using stochastic trajectory optimization.
\newblock In \emph{2017 IEEE/RSJ International Conference on Intelligent Robots and Systems (IROS)}, pp. 5204--5210, 2017.
\newblock doi: 10.1109/IROS.2017.8206410.

\bibitem[Teschl(2012)]{teschl2012ordinary}
Teschl, G.
\newblock \emph{Ordinary Differential Equations and Dynamical Systems}.
\newblock Vol.~140. American Mathematical Society, 2012.

\bibitem[Roberts and Tweedie(1996)]{roberts1996exponential}
Roberts, G.~O. and Tweedie, R.~L.
\newblock Exponential convergence of {L}angevin distributions and their discrete approximations.
\newblock \emph{Bernoulli}, 2(4):341--363, 1996.

\bibitem[Neal(2011)]{neal2011mcmc}
Neal, R. M.
\newblock MCMC using Hamiltonian dynamics.
\newblock In \emph{Handbook of Markov Chain Monte Carlo}, pp. 47--95, Chapman and Hall/CRC, 2011.

\bibitem[Ho et al.(2020)]{ho2020denoising}
Ho, J., Jain, A., and Abbeel, P.
\newblock Denoising diffusion probabilistic models.
\newblock \emph{Advances in Neural Information Processing Systems}, 33:6840--6851, 2020.

\bibitem[Song et al.(2021)]{song2021scorebased}
Song, Y., Sohl-Dickstein, J., Kingma, D.~P., Kumar, A., Ermon, S., and Poole, B.
\newblock Score-based generative modeling through stochastic differential equations.
\newblock \emph{International Conference on Learning Representations}, 2021.

\bibitem[Sohl-Dickstein et al.(2015)]{sohldickstein2015deep}
Sohl-Dickstein, J., Weiss, E., Maheswaranathan, N., and Ganguli, S.
\newblock Deep unsupervised learning using nonequilibrium thermodynamics.
\newblock In \emph{Proceedings of the 32nd International Conference on Machine Learning (ICML)}, Vol. 37,  \emph{Proceedings of Machine Learning Research}, pp. 2256--2265, Lille, France, 2015.

\bibitem[Chen et al.(2018)]{chen2018neural}
Chen, R. T. Q., Rubanova, Y., Bettencourt, J., and Duvenaud, D. K.
\newblock Neural ordinary differential equations.
\newblock \emph{Advances in Neural Information Processing Systems}, 2018.

\bibitem[Albergo et al.(2025)]{albergo2025stochastic}
Albergo, M., Boffi, N. M., and Vanden-Eijnden, E.
\newblock Stochastic interpolants: A unifying framework for flows and diffusions.
\newblock \emph{Journal of Machine Learning Research}, 26:1--80, 2025.

\bibitem[Chi et al.(2025)]{chi2025diffusion}
Chi, C., Xu, Z., Feng, S., Cousineau, E., Du, Y., Burchfiel, B., Tedrake, R., and Song, S.
\newblock Diffusion policy: Visuomotor policy learning via action diffusion.
\newblock \emph{The International Journal of Robotics Research}, 44(10--11):1684--1704, 2025.

\bibitem[Feydy et al.(2019)]{feydy2019interpolating}
Feydy, J., S{\'e}journ{\'e}, T., Vialard, F.-X., Amari, S.-I., Trouv{\'e}, A., and Peyr{\'e}, G.
\newblock Interpolating between optimal transport and mmd using sinkhorn divergences.
\newblock In \emph{The 22nd International Conference on Artificial Intelligence and Statistics}, pp. 2681--2690, PMLR, 2019.

\bibitem[Miyato et al.(2018)]{miyato2018spectral}
Miyato, T., Kataoka, T., Koyama, M., and Yoshida, Y.
\newblock Spectral normalization for generative adversarial networks.
\newblock In \emph{ICLR}, 2018.

\bibitem[Gao et al.(2024)]{gao2024convergence}
Gao, Y., Huang, J., Jiao, Y., and Zheng, S.
\newblock Convergence of continuous normalizing flows for learning probability distributions.
\newblock \emph{arXiv:2404.00551}, 2024.

\bibitem[Lee et al.(2024)]{lee2024stein}
Lee, D., Lerch, C., Ramos, F., and Abraham, I.
\newblock Stein variational ergodic search.
\newblock In \emph{Robotics: Science and Systems (RSS)}, 2024.

\bibitem[Miller and Murphey(2013)]{miller2013trajectory}
Miller, L. M. and Murphey, T. D.
\newblock Trajectory optimization for continuous ergodic exploration.
\newblock In \emph{2013 American Control Conference}, pp. 4196--4201, 2013.
\newblock doi: 10.1109/ACC.2013.6580484.

\bibitem[Liu et al.(2023)]{liu2023flow}
Liu, X., Gong, C., and Liu, Q.
\newblock Flow straight and fast: Learning to generate and transfer data with rectified flow.
\newblock In \emph{ICLR}, 2023.

\bibitem[Virmaux and Scaman(2018)]{virmaux2018lipschitz}
Virmaux, A. and Scaman, K.
\newblock Lipschitz regularity of deep neural networks: analysis and efficient estimation.
\newblock \emph{Advances in Neural Information Processing Systems}, 31, 2018.

\bibitem[Papamakarios et al.(2021)]{papamakarios2021normalizing}
Papamakarios, G., Nalisnick, E., Rezende, D.~J., Mohamed, S., and Lakshminarayanan, B.
\newblock Normalizing flows for probabilistic modeling and inference.
\newblock \emph{Journal of Machine Learning Research}, 22(57):1--64, 2021.

\bibitem[Villani(2009)]{villani2009optimal}
Villani, C.
\newblock \emph{Optimal Transport: Old and New}.
\newblock Grundlehren der mathematischen Wissenschaften, vol.~338. Springer, 2009.

\bibitem[Peyr{\'e} and Cuturi(2019)]{peyre2019computational}
Peyr{\'e}, G. and Cuturi, M.
\newblock \emph{Computational optimal transport: With applications to data science}.
\newblock Now Foundations and Trends, 2019.

\bibitem[Cabreira et al.(2019)]{cabreira2019survey}
Cabreira, T.~M., Brisolara, L.~B., and Ferreira Jr., P.~R.
\newblock Survey on coverage path planning with unmanned aerial vehicles.
\newblock \emph{Drones}, 3(1):4, 2019.

\bibitem[Wu et al.(2018)]{wu2018joint}
Wu, Q., Zeng, Y., and Zhang, R.
\newblock Joint trajectory and communication design for multi-{UAV} enabled wireless networks.
\newblock \emph{IEEE Transactions on Wireless Communications}, 17(3):2109--2121, 2018.

\bibitem[Benamou and Brenier(2000)]{benamou2000computational}
Benamou, J.-D. and Brenier, Y.
\newblock A computational fluid mechanics solution to the {M}onge--{K}antorovich mass transfer problem.
\newblock \emph{Numerische Mathematik}, 84(3):375--393, 2000.

\bibitem[Bayerlein et al.(2018)]{bayerlein2018trajectory}
Bayerlein, H., De Kerret, P., and Gesbert, D.
\newblock Trajectory optimization for autonomous flying base station via reinforcement learning.
\newblock In \emph{2018 IEEE 19th International Workshop on Signal Processing Advances in Wireless Communications (SPAWC)}, pp. 1--5, IEEE, 2018.

\bibitem[Dudley(2002)]{dudley2002real}
Dudley, R.~M.
\newblock \emph{Real analysis and probability}.
\newblock Cambridge Studies in Advanced Mathematics, vol.~74. Cambridge University Press, 2002.

\bibitem[Evans(2022)]{evans2010pde}
Evans, L. C.
\newblock \emph{Partial differential equations}.
\newblock Vol. 19, American Mathematical Society, 2022.

\bibitem[Henrot(2006)]{henrot2006extremum}
Henrot, A.
\newblock \emph{Extremum problems for eigenvalues of elliptic operators}.
\newblock Springer, 2006.

\bibitem[Brenier(1991)]{brenier1991polar}
Brenier, Y.
\newblock Polar factorization and monotone rearrangement of vector-valued functions.
\newblock \emph{Communications on Pure and Applied Mathematics}, 44(4):375--417, 1991.

\bibitem[Cuturi(2013)]{cuturi2013sinkhorn}
Cuturi, M.
\newblock Sinkhorn distances: Lightspeed computation of optimal transport.
\newblock \emph{Advances in Neural Information Processing Systems}, 26:2292--2300, 2013.

\bibitem[Miller et al.(2016)]{miller2016ergodic}
Miller, L.~M., Silverman, Y., MacIver, M.~A., and Murphey, T.~D.
\newblock Ergodic exploration of distributed information.
\newblock \emph{IEEE Transactions on Robotics}, 32(1):36--52, 2016.

\bibitem[Mavrommati et al.(2018)]{mavrommati2018realtime}
Mavrommati, A., Tzorakoleftherakis, E., Abraham, I., and Murphey, T.~D.
\newblock Real-time area coverage and target localization using receding-horizon ergodic exploration.
\newblock \emph{IEEE Transactions on Robotics}, 34(1):62--80, 2018.

\bibitem[Seewald et al.(2024)]{seewald2024energy}
Seewald, A., Lerch, C.~J., Chanc{\'a}n, M., Dollar, A.~M., and Abraham, I.
\newblock Energy-aware ergodic search: Continuous exploration for multi-agent systems with battery constraints.
\newblock In \emph{IEEE International Conference on Robotics and Automation (ICRA)}, pp. 7048--7054, 2024.

\bibitem[Sun et al.(2025c)]{sun2025kernel}
Sun, M.~M., Gaggar, A., Trautman, P., and Murphey, T.~D.
\newblock Fast ergodic search with kernel functions.
\newblock \emph{IEEE Transactions on Robotics}, 41:1841--1860, 2025.

\bibitem[Hughes et al.(2025)]{hughes2025mmd}
Hughes, C., Warren, H., Lee, D., Ramos, F., and Abraham, I.
\newblock Ergodic trajectory optimization on generalized domains using maximum mean discrepancy.
\newblock In \emph{IEEE International Conference on Robotics and Automation (ICRA)}, 2025.

\bibitem[Hughes and Abraham(2026)]{hughes2026infinite}
Hughes, C. and Abraham, I.
\newblock Infinite-horizon ergodic control via kernel mean embeddings.
\newblock \emph{arXiv:2604.01023}, 2026.

\bibitem[Barlacchi et al.(2015)]{barlacchi2015multi}
Barlacchi, G., De Nadai, M., Larcher, R., Casella, A., Chitic, C., Torrisi, G., Antonelli, F., Vespignani, A., Pentland, A., and Lepri, B.
\newblock A multi-source dataset of urban life in the city of {M}ilan and the {P}rovince of {T}rentino.
\newblock \emph{Scientific Data}, 2:150055, 2015.

\bibitem[Haarnoja et al.(2018)]{haarnoja2018soft}
Haarnoja, T., Zhou, A., Abbeel, P., and Levine, S.
\newblock Soft actor-critic: Off-policy maximum entropy deep reinforcement learning with a stochastic actor.
\newblock In \emph{International Conference on Machine Learning (ICML)}, pp.\ 1861--1870, 2018.

\bibitem[Zeng et al.(2019b)]{zeng2019energy}
Zeng, Y., Xu, J., and Zhang, R.
\newblock Energy minimization for wireless communication with rotary-wing {UAV}.
\newblock \emph{IEEE Trans.\ Wireless Communications}, 18(4):2329--2345, 2019.

\end{thebibliography}
